\documentclass{article} % For LaTeX2e
\usepackage{iclr2027_conference,times}

\usepackage{amsmath,amsfonts,bm}

\def\eqref#1{equation~\ref{#1}}
\def\Eqref#1{Equation~\ref{#1}}
\def\1{\bm{1}}

\DeclareMathAlphabet{\mathsfit}{\encodingdefault}{\sfdefault}{m}{sl}
\SetMathAlphabet{\mathsfit}{bold}{\encodingdefault}{\sfdefault}{bx}{n}

\usepackage{hyperref}       % hyperlinks
\usepackage{url}            % simple URL typesetting
\usepackage{booktabs}       % professional-quality tables
\usepackage{multirow}       % multi-row table cells
\usepackage{amsfonts}       % blackboard math symbols
\usepackage{nicefrac}       % compact symbols for 1/2, etc.
\usepackage{microtype}      % microtypography
\usepackage{xcolor}         % colors

\usepackage{algorithm}
\usepackage{algpseudocode}

\usepackage[pdftex]{graphicx}
\usepackage{amsmath}
\usepackage{amsthm}

\newtheorem{proposition}{Proposition}
\newtheorem{lemma}{Lemma}
\newtheorem*{propone*}{Proposition 1}
\newtheorem*{proptwo*}{Proposition 2}
\newtheorem*{propthree*}{Proposition 3}

\newtheorem{corollary}{Corollary}[proposition]
\newtheorem*{cor21*}{Corollary 2.1}
\newtheorem{assumption}{Assumption}
\theoremstyle{definition}
\newtheorem{definition}{Definition}
\theoremstyle{remark}
\newtheorem*{remark}{Remark}

\title{Walking the Score Manifold: Continuous-time Generative Dynamics on Learned Data Manifolds}

\author{%
  Jan Tauberschmidt$^{1,2}$ \hspace{3mm}
  Brian B. Moser$^1$ \hspace{3mm}
  Stanislav Frolov$^1$\hspace{3mm}
  Andreas Dengel$^{1,2}$ \\
  \vspace{1pt}
  \textbf{Andrew B. Duncan}$^3$ \hspace{3mm}
  \textbf{Sebastian J. Vollmer}$^{1,2}$ \vspace{4pt}\\
  $^1$DFKI Kaiserslautern\quad
  $^2$RPTU Kaiserslautern\quad
  $^3$Imperial College London\\
}

\iclrfinalcopy % Uncomment for camera-ready version, but NOT for submission.
\begin{document}

\maketitle

\begin{abstract}
Generative modeling of time-dependent data is typically formulated on a discrete temporal grid, restricting supervision to the observed timestamps in the training data. We instead frame generation as continuous-time evolution on a learned data manifold. To this end, we leverage pretrained score-based models as geometric priors and learn a vector field that evolves data along score-induced interpolation paths. Because these dynamics follow transitions that respect the geometry learned by the score model, they support generation at arbitrary timestamps and temporal super-resolution beyond the discretization of the training data. Moreover, this geometric formulation allows us to train the vector field simulation-free through a regression objective.
To improve long-horizon rollout robustness, we introduce an objective that promotes path-relative transverse exponential stability. While motivated by stability theory, it admits a practical interpretation as denoising score matching transverse to the interpolation path. Further, we extend the framework to a probabilistic setting that models a distribution over plausible future trajectories.
We demonstrate the method on natural video and scientific dynamical data, including temporal super-resolution, PDE-based spatiotemporal fields, and molecular dynamics. Our results show that score-based priors provide a strong foundation for learning stochastic continuous-time generative dynamics.
\end{abstract}

\section{Introduction}
Generative models of time-dependent data are commonly formulated on a fixed temporal grid \citep{ho2022videodiffusion,singer2022makeavideo,harvey2022flexible}. This discretization is natural for training, since observations are typically available only at sampled timestamps, but it also ties the learned dynamics to the resolution of the data.
Continuous-time models, such as Neural ODEs \citep{chen2018neural}, aim to remove this dependence by representing evolution through a vector field that can be queried at arbitrary times \citep[see also][]{rubanova2019latent,kidger2020neuralcde,park2021vidode,gordon2021latentnde}.
However, even in these models, supervision still enters only at the observed timestamps of the training data. This raises a central question for continuous-time generative dynamics: \emph{how should the state evolve between observed time points?} A continuous-time parameterization alone does not answer this. Without additional structure, intermediate states are constrained only indirectly by the data, and simple Euclidean paths---including the linear interpolants used in many transport-based objectives---can pass through implausible off-manifold regions \citep{lipman2023flowmatching,liu2023rectifiedflow,liu2025videobiflow}.

\looseness=-1 We argue that intermediate-time evolution should instead be guided by a learned prior.
Recent work has shown that score-based models encode nontrivial geometric information about data manifolds, and that this geometry can be used to define manifold-aware, geodesic-like interpolations rather than simple Euclidean straight lines \citep{stanczuk2024intrinsic,diepeveen2025scorebased,park2023understanding,debortoli2022riemannian,saito2025tangential,kapusniak2024metricflowmatching}.
This suggests a different perspective on generative modeling: rather than treating temporal generation as repeated discrete synthesis, we frame it as \emph{walking the score manifold in continuous time}.
The score-based prior provides a learned local geometry, while the learned dynamics determine how to evolve along this geometry over time.

\looseness=-1 In this work, we propose a stochastic continuous-time generative dynamics framework based on this principle.
Conditioned on an observed history, our model learns a vector field in data space.
Its tangential component evolves states along score-induced interpolation paths, while its transverse component performs denoising score matching, encouraging the dynamics to remain close to states that are plausible under the prior.
This yields continuous-time evolution that respects the learned geometry, supports generation at arbitrary timestamps and temporal super-resolution beyond the discretization of the training data, and enables simulation-free learning through local regression targets.
The probabilistic formulation further allows the model to represent distributions over plausible future trajectories.
We demonstrate the resulting framework on natural video and scientific dynamical data, including temporal super-resolution, spatiotemporal fields governed by partial differential equations (PDEs), and molecular dynamics.

Our main contributions are:
1) We formulate time-dependent generative modeling as latent-conditioned continuous-time evolution on a score-induced manifold, enabling arbitrary-time generation from discrete observations.
2) We derive a simulation-free velocity-matching objective by using score-induced interpolation paths as local evolution targets, with a CVAE likelihood interpretation for stochastic futures.
3) We introduce a path-relative transverse robustness objective that corresponds to exponential return toward the interpolation path and admits a denoising-score-matching interpretation.

\section{Score-based Models as Interpolation Priors}
Score-based generative models are trained to approximate the \emph{score} function, i.e. the gradient of the log-density of the data distribution.
Beyond their generative capabilities, recent work has shown that the score function encodes nontrivial geometric information about the data manifold. This includes its intrinsic dimensionality as well as its local tangent or normal structure and associated Riemannian geometry
\citep{stanczuk2024intrinsic,diepeveen2025scorebased,park2023understanding,debortoli2022riemannian,chen2024riemannianflowmatching,saito2025tangential}.
A more detailed review of relevant background is provided in Appendix~\ref{app:score_manifold_background}.

\looseness=-1 In particular, score-based models can be used to define a Riemannian metric \(G(x)\).
Since the score geometry is not generally well behaved at the clean-data limit, we evaluate the metric at a small positive noise level rather than directly on the data space. For notational simplicity, we suppress this distinction in the main text and defer the precise construction to Appendix~\ref{app:score_manifold_background}.
As one concrete option, we follow \citet{saito2025tangential} and define
\(G(x)=J_x^T J_x + \lambda_G I\) through the Jacobian of the score \(J_x\).
This metric will be used to interpolate between two data points \(x_0\) and \(x_1\) through a geodesic construction, which means to find a path \(\gamma^*\) that minimizes the metric-induced path length
\begin{equation}
\label{eq:geodesic_min}
    \gamma^* = \arg\min_{\gamma} \int_0^1 \|\dot{\gamma}_t\|_{G(\gamma_t)} \, dt,
\end{equation}
subject to \(\gamma_0 = x_0\) and \(\gamma_1 = x_1\).
Solving \eqref{eq:geodesic_min} online during training would be expensive and may yield inconsistent targets~\citep{chen2024riemannianflowmatching,saito2025tangential}. We therefore use an amortized interpolation map that can be evaluated efficiently and reused throughout training.
To this end, \citet{kapusniak2024metricflowmatching} propose to learn a neural \emph{bridging correction} term \(\varphi\) that augments linear interpolation:
\begin{equation}
    \gamma_t(x_0,x_1) = (1-t) x_0 + t x_1 + t(1-t)\,\varphi(x_0, x_1, t).
\end{equation}
The correction term is trained to approximate the resulting score-induced geodesic \(\gamma^*\) under the metric \(G\) by minimizing metric energy. Details of the training can be found in Appendix~\ref{app:interp_training}.
Once trained, \(\varphi\) allows for fast and differentiable interpolation and effectively yields an interpolation prior that respects the geometry encoded by the score model.

\looseness=-1 Beyond the geometry, the energy objective also fixes \emph{how} the path is traversed. With \(\mathrm{Len}_G(\gamma)\) the metric length minimized in \eqref{eq:geodesic_min} and \(\beta_\gamma:[0,1]\to[0,1]\), \(\beta_\gamma(t):=\mathrm{Len}_G(\gamma)^{-1}\int_0^t\|\dot\gamma_u\|_{G(\gamma_u)}\,du\), the fraction of metric arc length covered by time \(t\), the energy factorizes as
\[
\frac12\int_0^1\|\dot\gamma_t\|_{G(\gamma_t)}^2\,dt
=
\frac12\,\mathrm{Len}_G(\gamma)^2
\int_0^1\dot\beta_\gamma(t)^2\,dt
\ge
\frac12\,\mathrm{Len}_G(\gamma)^2,
\]
with equality exactly when \(\gamma\) has uniform metric speed. A formal statement and proof are given in Appendix~\ref{app:uniform_speed}.

%%%%%%%%%%%%%%%%%%%%%%%%%%%%%%%%%%%%%%%%%%%%%%%%%%%%%%%%%%%%%%%%%%%%%%%%%%%%%%%%%%%%%%%%%%%%%%%%%%%%%%%%

\section{Method: Generative Dynamics along Interpolation Paths}

\looseness=-1 We propose to frame the problem of continuous-time generation as navigating the manifold of a trained score-based model. To this end, we learn an autonomous vector field \(v_\theta(x)\) such that solving the initial value problem
\(\dot x_t = v_\theta(x_t)\) with \(x_{t_0}=x_0\)
generates trajectories that follow the interpolation paths induced by the score model.\\
Given a discrete training video
\(
x_{0:K} := (x_0,x_1,\dots,x_K) \in (\mathbb R^d)^{K+1}
\)
with time steps
\(
{t_{0:K}=(t_0,t_1,\dots,t_K)},
\)
we obtain an interpolating path \(\gamma:[0,T]\to\mathbb R^d\) induced by the score model and interpolator. Each interpolated point is indexed by a unique continuous time \(t\in[0,T]\), and we write
\(
x_t := \gamma(t).
\)
The interpolating path provides local evolution targets at multiple time scales. We therefore define
\[
\Delta_h x_t :=
\begin{cases}
\dfrac{x_{t+h}-x_t}{h}, & h>0,\\[1ex]
\dot x_t, & h=0.
\end{cases}
\]
and refer to \(\Delta_h x_t\) as the local evolution target. We condition the velocity model on the desired step size and learn \(v_\theta(x,h)\) to match these local targets.
Training is simulation-free, reducing continuous-time learning to local regression along interpolation paths without backpropagating through an ODE solver. At inference time, the \(h=0\) slice defines the continuous-time vector field, while \(h>0\) provides finite-step updates for fast rollouts.
For fitting a single sample trajectory \(x_{0:K}\), the natural regression objective then is
\[
\mathcal L(\theta; x_{0:K}) = \mathbb E_{t,h}\bigl[\|\Delta_h x_t - v_\theta(x_t,h)\|^2\bigr].
\]
Here, the expectation uses suitably chosen distributions over sampled time steps and prediction step sizes.
All notation is written in data space. However, as is standard, our method can be applied in the latent space of a pre-trained autoencoder without further modification, as we demonstrate on natural video in Section~\ref{sec:temporal_sr}.
Figure \ref{fig:fig1} summarizes the underlying idea of our model,
showing how we leverage the score field of a pre-trained model to obtain a generative vector field.
We now extend this idea presented for the deterministic single-trajectory case above into a full generative model.

\begin{figure}
  \centering
  \begin{minipage}[t]{0.4606\textwidth}\centering
    \includegraphics[width=\linewidth]{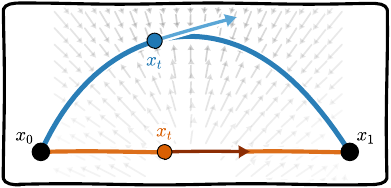}
  \end{minipage}
  \hfill
  \begin{minipage}[t]{0.4606\textwidth}\centering
    \includegraphics[width=\linewidth]{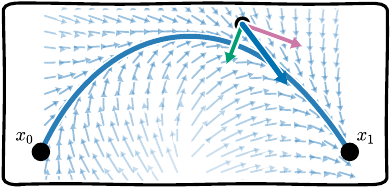}
  \end{minipage}
  \caption{Overview. Left: a pretrained score model (grey arrows) induces manifold-aware interpolation paths (blue curve). Right: the generative vector field follows these paths tangentially while applying a transverse correction (see Sec.~\ref{sec:method_stability}).}
  \label{fig:fig1}
\end{figure}

\subsection{Statistical framework}

We model time-dependent data as a continuous-time stochastic process \({X=(X_t)_{t\in\mathcal T}}\) with \({\mathcal T=[0,T]}\)
assumed to be almost surely differentiable. We denote random variables using capital letters and realizations or integration variables in lowercase. 
Directly learning a deterministic vector field is ill-posed when the same history \(x_{\le t}\) admits multiple plausible future continuations. To capture this ambiguity, we introduce a latent variable \(z\) and model the conditional distribution of local evolution through
\begin{equation}
p(\Delta_h x_t \mid x_{\le t})
=
\int p_\theta(\Delta_h x_t \mid x_t,z)\, p_\vartheta(z\mid x_{\le t})\,dz.
\label{eq:latent_local_evolution}
\end{equation}
We assume the conditional independence structure
\(
p_\theta(\Delta_h x_t \mid x_{\le t},z)
=
p_\theta(\Delta_h x_t \mid x_t,z),
\)
that is, once \(z\) is given, the dependence of the future evolution on the past is absorbed into the latent variable.
In this sense, \(z\) is drawn once per sequence and captures trajectory-level hidden factors that help disambiguate one particular future realization among those compatible with the observed history.

Under consistency assumptions, Kolmogorov's extension theorem provides the existence of a path-space distribution compatible with these local specifications in \Eqref{eq:latent_local_evolution}.
The generative dynamics, however, follow an ODE driven by a random latent. In particular, this means a process with zero quadratic variation, so the standard machinery for deriving a continuous-time path-space lower bound does not directly apply.
Instead, we proceed with a local variational formulation and support the resulting training objective through a discrete-time statement in Proposition~\ref{prop:lower_bound}.

For each fixed \((t,h)\), \Eqref{eq:latent_local_evolution} defines a conditional variational autoencoder (CVAE) model for local trajectory evolution~\citep{sohn2015learning}.
As in learned-prior CVAEs for stochastic video prediction~\citep{denton2018stochastic}, we introduce an amortized full-trajectory posterior \(q_\phi(z\mid x)\).
Since the posterior receives the full sequence, it allows to condition the local evolution on the specific realization given by a training sample.
Jensen's inequality then yields the usual CVAE evidence lower bound (ELBO)
\begin{equation}
\log p(\Delta_h x_t \vert x_{\le t})
\ge
\mathbb E_{q_\phi}
\bigl[\log p_\theta(\Delta_h x_t \vert x_t,z)\bigr]
-
\mathrm{KL}\bigl(q_\phi(z\vert x)\,\|\,p_\vartheta(z\vert x_{\le t})\bigr).
\label{eq:local_elbo}
\end{equation}
We treat the distributions over time points \(t\) and horizons \(h\) as design choices of the training procedure. 
Averaging the pointwise bound over these choices gives
\begin{equation}
\mathbb E_{t,h}\bigl[\log p(\Delta_h x_t \vert x_{\le t})\bigr]
\ge
\mathbb E_{t,h,q_\phi}
\bigl[\log p_\theta(\Delta_h x_t \vert x_t,z)\bigr]
-
\mathbb E_t\Bigl[
\mathrm{KL}\bigl(q_\phi(z\vert x)\,\|\,p_\vartheta(z\vert x_{\le t})\bigr)
\Bigr].
\label{eq:avg_local_elbo}
\end{equation}
To obtain the regression objective used for training, we choose a Gaussian likelihood
\({
\Delta_h x_t \vert x_t,z
\sim
\mathcal N\!\bigl(\Delta_h x_t;\,v_\theta(x_t,z,h),\,\sigma^2 I\bigr),
}\)
where we condition the velocity \(v_\theta\) on the latent \(z\) and on the step size \(h\).
Then, the base minimization objective becomes \( \mathbb E\bigl[ \mathcal L_{\mathrm{base}}(X)\bigr]\) with
\begin{equation}
\mathcal L_{\mathrm{base}}(x)
=
\frac{1}{2\sigma^2}\,
\mathbb E_{t,h}\mathbb E_{q_\phi(z\vert x)}
\Bigl[
\|\Delta_h x_t - v_\theta(x_t,z,h)\|^2
\Bigr]
+
\mathbb E_t\Bigl[
\mathrm{KL}\bigl(q_\phi(z\vert x)\,\|\,p_\vartheta(z\vert x_{\le t})\bigr)
\Bigr].
\label{eq:training_loss}
\end{equation}
The reconstruction term therefore reduces to a squared velocity-matching objective, while the KL term aligns the full-trajectory posterior with the history-conditioned prior. 
We now return to the sequence-level likelihood question raised above.
For the discrete-time statement, we consider the natural joint model in which a single latent variable drives the full trajectory. 
Under this model, Proposition~\ref{prop:lower_bound} shows that the purely local objective also yields a lower bound on the likelihood of the full sequence.

\begin{proposition}[Discretized likelihood lower bound]
\label{prop:lower_bound}
Under the fixed discretization setup of Appendix~\ref{app:prop1}, the lower bound of \eqref{eq:avg_local_elbo} also is a lower bound on the log-likelihood of a sequence $(x_{t_0},\dots,x_{t_K})$, up to scaling by \(K\) and additive constants.
A full statement and proof are given in Appendix~\ref{app:prop1}.
\end{proposition}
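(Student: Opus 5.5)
We will work in the fixed discretization setup: time points $t_0<\dots<t_K$ with the single step size $h_k=t_{k+1}-t_k$ attached to each grid point. The target is a discrete Markov-type joint model in which one latent $z\sim p_\vartheta(z\mid x_{t_0})$ drives the whole trajectory. Given $z$, the increments are generated by the Gaussian local model $p_\theta(\Delta_{h_k}x_{t_k}\mid x_{t_k},z)$. Since $x_{t_{k+1}}=x_{t_k}+h_k\Delta_{h_k}x_{t_k}$ is an affine, invertible map of the increment for fixed $x_{t_k}$, the change of variables gives
\[
p(x_{t_1},\dots,x_{t_K}\mid x_{t_0})=\int p_\vartheta(z\mid x_{t_0})\prod_{k=0}^{K-1} h_k^{-d}\,p_\theta(\Delta_{h_k}x_{t_k}\mid x_{t_k},z)\,dz .
\]
The factors $h_k^{-d}$ contribute only an additive constant $-d\sum_k\log h_k$. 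The marginal $\log p(x_{t_0})$ is likewise a $\theta$-independent constant.

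Next we apply Jensen's inequality with the full-trajectory posterior $q_\phi(z\mid x)$:
\[
\log p(x_{t_{0:K}})\ \ge\ \sum_{k=0}^{K-1}\mathbb E_{q_\phi}\bigl[\log p_\theta(\Delta_{h_k}x_{t_k}\mid x_{t_k},z)\bigr]-\mathrm{KL}\bigl(q_\phi(z\mid x)\,\|\,p_\vartheta(z\mid x_{t_0})\bigr)+C .
\]
We then take $t$ uniform on the grid $\{t_0,\dots,t_{K-1}\}$ and $h=h_k$ deterministic given $t=t_k$. With this choice the reconstruction sum equals $K\,\mathbb E_{t,h,q_\phi}[\log p_\theta(\cdot)]$, which is exactly $K$ times the reconstruction term of the averaged bound in \eqref{eq:avg_local_elbo}.

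The main obstacle is the KL term. The sequence bound carries a single $\mathrm{KL}(q\,\|\,p_\vartheta(z\mid x_{t_0}))$. The local bound instead averages $\mathrm{KL}(q\,\|\,p_\vartheta(z\mid x_{\le t_k}))$ over $k$ and scales it by $K$, giving $\sum_k \mathrm{KL}(q\,\|\,p_\vartheta(z\mid x_{\le t_k}))$. Because every KL is nonnegative, we handle this in one of two ways:
\begin{itemize}
\item[(i)] The joint model uses a prior that in the setup of Appendix~\ref{app:prop1} is conditioned on $x_{\le t_0}$. Then the $k=0$ summand equals the sequence KL, and the remaining summands only subtract more. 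The local quantity is therefore $\le$ the sequence ELBO $\le\log p$.
\item[(ii)] If the setup instead specifies a product-of-experts or geometric-mixture prior, we use convexity of $\mathrm{KL}$ in its second argument to bound the sequence KL by the average of the local KLs.
\end{itemize}
We plan to commit to (i), since it needs no extra assumption.

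Combining the two displays gives
\[
\log p(x_{t_{0:K}})\ \ge\ K\cdot\Bigl(\text{right-hand side of \eqref{eq:avg_local_elbo}}\Bigr)+C,
\]
which is the claim up to scaling by $K$ and additive constants. Substituting the Gaussian likelihood then shows that minimizing $\mathcal L_{\mathrm{base}}$ maximizes this bound. The only residual points to verify are measurability and integrability of $z\mapsto p_\theta$, which follow from the Gaussian form.
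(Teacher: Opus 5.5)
Your proposal is correct and follows essentially the same route as the paper's proof: the change of variables $\Delta_k\mapsto x_{k+1}$ contributes the parameter-free term $-Kd\log h$ (your $-d\sum_k\log h_k$), and the single-latent Jensen bound carries $\mathrm{KL}(q_\phi(z\mid x)\,\|\,p_\vartheta(z\mid x_0))$, which is exactly the $k=0$ summand of $K\,\mathbb E_t[\mathrm{KL}]$, so the remaining nonnegative terms $\sum_{k=1}^{K-1}\mathrm{KL}(q_\phi(z\mid x)\,\|\,p_\vartheta(z\mid x_{\le k}))$ are dropped precisely as in your option (i). Allowing nonuniform steps $h_k$ is a harmless generalization of the paper's uniform-$h$ grid, and option (ii) is not needed.
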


\subsection{Path-relative transverse exponential stability}
\label{sec:method_stability}

Neural ODE-based generative models suffer from instability, drift, and numerical error accumulation during rollout. In our setting, this issue is amplified by the fact that training is simulation-free: supervision is purely local via the regression objective, without explicitly constraining long-term trajectories. As a result, small local errors may accumulate and lead to deviations from the learned manifold.
To address this, we study the ideal geometric structure of a stable vector field around a reference trajectory and then incorporate this structure into the training objective.

\looseness=-1 Let \(\Gamma=\{x_t:t\in[0,T]\}\) be a smooth reference path. To avoid endpoint artifacts, we assume that \(\Gamma\) is contained in a smooth embedded extension \(\widetilde\Gamma\) and work in a tubular neighborhood \(U_\rho\) of \(\Gamma\) on which the nearest-point projection
\(\pi:U_\rho\to\widetilde\Gamma\)
is well defined. For \(x\in U_\rho\), write
\(e(x):=x-\pi(x)\)
for the transverse displacement. Let \(\tau(\pi(x))\) denote the tangent velocity of the extended path at the projected point, and let \(\Pi^\perp(\pi(x))\)
denote the Euclidean projection onto the normal space at \(\pi(x)\). The full tubular-neighborhood assumptions are stated in Appendix~\ref{app:prop2}.
We define the ideal path-relative field as
\begin{equation}
\label{eq:target_vf}
v^\star(x)
=
\tau(\pi(x))-\lambda e(x),
\end{equation}
where \(\lambda>0\). Its first term preserves tangential motion along the reference path, while the second term contracts only transverse displacement. Thus \(v^\star\) is the simplest field that follows the interpolation path and returns off-path states back toward it.

\begin{definition}[Path-relative transverse exponential stability]
A vector field \(v\) is said to be \emph{path-relative transversely exponentially stable} with rate \(\mu>0\) if every trajectory \(y_t\) satisfying \(\dot y_t = v(y_t)\) obeys
\[
\|e(y_t)\|
\le
\exp(-\mu(t-t_0))\|e(y_{t_0})\|
\]
% for all initial conditions \(y_{t_0}\in U_\rho\) and all times \(t>t_0\) for which the solution is defined.
for all \(t\in[t_0,T_{\max})\), where \(T_{\max}\) is the maximal time such that \(y_s\in U_\rho\) for all \(s\in[t_0,t]\).
\end{definition}
By design (Equation~\ref{eq:target_vf}), the target field \(v^\star\) has the desired exponential-return property.
\begin{proposition}[Exponential decay of transverse displacement]
\label{prop:exp_error_decay}
Under the tubular-neighborhood assumptions of Appendix~\ref{app:prop2}, the vector field \(v^\star\) is path-relative transversely exponentially stable on \(U_\rho\) with rate \(\lambda\), up to the possible longitudinal exit time from the tubular neighborhood.
\end{proposition}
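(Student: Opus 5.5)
The plan is a Lyapunov argument on the squared transverse distance. Set \(V(x):=\tfrac12\|e(x)\|^2\) for \(x\in U_\rho\), compute its derivative along a trajectory \(\dot y_t=v^\star(y_t)\), and show that \(\tfrac{d}{dt}V(y_t)=-2\lambda V(y_t)\) as long as \(y_t\) stays in the tube. Grönwall, or direct integration, then gives \(\|e(y_t)\|=e^{-\lambda(t-t_0)}\|e(y_{t_0})\|\) on \([t_0,T_{\max})\). This is in fact equality, so the claimed inequality with \(\mu=\lambda\) follows. The restriction to \([t_0,T_{\max})\) is exactly the ``longitudinal exit time'' caveat in the statement.

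The key geometric input is the gradient of the squared distance to a smooth embedded curve inside its tubular neighborhood. Since \(\pi\) is the nearest-point projection onto \(\widetilde\Gamma\) and is \(C^1\) on \(U_\rho\) (part of the tubular-neighborhood assumptions), the function \(V(x)=\tfrac12\operatorname{dist}(x,\widetilde\Gamma)^2\) is \(C^1\) there, with \(\nabla V(x)=e(x)\).

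To justify this, write \(V(x)=\tfrac12\|x-\pi(x)\|^2\), so \(\nabla V(x)=(I-D\pi(x))^T e(x)\). The first-order optimality condition for the nearest point says that \(e(x)\) lies in the normal space at \(\pi(x)\), i.e.\ \(\Pi^\perp(\pi(x))e(x)=e(x)\). The range of \(D\pi(x)\) lies in the tangent line \(T_{\pi(x)}\widetilde\Gamma=\mathrm{span}\{\tau(\pi(x))\}\). Hence \(D\pi(x)^T e(x)=0\), and so \(\nabla V=e\).

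With this in hand, I compute
\[
\tfrac{d}{dt}V(y_t)=\langle e(y_t),\,\tau(\pi(y_t))-\lambda e(y_t)\rangle=-\lambda\|e(y_t)\|^2 .
\]
Here the tangential term vanishes because \(e\perp\tau\) at the projected point. Thus \(\dot V=-2\lambda V\), which integrates to \(V(y_t)=e^{-2\lambda(t-t_0)}V(y_{t_0})\). Taking square roots yields the bound.

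I would also note why solutions exist on a nontrivial interval. \(v^\star\) is continuous, and locally Lipschitz if \(\pi\) is \(C^{1,1}\) or \(\widetilde\Gamma\) is \(C^2\). Trajectories therefore exist uniquely until they leave \(U_\rho\). Transversally they cannot leave, since \(\|e\|\) is nonincreasing and so \(\|e(y_t)\|<\rho\) is preserved. The only possible exit is through the ends of \(\widetilde\Gamma\), which is exactly \(T_{\max}\).

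The main obstacle is the regularity step: establishing that \(\pi\) is differentiable with image of \(D\pi\) in the tangent line. This is standard tubular-neighborhood theory (uniqueness of the nearest point for \(\rho\) below the reach, plus the implicit function theorem applied to \(\langle x-\gamma(s),\gamma'(s)\rangle=0\) in \(s\)). I would take it from the assumptions in Appendix~\ref{app:prop2} rather than reprove it. If differentiability of \(\pi\) were unavailable, I would fall back on the envelope/Danskin argument: \(V\) is a minimum over \(s\) of \(\tfrac12\|x-\gamma(s)\|^2\) with a unique minimizer, which still gives \(\nabla V=e\).
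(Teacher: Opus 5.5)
Your proposal is correct and follows the paper's proof essentially step for step. The shared steps are the Lyapunov function \(V=\tfrac12\|e\|^2\); the identity \(\nabla V=e\), obtained from \(D\pi(x)w\in T_{\pi(x)}\widetilde\Gamma\) together with \(e(x)\in N_{\pi(x)}\widetilde\Gamma\); the exact equation \(\frac{d}{dt}V(y_t)=-2\lambda V(y_t)\), which yields equality and not just the inequality; and the observation that \(\|e(y_t)\|\) is nonincreasing, which rules out lateral exit and leaves only the longitudinal exit time. Your extra remarks on existence of solutions and the Danskin fallback are not needed there: the statement quantifies over given trajectories, and the \(C^1\) regularity of \(\pi\) is supplied by the tubular-projection lemma.
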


This is a finite-amplitude, trajectory-relative stability statement. The same construction, however, also implies the standard infinitesimal transverse contraction criterion of \citet{manchester2017control}.

\begin{corollary}[Transverse contraction]
\label{cor:standard_transverse_contraction}
Under the assumptions of Proposition~\ref{prop:exp_error_decay}, the vector field \(v^\star\) is transversely contracting in the sense of~\cite{manchester2017control}. A precise statement and proof are given in Appendix~\ref{app:cor21}.
\end{corollary}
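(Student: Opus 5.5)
We need to show that \(v^\star\) satisfies the transverse contraction criterion of \citet{manchester2017control}. That criterion asks for a (possibly state-dependent) metric \(M(x)\succ 0\) and a rate \(\mu>0\) such that
\[
\delta_x^\top\bigl(\dot M + M\,\partial_x v^\star + (\partial_x v^\star)^\top M\bigr)\delta_x \le -2\mu\,\delta_x^\top M\delta_x
\]
holds for all tangent vectors \(\delta_x\) with \(\delta_x^\top M\, v^\star(x)=0\), i.e.\ orthogonal to the flow. The plan is to compute the Jacobian of \(v^\star\) on the tubular neighborhood and to choose a metric that sees only the transverse part of that Jacobian.

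\textbf{Step 1: the Jacobian.} Parametrize \(U_\rho\) by \(x=\gamma(s)+n\), with \(s\) the arclength (or time) parameter of \(\widetilde\Gamma\) and \(n\in N_{\gamma(s)}\). Then \(\pi(x)=\gamma(s)\) and \(e(x)=n\). Write \(\tau(\pi(x))=\gamma'(s)\). Differentiating gives
\[
\partial_x v^\star=\gamma''(s)\,\partial_x s-\lambda\,\partial_x e.
\]
From the identity \(x=\pi(x)+e(x)\) we have \(\partial_x e=I-\tau\,\partial_x s\). By the standard tubular-neighborhood formula, \(\partial_x s\) annihilates normal directions up to curvature corrections and is nonzero only along \(\tau\).

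\textbf{Step 2: the metric.} The simplest candidate is \(M=I\), and the flow-orthogonal tangent vectors are then those with \(\delta_x\perp v^\star\). The difficulty is that \(v^\star\) is not aligned with \(\tau\) off-path, because of the \(-\lambda e\) term. I would therefore instead use Manchester's equivalent formulation via a transverse projection: verify contraction on the normal bundle in the coordinates \((s,n)\). In these coordinates the dynamics read
\[
\dot s = \frac{\|\tau\|^2}{\|\tau\|^2-\langle n,\gamma''\rangle},\qquad \dot n=-\lambda n+(\text{terms from rotation of the normal frame}).
\]
Choosing a parallel (Bishop) normal frame along \(\widetilde\Gamma\) makes the normal-frame rotation terms vanish, so in frame coordinates \(\dot n=-\lambda n\) exactly. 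This is also how Proposition~\ref{prop:exp_error_decay} is obtained, and I would reuse that computation rather than redo it. The metric is then the pullback \(M(x)=\partial_x\Phi^\top\,\mathrm{diag}(c,I)\,\partial_x\Phi\), where \(\Phi(x)=(s,n)\). Vectors that are \(M\)-orthogonal to \(v^\star\) correspond, in coordinates, to purely normal variations \(\delta n\), because \(v^\star\) maps to \((\dot s,-\lambda n)\). After a small adjustment, using a linear shear \(\delta s\) in the orthogonality condition, the differential inequality reduces to \(\tfrac{d}{dt}\|\delta n\|^2=-2\lambda\|\delta n\|^2\). This gives rate \(\mu=\lambda\).

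\textbf{Main obstacle.} The delicate point is exactly this orthogonality condition. Since \(v^\star\) has a normal component \(-\lambda n\), the \(M\)-orthogonal complement of \(v^\star\) is not purely normal, and cross terms between \(\delta s\) and \(\delta n\) appear. I would first try to absorb them by choosing \(c\) small. If that fails, I would fall back on a conditional statement: transverse contraction holds on the path \(\Gamma\) itself (where \(n=0\) and \(v^\star=\tau\)), and then on a thinner tube \(U_{\rho'}\) by continuity, at a slightly reduced rate \(\mu=\lambda-\epsilon\). Finally, invertibility of \(\partial\Phi\) and uniform bounds on the metric follow from the tubular-neighborhood assumptions of Appendix~\ref{app:prop2} (curvature \(<1/\rho\)).
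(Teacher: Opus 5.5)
The main route fails at the metric step. Your coordinate dynamics are correct: in a Bishop frame, \(\dot s=f(s,n)=\|\tau\|^2/(\|\tau\|^2-\langle n,\gamma''\rangle)\) and \(\dot n=-\lambda n\). However, with \(P=\mathrm{diag}(c,I)\) and \(F=(f,-\lambda n)\), the constraint \(\delta^\top PF=0\) reads \(c f\,\delta s=\lambda\langle n,\delta n\rangle\). So off the path, \(M\)-orthogonal vectors are \emph{not} purely normal, and \(\delta^\top P\,DF\,\delta\) picks up the cross term
\[
c\,\delta s\,\langle\partial_n f,\delta n\rangle=\frac{\lambda}{f}\,\langle n,\delta n\rangle\,\langle\partial_n f,\delta n\rangle,
\qquad
\partial_n f\approx\frac{(\gamma'')^\perp}{\|\tau\|^2}.
\]
This term does not depend on \(c\), has size up to \(\lambda\kappa\|n\|\|\delta n\|^2\) (with \(\kappa\) the curvature), and is strictly positive for \(\delta n=n\) when \(n\) points toward the centre of curvature. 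The remaining \(\delta s^2\) terms scale like \(1/c\). Taking \(c\) small therefore makes things worse, not better. For every \(c>0\), the rate-\(\lambda\) inequality fails at such off-path points of a curved path, and even large \(c\) gives only \(\lambda(1-O(\kappa\rho))\). Your claimed reduction to \(\tfrac{d}{dt}\|\delta n\|^2=-2\lambda\|\delta n\|^2\) with \(\mu=\lambda\) is therefore wrong.

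There are two further mismatches. First, \(\dot M\) for \(M=D\Phi^\top PD\Phi\) needs \(\Phi\in C^2\), since the \(s\)-row of \(D\Phi\) contains \(\gamma''\circ\pi\). That requires a \(C^3\) path, whereas the paper assumes only \(C^2\), so \(\pi\) and \(v^\star\) are merely \(C^1\). Second, the precise statement in the appendix uses the identity metric, \(\delta^\top\tfrac12\bigl(Dv^\star+(Dv^\star)^\top\bigr)\delta\le-\mu\|\delta\|^2\) for \(\delta^\top v^\star(x)=0\), so a different \(M\) would prove a different statement.

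Your fallback, by contrast, is correct and is exactly the paper's proof. It should be the argument itself, not a contingency, since the corollary is only claimed after shrinking the tube (the paper gets rate \(\lambda/2\)). Two steps need to be made explicit.

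\emph{On \(\Gamma\).} Use \(M=I\). Since \(v^\star(\xi)=\tau(\xi)\), flow-orthogonal \(\delta\) are normal. Your Step 1 formula \(\partial_x s=\tau^\top/(\|\tau\|^2-\langle e,\gamma''\rangle)\) annihilates \(\tau^\perp\) exactly, not merely ``up to curvature corrections''. Hence \(Dv^\star(\xi)\delta=-\lambda\delta\) and \(\delta^\top Dv^\star(\xi)\delta=-\lambda\|\delta\|^2\). The paper obtains the same from \(D\pi(\xi)n=0\) for normal \(n\), proved by differentiating the nearest-point condition.

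\emph{Extension to a tube.} ``By continuity'' has to handle the fact that the admissible set \(\{\delta:\delta^\top v^\star(x)=0\}\) moves with \(x\). The paper argues by contradiction with \(x_n\to\xi\in\Gamma\) and unit \(\delta_n\to\delta\), using compactness of \(\Gamma\) and of the unit sphere. This gives any rate \(\mu'<\lambda\) on a thinner tube.

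For a quantitative version, note that \(Dv^\star(x)\delta_\perp=-\lambda\delta_\perp\) holds for every \(\delta_\perp\perp\tau(\pi(x))\) throughout the tube. Writing a flow-orthogonal \(\delta=\alpha\,\tau/\|\tau\|+\delta_\perp\) forces \(\alpha=\lambda\langle e,\delta_\perp\rangle/\|\tau\|=O(\rho)\|\delta_\perp\|\), which yields the explicit rate \(\lambda(1-O(\rho))\). Neither version needs a Bishop frame, a non-identity metric, or regularity beyond \(C^2\).
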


For generation, we will approximate \(v^\star\) with a learned \(v_\theta\).
Proposition~\ref{prop:approx_tube} shows that approximation errors are well-behaved by comparing transverse contraction at rate  \(\lambda\) with transverse field error \(\varepsilon_\perp\).

\begin{proposition}[Approximation error contracts to a tube]
\label{prop:approx_tube}
Under the assumptions of Proposition~\ref{prop:exp_error_decay}, let \(v_\theta\) be locally Lipschitz on \(U_\rho\) with transverse approximation error \({\varepsilon_\perp:=\sup_{x\in U_\rho}\|\Pi^\perp(\pi(x))(v_\theta(x)-v^\star(x))\|<\lambda\rho}\). Then, up to the possible longitudinal exit time from the tubular neighborhood, every trajectory of \(\dot y_t=v_\theta(y_t)\) starting in \(U_\rho\) satisfies \({\|e(y_t)\|\le \exp(-\lambda(t-t_0))\|e(y_{t_0})\|+\varepsilon_\perp/\lambda}\). A precise statement and proof are given in Appendix~\ref{app:prop3}.
\end{proposition}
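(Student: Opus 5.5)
We prove Proposition~\ref{prop:approx_tube} by a differential inequality for the scalar quantity \(r(t):=\|e(y_t)\|\) along a trajectory of \(\dot y_t=v_\theta(y_t)\), closed with a comparison (Gr\"onwall-type) argument. This is the same strategy that gives Proposition~\ref{prop:exp_error_decay}, now with a bounded perturbation term. Since \(v_\theta\) is locally Lipschitz, solutions exist and are unique as long as they stay in \(U_\rho\), so we work on \([t_0,T_{\max})\) as in the definition.

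\textbf{Step 1: derivative of the squared displacement.} Set \(V(x)=\tfrac12\|e(x)\|^2\). Under the tubular-neighborhood assumptions, \(\pi\) is \(C^1\). Moreover, \(e(x)\) is normal to \(\widetilde\Gamma\) at \(\pi(x)\), so \(\Pi^\perp(\pi(x))e(x)=e(x)\). The key geometric fact is the standard identity \(\nabla V(x)=e(x)\), i.e.\ the squared distance to a submanifold has gradient \(x-\pi(x)\) in its tubular neighborhood. To verify it, differentiate \(\|x-\pi(x)\|^2\) and use \(D\pi(x)[\cdot]\in T_{\pi(x)}\widetilde\Gamma\perp e(x)\); this is presumably the same computation used for Proposition~\ref{prop:exp_error_decay}, which we reuse. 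It follows that
\[
\tfrac{d}{dt}V(y_t)=\langle e(y_t),v_\theta(y_t)\rangle
=\langle e,v^\star\rangle+\langle e,\Pi^\perp(v_\theta-v^\star)\rangle,
\]
where we use \(e=\Pi^\perp e\) to insert the projector in the error term.

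\textbf{Step 2: bound the two terms.} Since \(\tau(\pi(x))\) is tangent, \(\langle e,v^\star\rangle=\langle e,\tau\rangle-\lambda\|e\|^2=-\lambda\|e\|^2\). By Cauchy--Schwarz and the definition of \(\varepsilon_\perp\), the error term is at most \(\varepsilon_\perp\|e\|\). Hence
\[
\tfrac{d}{dt}\tfrac12 r^2\le-\lambda r^2+\varepsilon_\perp r .
\]

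\textbf{Step 3: pass to \(r\) and compare.} Where \(r>0\), \(r\) is differentiable and \(\dot r\le-\lambda r+\varepsilon_\perp\). The main technical obstacle is the nondifferentiability of \(r\) at \(r=0\). Our first approach is to use \(r_\delta=\sqrt{r^2+\delta^2}\), which gives \(\dot r_\delta\le-\lambda r^2/r_\delta+\varepsilon_\perp\le -\lambda r_\delta+\lambda\delta+\varepsilon_\perp\). Integrating with the factor \(e^{\lambda(t-t_0)}\) yields \(r_\delta(t)\le e^{-\lambda(t-t_0)}r_\delta(t_0)+(\varepsilon_\perp+\lambda\delta)(1-e^{-\lambda(t-t_0)})/\lambda\). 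Letting \(\delta\to0\) gives
\[
r(t)\le e^{-\lambda(t-t_0)}r(t_0)+\tfrac{\varepsilon_\perp}{\lambda}\bigl(1-e^{-\lambda(t-t_0)}\bigr),
\]
which is slightly stronger than the claim. Alternatively, one can invoke a Dini-derivative comparison lemma.

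\textbf{Step 4: role of \(\varepsilon_\perp<\lambda\rho\).} The bound shows \(r(t)\le\max\{r(t_0),\varepsilon_\perp/\lambda\}<\rho\), because the right-hand side is a convex combination of \(r(t_0)\) and \(\varepsilon_\perp/\lambda\). So the trajectory can never leave \(U_\rho\) through its transverse boundary. The only way to exit is longitudinally, near the ends of \(\widetilde\Gamma\), which is exactly the caveat "up to the possible longitudinal exit time." To make this rigorous, take the supremum \(T_{\max}\) of times for which \(y\in U_\rho\) and note that the estimate holds on the whole interval \([t_0,T_{\max})\).
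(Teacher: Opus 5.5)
Your proposal is correct and follows the paper's proof essentially step for step. It uses the same Lyapunov function \(V=\tfrac12\|e\|^2\) with \(\nabla V=e\), inserts \(\Pi^\perp\) before Cauchy--Schwarz, regularizes with \(\sqrt{\|e\|^2+\delta^2}\) under an integrating factor before letting \(\delta\downarrow0\), and excludes lateral exit by the same convex-combination bound \(\max\bigl(\|e(y_{t_0})\|,\varepsilon_\perp/\lambda\bigr)<\rho\), reaching the same sharper bound with the factor \(\bigl(1-\exp(-\lambda(t-t_0))\bigr)\) that the appendix states.
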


\looseness=-1 We now turn the ideal field \(v^\star\) into a practical regression target for \(v_\theta\). The stability result above prescribes two components: along the path, the model should match the interpolation velocity, and away from the path, the model should apply a transverse correction that contracts perturbations exponentially. This construction is meaningful only after conditioning on \(z\), since \(z\) must localize the relevant future path.
Given a point \(x_t\) on an interpolation path and a sampled step size \(h\), we perturb it as \(\widetilde x_t=x_t+\sigma_k\eta\), where \(\sigma_k\) is a noise scale, \(\|\eta\|=1\), and \(\eta^\top\Delta_hx_t=0\). Thus \(\eta\) is normal to the tangent when \(h=0\) and to the secant otherwise. For \(h>0\), exact decay requires the updated point to lie at distance \(\exp(-\lambda h)\|e\|\) from \(x_{t+h}\), where \(e=\sigma_k\eta\). Since this does not determine a unique target, we select the one closest to the uncorrected secant target \(\Delta_h x_t\). The full derivation in Appendix~\ref{app:secant_objective} gives
\begin{equation}
\label{eq:corrected_secant_target}
\widetilde{\Delta}_h x_t=\Delta_h x_t+c(h,\lambda)\sigma_k\eta,
\qquad
c(h,\lambda):=\frac{\exp(-\lambda h)-1}{h}\ \ (h>0),
\qquad
c(0,\lambda):=-\lambda.
\end{equation}

At \(h=0\), this correction also has a denoising-score-matching interpretation. If the normal perturbation is viewed as Gaussian noise in the normal bundle, then the conditional score in the normal subspace is \(\nabla_{\widetilde x}^{\perp}\log p(\widetilde x\mid x)=-e/\sigma_k^2\), where \(\widetilde x=x+e\) and \(e\sim \mathcal N\!\bigl(0,\sigma_k^2\Pi^\perp(\pi(x))\bigr)\).
Thus, for \(\lambda=\sigma_k^{-2}\), the infinitesimal correction \(-\lambda e\) in \eqref{eq:target_vf} coincides with the normal denoising score. In this sense, the robustified target combines tangential velocity matching with denoising score matching in transverse directions~\citep{song2019generative}.
Using multiple noise scales \(\sigma_1<\cdots<\sigma_K\), we replace the reconstruction term in \eqref{eq:training_loss} by
\[
\mathcal L_{\mathrm{stab}}(x)
=
\mathbb E_{t,h,k,\eta}\mathbb E_{q_\phi(z\mid x)}
\bigl[
\|\widetilde{\Delta}_h x_t - v_\theta(x_t+\sigma_k\eta,z,h)\|^2
\bigr].
\]
Figure~\ref{fig:fig1} illustrates the resulting geometry: the learned field follows the interpolation path tangentially while pointing perturbed states back toward the path in transverse directions.
A central challenge in this formulation is that the latent variable \(z\) must disambiguate the underlying trajectory. Otherwise, the contraction target may assign inconsistent labels across nearby but distinct paths, collapsing modes by pulling states into the wrong basin. We empirically probe this failure mode in the 2D example of Section~\ref{sec:toy} and further demonstrate stable long-horizon stochastic rollouts in Section~\ref{sec:pde_exp}.

Overall, our method applies stochastic gradient descent on \(\mathbb E \bigl[ \mathcal L(X)\bigr]\) with
\[
\mathcal L(x)
=
\frac{1}{2\sigma^2}\,\mathbb E_{t,h,k,\eta}\mathbb E_{q_\phi(z\mid x)}
\bigl[
\|\widetilde{\Delta}_h x_t - v_\theta(x_t+\sigma_k\eta,z,h)\|^2
\bigr]
+
\mathbb E_t\Bigl[
\mathrm{KL}\bigl(q_\phi(z\mid x)\,\|\,p_\vartheta(z\mid x_{\le t})\bigr)
\Bigr].
\]
Here, \(t\), \(h\), and the noise scale index \(k\) are typically sampled uniformly, while the noise levels \(\sigma_k\) are chosen on a geometric grid.
Therefore, the complete model consists of a frozen score-based prior, a frozen amortized interpolator, a latent posterior \(q_\phi(z\mid x)\), a history-conditioned prior \(p_\vartheta(z\mid x_{\le t})\), and a step-conditioned velocity model \(v_\theta(x_t,z,h)\). Only the latter three components are trained in the final stage.
The full training algorithm is given in Appendix~\ref{app:full_training_alg}.

%%%%%%%%%%%%%%%%%%%%%%%%%%%%%%%%%%%%%%%%%%%%%%%%%%%%%%%%%%%%%%%%%%%%%%%%%%%%%%%%%%%%%%%%%%%%%%%%%%%%%%%%%%%%%%%%%%%%%%%%%%%%%%%%%%%%%%%%

\section{Experiments}
The experimental evaluation of our method is intended to demonstrate feasibility of our concept, potential applications, and to highlight failure modes of unconstrained continuous-time methods.
First, we provide a two-dimensional example that already contains every element of problem we address (Section~\ref{sec:toy}). The remaining experiments demonstrate applications to PDE-governed fields, molecular dynamics, and natural video, and analyze rollout stability (Section~\ref{sec:robustness_exp}), the plausibility of generated intermediates (Section~\ref{sec:plausibility}), and stochastic long-horizon generation (Section~\ref{sec:pde_exp}).

\begin{figure}[t]
  \centering
  \newcommand{\panelw}{0.1786\textwidth}

  \begin{minipage}[t]{\panelw}\centering
    \includegraphics[width=\linewidth]{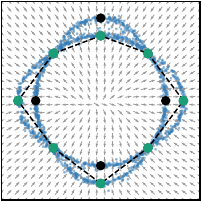}\\
    \footnotesize (a) score field
  \end{minipage}
  \hfill
  \begin{minipage}[t]{\panelw}\centering
    \includegraphics[width=\linewidth]{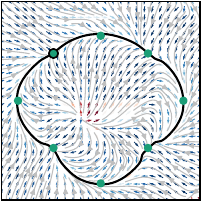}\\
    \footnotesize (b) ours
  \end{minipage}
  \hfill
  \begin{minipage}[t]{\panelw}\centering
    \includegraphics[width=\linewidth]{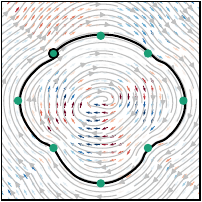}\\
    \footnotesize (c) no correction
  \end{minipage}
  \hfill
  \begin{minipage}[t]{\panelw}\centering
    \includegraphics[width=\linewidth]{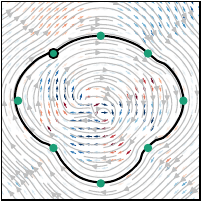}\\
    \footnotesize (d) no noise
  \end{minipage}
    \hfill
  \begin{minipage}[t]{\panelw}\centering
    \includegraphics[width=\linewidth]{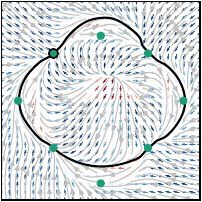}\\
    \footnotesize (e) no latent
  \end{minipage}
  \caption{2D example. (a) Score prior: model samples (blue), score field (gray), candidate nodes (black), active loop (green). (b)--(e) Learned fields \(v_\theta(x\mid z)\), conditioned on the green nodes. Arrows colored by their transverse component toward (blue) or away from (red) the reference path (black).}
  \label{fig:toy_fields}
\end{figure}

\begin{table}[b]
\centering
\caption{Measured transverse contraction: median pointwise eigenvalue \(\lambda_\perp\) with the fraction of the tube where it is negative and median empirical decay rate of transverse perturbations.}
\label{tab:contraction}
\footnotesize
\begin{tabular}{l ccc cc}
  \toprule
  & \multicolumn{3}{c}{2D example ($\lambda=10$)} & \multicolumn{2}{c}{Navier--Stokes ($\lambda=1$)} \\
  \cmidrule(lr){2-4}\cmidrule(lr){5-6}
  Model & med.\ $\lambda_\perp$ & frac.\ $\lambda_\perp<0$ & rate $\downarrow$ & rate $\downarrow$ & frac.\ contr.\ $\uparrow$ \\
  \midrule
  ours        & \textbf{$-$8.28} & \textbf{0.98} & \textbf{$-$5.4} & \textbf{$-$0.90} & \textbf{1.00} \\
  input noise & $-$0.31 & 0.68 & $+$0.0 & $+$0.00 & 0.47 \\
  vanilla     & $-$0.20 & 0.81 & $-$0.2 & $-$0.13 & 0.75 \\
  \bottomrule
\end{tabular}
\end{table}

\subsection{Demonstrative Example}
\label{sec:toy}

\looseness=-1 We construct a two-dimensional example in which the data manifold is a square whose adjacent corners are connected by an inner and an outer arc.
Each sequence is a closed loop of length \(9\) alternating (noised) corners and arc midpoints with the arc chosen independently on every side, resulting in \(2^4\) possible branching combinations.
Figure~\ref{fig:toy_fields} (a) shows this setup with samples from the data manifold and one training sequence.
The score prior is pretrained on the arcs, while the generative model observes only the sparse sequences (Appendix~\ref{data:toy}), testing three properties:
\textbf{(1)~Manifold.} Generation needs to respect the curved arcs between observed points.
\textbf{(2)~Stochasticity.} Branching requires conditioning and a partial history should preserve stochasticity in the remaining trajectory.
\textbf{(3)~Transverse contraction.} The field should contract towards a unique path, meaning that latent conditioning needs to prevent states being pulled toward the wrong branch.

\looseness=-1 Figure~\ref{fig:toy_fields} (b)-(e) shows \(v_\theta\) conditioned on the sequence in green and a trajectory generated from this field starting from the marker with the black edge. We show multiple ablations, where all models have learned manifold-respecting vector fields that follow the arcs.
The full method's field in (b) is globally coherent and transports the entire domain onto the conditioned path.
Without the target correction (c), and likewise without noise (d), off-path states circulate instead of returning. Without the latent (e) the field contracts but not to a unique path.

Further results in Appendix~\ref{app:res_toy} demonstrate plausible stochasticity from partial conditioning (Figure~\ref{fig:toy_example} and Figure~\ref{fig:app_toy}) and the qualitative results of Figure~\ref{fig:toy_fields} are also reflected in Table~\ref{tab:toy_ablation}, which quantifies manifold adherence, branching accuracy, and measured contraction rates for the ablations.

\subsection{Rollout stability from transverse contraction}
\label{sec:robustness_exp}

\looseness=-1 We isolate rollout stability in a controlled reconstruction setting by training on a single synthetic Navier--Stokes sequence (details in Appendix~\ref{data:pde}). We compare the full method against two ablations: training without noise, and training with orthogonal noise but without correcting the regression targets, i.e.\ pure input augmentation. Figure~\ref{fig:contracting} (Appendix~\ref{app:res_contracting}) reports the mean LPIPS~\citep{zhang2018unreasonable} over a 64-frame rollout under three inference regimes---direct steps with \(h>0\), fixed-step, and adaptive ODE solving---and shows consistently improved stability with the proposed robustification across solver settings. Qualitative rollouts in the same appendix show an even larger effect: without robustification, iterative errors accumulate and eventually cause severe artifacts.

\looseness=-1 To test the mechanism itself, we extract pointwise transverse eigenvalues \(\lambda_\perp\) around the conditioned path and report empirical decay rates, obtained by integrating a perturbed and an unperturbed state under the same field (details in Appendix~\ref{app:inference_details}). Results in Table~\ref{tab:contraction} verify the desired contracting properties for both diagnostics and confirm the qualitative behavior of the vector fields shown in Figure~\ref{fig:toy_fields}.

\begin{figure}[b]
  \centering
  \begin{minipage}[t]{0.2256\textwidth}\centering
    \includegraphics[trim=21 30 14 14, clip, width=\linewidth]{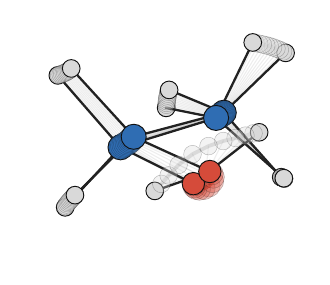}\\
    \footnotesize (a) Ours
  \end{minipage}
  \hfill
  \begin{minipage}[t]{0.2256\textwidth}\centering
    \includegraphics[trim=21 30 14 14, clip, width=\linewidth]{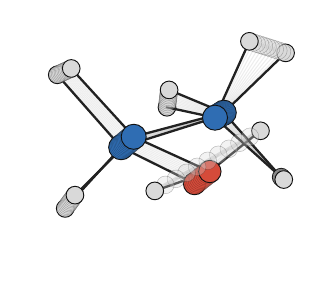}\\
    \footnotesize (b) Linear
  \end{minipage}
  \hfill
  \begin{minipage}[t]{0.2256\textwidth}\centering
    \includegraphics[width=\linewidth]{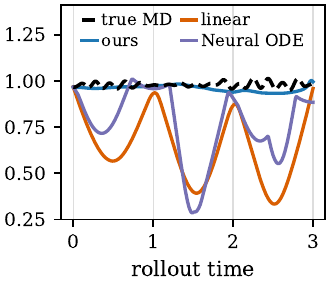}\\
    \footnotesize (c) O-H bond length (\r{A})
  \end{minipage}
  \hfill
  \begin{minipage}[t]{0.2256\textwidth}\centering
    \includegraphics[width=\linewidth]{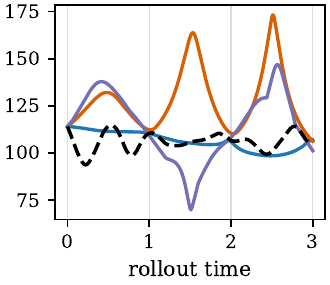}\\
    \footnotesize (d) C-O-H angle (deg)
  \end{minipage}
  \caption{Molecular dynamics on MD17 ethanol: learned dynamics preserve molecular geometry more consistently than linear interpolation, as reflected by the bond-length variation.}
  \label{fig:md_panels}
\end{figure}

\subsection{Plausibility of Intermediates}
\label{sec:plausibility}
\looseness=-1 Sec.~\ref{sec:toy} already tests whether generation follows the data manifold in two dimensions, and we now ask the same on PDE data. Gray--Scott reaction--diffusion describes interacting chemical species that form spatial patterns. We observe only one species, so future rollouts depend on both the observed and the hidden initial state.
For evaluation, we generate trajectories ten times finer in time to obtain true unseen states between two training frames.
Each path is scored against these intermediates by relative \(L_2\) distance and a spectral plausibility diagnostic (Appendix~\ref{app:gs_fine}).

\begin{table}[h]
  \begin{minipage}[t]{0.5\linewidth}
    \centering
    \caption{Frozen interpolation paths against the true fine-time Gray--Scott intermediates (mean \(\pm\) std over eight trajectories).}
    \label{tab:gs_interp_fine}
    \footnotesize
    \setlength{\tabcolsep}{3.5pt}
    \begin{tabular}{l cc}
      \toprule
      Path & rel-$L_2$ $\downarrow$ & spectral $\downarrow$ \\
      \midrule
      linear       & 0.055 $\pm$ 0.001 & 0.471 $\pm$ 0.004 \\
      ours (score) & \textbf{0.019 $\pm$ 0.001} & \textbf{0.134 $\pm$ 0.017} \\
      \bottomrule
    \end{tabular}
  \end{minipage}\hfill
  \begin{minipage}[t]{0.46\linewidth}
    \centering
    \caption{Generative fields at twice the training frame spacing (full results in Appendix~\ref{app:res_plausibility}).}
    \label{tab:x1w}
    \footnotesize
    \setlength{\tabcolsep}{3.5pt}
    \begin{tabular}{l cc}
      \toprule
      Supervision & rel-$L_2$ $\downarrow$ & spectral $\downarrow$ \\
      \midrule
      ours (score targets) & 0.115 & \textbf{0.335} \\
      linear targets       & 0.177 & 0.439 \\
      Neural ODE           & \textbf{0.060} & 0.482 \\
      \bottomrule
    \end{tabular}
  \end{minipage}
\end{table}

\looseness=-1 The frozen interpolation prior alone is already 2.9 times closer to the true intermediates than linear interpolation and 3.5 times better on the spectral diagnostic (Table~\ref{tab:gs_interp_fine}).
We then train generative fields on these paths at twice the frame spacing, where the endpoints no longer determine the interior, using the same backbone with score targets, linear targets, and a solver-in-the-loop Neural ODE (Appendix~\ref{app:x1w}). The Neural ODE fits the dynamics of this single trajectory and tracks the true states most closely, yet its intermediates are the least plausible of the three under the spectral diagnostic (Table~\ref{tab:x1w}).

While recovery of true intermediate dynamics cannot be guaranteed from the score prior alone, knowledge such as a governing PDE can be added to the interpolator. 
We demonstrate this on a fully observed variant that stores both species, such that the PDE residual is computable (Appendix~\ref{app:x2}).
Results in Appendix~\ref{app:res_plausibility} show that the score paths provide a physically better baseline than linear interpolation and that they behave better under subsequent refinement based on the PDE.

\looseness=-1 On this Gray--Scott data, the Neural ODE seems to recover the true path while violating its spectral structure. We demonstrate that on molecular dynamics (MD), which models atoms evolving under physical interactions, it fails at both.
Here the intermediate states should stay on the molecular configuration manifold. We use ethanol trajectories from MD17~\citep{chmiela2017machine}, atomic coordinates with fixed atom identities, and an EGNN backbone~\citep{satorras2021en}. We also add prior knowledge to the interpolator training, encouraging smooth segment transitions and uniform Euclidean speed (App.~\ref{data:md}).
Figure~\ref{fig:md_panels} compares fields trained on score-guided paths (ours), linear paths, and a Neural ODE on four consecutive MD frames.
The true trajectory has small-scale jitter that we cannot recover, but our rollout stays close to the true geometry.
While the linear-path field and the Neural ODE fit the data at observed times, they produce impossible intermediate states with bond lengths shrinking to about a third of their true length and large spikes in the C-O-H angle.

\subsection{Generation of PDE-Governed Spatiotemporal Dynamics}
\label{sec:pde_exp}

\begin{table}[t]
\centering
\caption{Distributional rollout comparison on Gray--Scott: sliced Wasserstein distance normalized by the true-vs-true finite-sample reference (values near one indicate simulator-level variability).}
\label{tab:gray_scott_dist}
\footnotesize
\begin{tabular}{lccccc}
\toprule
Method & $T=16$ & $T=32$ & $T=64$ & $T=128$ & $T=256$ \\
\midrule
Ours & 1.35 & 1.23 & 0.98 & 0.75 & 0.75 \\
FNO  & 5.91 & 6.03 & 5.70 & 5.72 & 6.49 \\
\bottomrule
\end{tabular}
\end{table}

\looseness=-1 The two-dimensional example shows plausible stochasticity under partial conditioning (Appendix~\ref{app:res_toy}).
Now, we test this on the partially observed Gray--Scott data of Sec.~\ref{sec:plausibility}, trained on sequences of length \(T=64\), where a starting point does not uniquely determine the future evolution since the second species is unobserved.
We generate a reference distribution of possible trajectories by resampling the unobserved field for each observed field, and compare it against samples from our model obtained by drawing multiple latents for the same observation.
Table~\ref{tab:gray_scott_dist} reports the sliced Wasserstein distance at each horizon \(T\), normalized by a true-vs-true simulator split (details in App.~\ref{data:pde}).
At \(T=64\) the normalized distance is close to one, so our samples sit about as far from the reference set as it sits from itself, and it stays at a comparable level out to four times the training horizon.
Rollouts for two latent samples are given in Appendix~\ref{app:res_pde}, showing distinct yet plausible evolutions from the same starting point.
To confirm that the task is not solved by a single prediction, we also report a deterministic Fourier Neural Operator (FNO)~\citep{li2020fourier} one-step predictor rolled out autoregressively, which produces one future per observation.

\begin{figure}[b]
    \centering
    \includegraphics[width=0.95\linewidth]{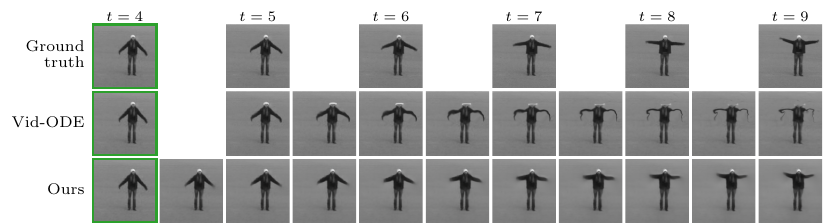}
    \caption{Extrapolation on KTH (rows: ground truth with the last observed input framed in green)}
    \label{fig:kth_extrap}
\end{figure}

\subsection{Natural Video Generation}
\label{sec:temporal_sr}

We evaluate on the KTH Actions benchmark~\citep{schuldt2004recognizing} under the two Vid-ODE protocols~\citep{park2021vidode} at \(128{\times}128\): \emph{interpolation} (observe every other frame of a ten-frame window, predict the held-out frames) and \emph{extrapolation} (observe the first five frames, predict the next five).
Unlike the other experiments, the full pipeline here runs in the latent space of a pretrained frame autoencoder, mirroring Vid-ODE's GRU backbone, which likewise evolves an encoded state.
We retrain Vid-ODE with its official code, evaluating all methods with its protocol and metric implementation (Appendix~\ref{data:kth}).
Figure~\ref{fig:kth_extrap} shows the failure mode that motivates this work. Supervised only at observed timestamps, Vid-ODE drifts off the data manifold with compounding error over the extrapolation horizon. Figure~\ref{fig:kth_interp} (Appendix~\ref{app:res_kth}) shows the same under interpolation, where its warped intermediates ghost at unobserved times.
Tables~\ref{tab:vidode} and~\ref{tab:vidode_extrap} report SSIM, PSNR, and LPIPS, with DVF and UVI numbers from~\citet{park2021vidode}.
Under extrapolation our method surpasses Vid-ODE on all three metrics and remains competitive at interpolation.

\begin{table}[t]
  \begin{minipage}[t]{0.48\linewidth}
    \centering
    \caption{Video interpolation on KTH Actions (DVF and UVI quoted from~\citealp{park2021vidode}).}
    \label{tab:vidode}
    \footnotesize
    \setlength{\tabcolsep}{3.5pt}
    \begin{tabular}{l ccc}
      \toprule
      Method & SSIM $\uparrow$ & PSNR $\uparrow$ & LPIPS $\downarrow$ \\
      \midrule
      DVF (supervised)    & 0.954 & 36.28 & 0.037 \\
      UVI (unsupervised)  & 0.934 & 29.97 & 0.055 \\
      Vid-ODE             & 0.912 & 33.08 & 0.049 \\
      Ours (linear)       & 0.890 & 29.59 & 0.086 \\
      Ours                & 0.904 & 31.86 & 0.090 \\
      \bottomrule
    \end{tabular}
  \end{minipage}\hfill
  \begin{minipage}[t]{0.48\linewidth}
    \centering
    \caption{Video extrapolation on KTH Actions (observe frames 1--5, generate frames 6--10).}
    \label{tab:vidode_extrap}
    \footnotesize
    \setlength{\tabcolsep}{3.5pt}
    \begin{tabular}{l ccc}
      \toprule
      Method & SSIM $\uparrow$ & PSNR $\uparrow$ & LPIPS $\downarrow$ \\
      \midrule
      Vid-ODE             & 0.859          & 29.21          & 0.125          \\
      Ours (linear)       & \textbf{0.869} & 29.10          & 0.103          \\
      Ours                & 0.868          & \textbf{29.46} & \textbf{0.099} \\
      \bottomrule
    \end{tabular}
  \end{minipage}
\end{table}

\section{Related Work}
Neural ODEs and related continuous-time models enable evaluation at arbitrary query times~\citep{chen2018neural,rubanova2019latent,kidger2020neuralcde,li2020latent}, including visual generation in Vid-ODE~\citep{park2021vidode}. These methods motivate continuous-time generative modeling, but supervision typically enters only at observed timestamps. Our work instead uses score-induced interpolation to provide intermediate-time targets, connecting flow-matching regression objectives~\citep{lipman2023flowmatching,liu2023rectifiedflow} with recent score-based data-manifold geometries~\citep{chen2024riemannianflowmatching,saito2025tangential}. Stability regularizers for Neural ODEs usually encourage smoother flows through generic Jacobian, kinetic, or contractivity penalties~\citep{finlay2020train,ghosh2020steer,xu2023robust}. Our robustification is instead path-relative, with the exponential return property of Proposition~\ref{prop:exp_error_decay} and the contraction consequence of Corollary~\ref{cor:standard_transverse_contraction}. A broader overview is given in Appendix~\ref{app:related_work}.

\section{Limitations}
We inherit limitations from score-based models in general, and in particular the ones listed in the data-driven Riemannian geometry literature (e.g. \citet{horvat2024gauge} or \citet{hauberg2018only}).
Beyond these, as discussed in the experiments, we cannot guarantee recovery of the true intermediate dynamics, but we demonstrate plausible behavior that accommodates additional knowledge where it is available.
Our experiments focus on low-resolution settings and moderate training horizons and scaling the method might require further engineering. Relatedly, the probabilistic formulation uses a single latent vector, which is unlikely to capture semantics at multiple levels. Hierarchical or multi-timescale latents, and improved latent identification under local regression objectives, are important directions for future work.

\section{Conclusion}
We introduced a framework for generative modeling of time-dependent data as continuous-time evolution on a learned data manifold. Score-based models as geometric priors define intermediate-time supervision and turn sparse temporal observations into dense local training targets for stochastic vector fields, yielding a simulation-free objective.
We further showed how path-relative transverse corrections provide a principled robustness mechanism with exponential return toward the interpolation path.
Across natural video, PDE-governed dynamics, and molecular trajectories, our experiments illustrate that score-induced geometry can organize continuous-time generation beyond task-specific architectures.

\subsubsection*{Acknowledgments}
This work was supported by funding from the German Federal Ministry for Education and Research
(Bundesministerium f\"ur Bildung und Forschung, BMBF) under grants 16IS24071B and 01IW23005.

\bibliography{iclr2027_conference}
\bibliographystyle{iclr2027_conference}

\appendix

\section*{Overview of the Appendix}
The appendix provides background material, proofs, experimental details, and additional results.
\begin{itemize}
  \item Appendix~\ref{app:related_work} gives an extended discussion of related work.
  \item Appendix~\ref{app:score_manifold_background} reviews the score-induced geometry behind the interpolation prior.
  \item Appendix~\ref{app:proofs} contains the proofs and derivations for all stated results.
  \item Appendix~\ref{app:data_training_details} describes datasets, architectures, and training details for every experiment.
  \item Appendix~\ref{app:additional_results} collects additional results and qualitative comparisons.
\end{itemize}

\section{Background: Extended Related Work}
\label{app:related_work}

In this appendix, we place our method in the broader literature on temporal generative modeling, continuous-time dynamics, score-based geometry, and stability. The main text focuses only on the closest methodological neighbors. Here we provide a wider overview of the surrounding research landscape and situate the components of our approach more explicitly.

\subsection{Discrete-Time Generative Modeling of Temporal Data}
A large body of work models temporal data on a fixed discrete grid. In video generation, this includes recurrent and stochastic predictors that generate future frames autoregressively \citep{denton2018stochastic,villegas2019highfidelity}, as well as more recent diffusion-based and clip-level generators that synthesize multiple frames jointly \citep{ho2022videodiffusion,singer2022makeavideo}. Other works combine diffusion with flexible conditioning or sequential generation over long horizons \citep{harvey2022flexible,liu2025videobiflow}. Despite substantial progress in visual quality and temporal consistency, these methods remain fundamentally tied to the temporal resolution at which data are observed and generated. Intermediate times are typically not modeled as first-class objects, but are instead recovered implicitly through discrete synthesis or simple interpolation.

Our approach departs from this setting by treating temporal generation as a continuous-time dynamics problem rather than repeated synthesis on a prescribed grid. This distinction is central in our setting, since we explicitly care about how the state evolves between observed timestamps and not only about the observations on the training grid.

\subsection{Continuous-Time Sequence and Generative Modeling}
Continuous-time modeling based on learned differential equations provides a natural alternative to fixed-step temporal architectures. Neural ODEs \citep{chen2018neural} introduced the idea of parameterizing dynamics by a continuous vector field and solving for trajectories with an ODE solver. This viewpoint was extended to irregularly sampled sequences by Latent ODEs \citep{rubanova2019latent}, to controlled temporal signals by Neural CDEs \citep{kidger2020neuralcde}, and to generative settings through continuous normalizing flows such as FFJORD \citep{grathwohl2019ffjord} and latent stochastic differential-equation models \citep{li2020latent}. Together, these works establish learned differential equations as a flexible language for modeling time-dependent data in continuous time.

Our method is closely aligned with this line in that it also represents evolution through a continuous-time vector field. The main methodological difference is in how the dynamics are learned: rather than training through explicit solver rollouts and adjoint-based optimization, we use a simulation-free regression objective on local targets induced by score-guided interpolation paths. In this sense, our framework combines the representational advantages of continuous-time models with a training procedure that provides guidance at unobserved intermediates and avoids differentiating through ODE solves.

\subsection{Continuous-Time Video Generation}
Continuous-time generation has been explored explicitly in video through neural ODE-based architectures. Vid-ODE \citep{park2021vidode} uses a latent ODE to model video evolution in continuous time and demonstrates arbitrary-time interpolation and prediction. Related latent differential-equation approaches have also been studied for video generation \citep{gordon2021latentnde}. These models are important predecessors because they show that continuous-time parameterizations can be effective for visual sequence modeling and can naturally support flexible query times.

At the same time, these approaches primarily change the temporal architecture, not the supervision signal between observations. They do not use a learned observation-space prior to guide intermediate states, nor do they explicitly construct manifold-aware interpolation targets between discrete timestamps. Our method addresses precisely this gap by combining continuous-time generation with score-based geometric supervision in observation space---either directly in data space or, as in our KTH experiments, in the latent space of a frozen frame autoencoder.

\subsection{Score-Based Geometry and Manifold-Aware Interpolation}
Recent work has shown that score-based and diffusion models encode more than generative density information: they also capture geometric structure of the underlying data manifold. Riemannian score-based generative modeling and related diffusion formulations connect score functions to manifold geometry and diffusion processes on non-Euclidean spaces \citep{debortoli2022riemannian,huang2022riemannian}. Subsequent work showed that diffusion models encode intrinsic manifold dimension \citep{stanczuk2024intrinsic}, and that the Jacobian of the score can be used to construct a Riemannian metric and tangent-normal decomposition in data space \citep{saito2025tangential}. These results provide a mathematical basis for treating score models as learned geometric priors.

A particularly relevant line of work studies interpolation under this induced geometry. Metric Flow Matching \citep{kapusniak2024metricflowmatching} amortizes geodesic-like interpolation by learning a neural correction to linear paths under a Riemannian metric. Our method directly builds on this perspective. Concretely, we combine the score-based Riemannian metric introduced by \citet{saito2025tangential}, which has been shown to produce high-quality manifold-aware image interpolations, with the amortized neural interpolator of \citet{kapusniak2024metricflowmatching}. The resulting interpolation prior is then used not as an end in itself, but as supervision for learning conditional continuous-time generative dynamics.

\subsection{Flow Matching and Simulation-Free Generative Flows}
Flow Matching \citep{lipman2023flowmatching} replaces solver-based generative training by a regression objective on local vector-field targets defined along an interpolation path between source and target distributions. Rectified Flow \citep{liu2023rectifiedflow} further emphasizes straightening transport trajectories for efficient generation, while Flow Matching on General Geometries \citep{chen2024riemannianflowmatching} extends these ideas beyond Euclidean space. These methods show that powerful generative flows can be learned from local regression targets without backpropagating through an ODE solver.

Our method inherits this simulation-free viewpoint, but uses it in a different role. Rather than learning a transport map between static source and target distributions, we learn latent-conditioned temporal dynamics from local targets induced by score-based interpolation priors between observed states. In this sense, our work combines the regression-based efficiency of flow-matching-style training with a temporal generative setting and a learned geometric prior for intermediate evolution.

\subsection{Stochastic Temporal Prediction and Multimodal Futures}
Many temporal prediction problems are inherently multimodal, and stochastic predictors have long been used to address this. In video prediction, learned-prior latent-variable models such as SVG \citep{denton2018stochastic} represent uncertainty over future frames through latent stochastic variables. In continuous time, uncertainty can be modeled through latent ODEs with probabilistic latent states \citep{rubanova2019latent} or through latent SDEs \citep{li2020latent}, which capture stochastic latent evolution more directly.

These approaches are closely related to our probabilistic formulation in that they recognize the need to model distributions over possible futures rather than single deterministic trajectories. However, they do not combine this stochasticity with manifold-aware supervision of intermediate observation-space states. Our method fills this gap by coupling latent stochasticity to a score-guided continuous-time vector field, so that multimodal future evolution is modeled together with a geometry-aware prior on intermediate states.

\subsection{Stability, Contraction, and Robust Rollouts}
A separate line of work studies robustness and numerical stability in learned continuous-time systems. For Neural ODEs, Jacobian and kinetic regularization have been used to control stiffness and improve numerical behavior \citep{finlay2020train}, while temporal regularization schemes such as STEER simplify the learned dynamics and reduce solver complexity \citep{ghosh2020steer}. More recently, contractivity-promoting penalties have been proposed to improve robustness to perturbations by encouraging local contraction of the learned flow \citep{xu2023robust}.

At a more theoretical level, contraction analysis provides a control-theoretic language for reasoning about incremental stability of nonlinear dynamical systems \citep{slotine1998contraction}, while control contraction metrics extend these ideas toward trajectory tracking and nonlinear control design \citep{manchester2017control}. Our robustness objective is most closely related to this family of ideas. However, unlike prior regularizers that primarily constrain infinitesimal flow properties indirectly, our method supervises the desired corrective direction relative to a score-guided interpolation path. This yields a path-relative notion of stability that is tailored to the learned manifold geometry rather than imposed purely as a generic smoothness or contraction prior.

\subsection{Scientific Dynamical Data}
Continuous-time and generative models are increasingly relevant in scientific settings, where data often arise from dynamical systems and intermediate states are physically or semantically meaningful. For PDE-governed fields, operator-learning methods such as the Fourier Neural Operator \citep{li2020fourier} provide powerful predictive baselines, while generative transport methods such as Flow Matching \citep{lipman2023flowmatching} offer a general framework for learning dynamics through local vector fields. In molecular systems, score-based generative approaches have been used to model molecular dynamics and conformation generation~\citep{wu2023diffmd}.

These application areas motivate our experiments, but they are not the primary source of our methodological novelty. Our contribution is not a domain-specific scientific model, but rather a general framework for continuous-time generative dynamics. The scientific experiments therefore serve to highlight a regime where this formulation is especially meaningful, namely when the plausibility of intermediate states might matter as much as endpoint accuracy.

\section{Background: Score-induced Manifold}
\label{app:score_manifold_background}

This section summarizes the geometric background used to construct the interpolation prior.
The key idea is to use a pretrained score-based generative model as a source of local manifold geometry.
The score model provides a vector field whose Jacobian encodes tangent and normal directions of the learned data manifold. This induces a Riemannian metric under which geodesics preferentially move tangentially to the manifold~\citep{saito2025tangential}.
Following the metric-flow-matching perspective~\citep{kapusniak2024metricflowmatching}, we avoid solving geodesic boundary-value problems directly and instead train an amortized interpolator by minimizing the corresponding kinetic energy.

\subsection{Score-based Generative Models}

We briefly recall the flow-matching view of score-based generative modeling and fix the notation used to define the score-induced metric.
Throughout this appendix, we use \(r\in[0,1]\) for the denoising time of a pretrained generative model, reserving \(t\) for physical time in the temporal data and \(\tau\) for interpolation paths.
The convention is that \(r=0\) corresponds to a simple noise distribution and \(r=1\) corresponds to clean data.

Let \(p_0\) be a simple base distribution on \(\mathbb R^d\), typically \(\mathcal N(0,I)\), and let \(p_1=p_{\mathrm{data}}\).
Flow matching \citep{lipman2023flowmatching} constructs a probability path \((p_r)_{r\in[0,1]}\) connecting \(p_0\) to \(p_1\), together with a velocity field \(u_r:\mathbb R^d\to\mathbb R^d\) satisfying the continuity equation
\[
\partial_r p_r(x)
+
\nabla_x\cdot\bigl(p_r(x)u_r(x)\bigr)
=
0.
\]
A model \(u_\psi(x,r)\) is trained by regressing to conditional velocities along a prescribed conditional path.
In general, one may choose a Gaussian conditional path
\[
x_r=\alpha_r x_1+\beta_r x_0,
\qquad
x_0\sim p_0,\quad x_1\sim p_1,
\]
with differentiable schedules \(\alpha_r,\beta_r\).
In the remainder of this section, we use the linear path
\[
x_r
=
r x_1+(1-r)x_0,
\qquad
x_0\sim\mathcal N(0,I),
\]
for which the conditional velocity is simply
\[
\dot x_r
=
x_1-x_0.
\]
The flow-matching objective is therefore
\[
\mathcal L_{\mathrm{FM}}(\psi)
=
\mathbb E_{r,x_0,x_1}
\left[
\left\|
u_\psi(x_r,r)-(x_1-x_0)
\right\|_2^2
\right],
\]
and the population minimizer satisfies
\[
u_r(x)
=
\mathbb E[x_1-x_0\mid x_r=x].
\]

For this linear Gaussian path, the marginal velocity field determines the score of \(p_r\).
Since
\[
x_r
=
r x_1+(1-r)x_0,
\qquad
x_0\sim\mathcal N(0,I),
\]
the conditional distribution of \(x_r\) given \(x_1\) is
\[
p_r(x\mid x_1)
=
\mathcal N\bigl(x;rx_1,(1-r)^2I\bigr).
\]
Its conditional score is
\[
\nabla_x\log p_r(x\mid x_1)
=
-\frac{x-rx_1}{(1-r)^2}
=
-\frac{x_0}{1-r}.
\]
By Fisher's identity,
\[
s_r(x)
:=
\nabla_x\log p_r(x)
=
\mathbb E\!\left[
\nabla_x\log p_r(x\mid x_1)
\mid x_r=x
\right]
=
-\frac{1}{1-r}
\mathbb E[x_0\mid x_r=x].
\]
On the other hand,
\[
u_r(x)
=
\mathbb E[x_1-x_0\mid x_r=x].
\]
Using \(x=rx_1+(1-r)x_0\), we have
\[
x_1-x_0
=
\frac{x-x_0}{r}-x_0
=
\frac{x}{r}-\frac{x_0}{r},
\]
and therefore
\[
u_r(x)
=
\frac{x}{r}
-
\frac{1}{r}\mathbb E[x_0\mid x_r=x].
\]
Substituting
\[
\mathbb E[x_0\mid x_r=x]
=
-(1-r)s_r(x)
\]
gives
\[
u_r(x)
=
\frac{x}{r}
+
\frac{1-r}{r}s_r(x).
\]
Equivalently, for \(r\in(0,1)\),
\[
s_r(x)
=
\frac{r}{1-r}u_r(x)
-
\frac{1}{1-r}x.
\]
Thus the score can be recovered from the flow-matching vector field by a known time-dependent affine transformation:
\[
s_\psi(x,r)
:=
\frac{r}{1-r}u_\psi(x,r)
-
\frac{1}{1-r}x.
\]

This relation is the linear-path analogue of the usual score identities in diffusion models.
For example, if a diffusion model uses the variance-preserving noising process \citep{song2021scorebasedsde}
\[
x_r=\alpha_r x_1+\beta_r\epsilon,
\qquad
\epsilon\sim\mathcal N(0,I),
\]
then
\[
s_r(x)
=
-\frac{1}{\beta_r}\mathbb E[\epsilon\mid x_r=x].
\]
Thus a noise-prediction model \(\epsilon_\psi(x,r)\) defines the score estimate
\[
s_\psi(x,r)
=
-\frac{1}{\beta_r}\epsilon_\psi(x,r),
\]
while a denoising model \(\hat x_{1,\psi}(x,r)\) defines
\[
s_\psi(x,r)
=
\frac{\alpha_r\hat x_{1,\psi}(x,r)-x}{\beta_r^2}.
\]
In all cases, the pretrained generative model provides a score estimate \(s_\psi(x,r)\) and hence a score Jacobian
\[
J_\psi(x,r)
:=
D_x s_\psi(x,r).
\]

In this paper, the pretrained model is used only as a geometric prior.
The score-induced geometry below is defined through \(J_\psi(x,r)\), not through samples from the pretrained model itself.
A technical subtlety is that the flow-matching transport is trained from \(r=0\) to \(r=1\), where \(r=1\) corresponds to clean data.
For the linear path, the score identity contains the factor \((1-r)^{-1}\), and common diffusion parameterizations have the analogous vanishing-noise singularity at the clean endpoint.
Thus the score and its Jacobian are not numerically reliable exactly at \(r=1\).

Whenever we need score-based geometric information for a clean state \(x\), we first move it to a denoising time
\[
r_\ell=1-\delta,
\qquad
0<\delta<1.
\]
This lifting is defined by the inverse flow map of the pretrained model:
\[
\operatorname{Lift}_{r_\ell}(x)
:=
\Phi^\psi_{1\to r_\ell}(x),
\]
where \(\Phi^\psi_{a\to b}\) denotes the flow map of
\[
\frac{d}{dr}X_r=u_\psi(X_r,r).
\]
We then compute scores and score Jacobians at the lifted point
\[
\bar x
=
\operatorname{Lift}_{r_\ell}(x),
\qquad
s_\psi(\bar x,r_\ell),
\qquad
J_\psi(\bar x,r_\ell).
\]
After geometric operations have been performed at denoising time \(r_\ell\), we map back to clean data space by applying the forward flow
\[
\operatorname{Denoise}_{r_\ell\to1}(\bar x)
:=
\Phi^\psi_{r_\ell\to1}(\bar x).
\]
By construction, lifting and denoising are inverse operations along the learned flow:
\[
\operatorname{Denoise}_{r_\ell\to1}
\bigl(
\operatorname{Lift}_{r_\ell}(x)
\bigr)
=
\Phi^\psi_{r_\ell\to1}\!\circ\Phi^\psi_{1\to r_\ell}(x)
=
x,
\]
up to numerical integration error. Thus score-induced geometric quantities are evaluated on a slightly noisy point on the same learned flow trajectory, where the score model is well-defined and numerically stable. Note as well that lifting and denoising operations are differentiable, since the underlying score-based generative model is typically differentiable.

\subsection{Riemannian Geometry}

A Riemannian metric on an open set \(U\subseteq\mathbb R^d\) assigns to each \(x\in U\) a symmetric positive definite matrix
\[
G(x)\in\mathbb R^{d\times d}.
\]
It defines a position-dependent inner product
\[
\langle u,v\rangle_{G(x)}
:=
u^\top G(x)v,
\qquad
u,v\in T_xU\simeq\mathbb R^d,
\]
and corresponding norm
\[
\|u\|_{G(x)}^2 := u^\top G(x)u.
\]

For a smooth curve \(\gamma:[0,1]\to U\), its Riemannian length and energy are
\[
\mathrm{Len}_G(\gamma)
=
\int_0^1
\|\dot\gamma(t)\|_{G(\gamma(t))}\,dt,
\]
and
\[
\mathcal E_G(\gamma)
=
\frac12
\int_0^1
\|\dot\gamma(t)\|_{G(\gamma(t))}^2\,dt
=
\frac12
\int_0^1
\dot\gamma(t)^\top G(\gamma(t))\dot\gamma(t)\,dt.
\]
A geodesic between endpoints \(x_0,x_1\in U\) is a critical point of this energy under the boundary constraints
\[
\gamma(0)=x_0,
\qquad
\gamma(1)=x_1.
\]
Constant-speed length-minimizing geodesics are minimizers of \(\mathcal E_G\).

\subsection{Score-induced Riemannian Metric}

Following~\citet{saito2025tangential}, we define a metric from the Jacobian of the score field at a fixed denoising time.
Let \(r_m\in(0,1)\) denote the denoising time at which the metric is evaluated.
Using the score estimate \(s_\psi(\cdot,r_m)\) from the pretrained generative model, define
\[
J_\psi(x,r_m)
:=
D_x s_\psi(x,r_m).
\]
The score-induced metric is
\[
G_\psi(x,r_m)
:=
J_\psi(x,r_m)^\top J_\psi(x,r_m).
\]
For a tangent vector \(u\in\mathbb R^d\), the induced quadratic form is
\[
\|u\|_{G_\psi(x,r_m)}^2
=
u^\top G_\psi(x,r_m)u
=
\|J_\psi(x,r_m)u\|_2^2.
\]

Since \(G_\psi(x,r_m)\) is positive semidefinite by construction, it need not be strictly positive definite.
If a nondegenerate Riemannian metric is required, one may use the Tikhonov-regularized metric
\[
G_{\psi,\varepsilon}(x,r_m)
:=
G_\psi(x,r_m)+\varepsilon I,
\qquad
\varepsilon>0.
\]
Following~\citet{saito2025tangential}, we found this regularization to have little practical effect and therefore use the unregularized score-induced metric in our experiments.

The geometric interpretation is that the score varies slowly along directions tangent to the learned data manifold and rapidly along directions normal to it.
Thus, directions \(u\) for which \(\|J_\psi(x,r_m)u\|_2\) is small are treated as approximately tangent, while directions with large \(\|J_\psi(x,r_m)u\|_2\) are treated as approximately normal.
The metric therefore assigns low cost to tangent motion and high cost to normal motion, encouraging paths to remain tangent to the score-induced data manifold.

Since clean data points lie at \(r=1\), whereas the metric is evaluated at \(r_m<1\), we first lift a clean point \(x\) to denoising time \(r_m\) using the learned inverse flow,
\[
\bar x
=
\operatorname{Lift}_{r_m}(x)
=
\Phi^\psi_{1\to r_m}(x).
\]
All geometric quantities associated with a clean state \(x\) are therefore evaluated at its lifted counterpart~\(\bar x\):
\[
G_\psi^{\mathrm{clean}}(x)
:=
G_\psi(\bar x,r_m)
=
G_\psi\!\left(\Phi^\psi_{1\to r_m}(x),r_m\right).
\]
The quantity \(G_\psi^{\mathrm{clean}}(x)=G_\psi(\operatorname{Lift}_{r_m}(x),r_m)\) is therefore not itself a metric tensor on clean data space. A metric on clean data space could instead be induced by pulling \(G_\psi\) back through the lifting map.

\subsection{Geodesics on the Score Manifold}

Given two clean endpoint states \(x_0,x_1\in\mathbb R^d\), we define their score-induced interpolation by performing the geometric construction at the fixed denoising time \(r_m<1\).
We therefore first lift the endpoints using the pretrained flow,
\[
\bar x_0=\operatorname{Lift}_{r_m}(x_0),
\qquad
\bar x_1=\operatorname{Lift}_{r_m}(x_1),
\]
and perform the geometric computation in the lifted space.

For a lifted path \((\bar\gamma_t)_{t\in[0,1]}\), define the pointwise metric energy density
\[
e_G(\bar\gamma_t,\dot{\bar\gamma}_t)
:=
\frac12
\|\dot{\bar\gamma}_t\|_{G(\bar\gamma_t)}^2
=
\frac12
\|J_{\bar\gamma_t}\dot{\bar\gamma}_t\|_2^2,
\]
where \(J_{\bar\gamma_t}=D_xs_\psi(x,r_m)|_{x=\bar\gamma_t}\).
The corresponding metric energy is
\[
\mathcal E_G(\bar\gamma)
:=
\int_0^1 e_G(\bar\gamma_t,\dot{\bar\gamma}_t)\,dt
=
\frac12
\int_0^1
\|\dot{\bar\gamma}_t\|_{G(\bar\gamma_t)}^2
\,dt.
\]
Equivalently, using \(G(\bar\gamma_t)=J_{\bar\gamma_t}^{\top}J_{\bar\gamma_t}\),
\[
\mathcal E_G(\bar\gamma)
=
\frac12
\int_0^1
\|J_{\bar\gamma_t}\dot{\bar\gamma}_t\|_2^2
\,dt.
\]

The score-induced geodesic between the lifted endpoints \(\bar x_0\) and \(\bar x_1\) is the path
\[
\bar\gamma^\star
\in
\underset{\bar\gamma}{\mathrm{argmin}}\
\mathcal E_G(\bar\gamma)
\quad
\text{subject to}
\quad
\bar\gamma_0=\bar x_0,\quad
\bar\gamma_1=\bar x_1.
\]
This energy formulation is the squared counterpart of the metric length objective in the main text.
For constant-speed minimizers, minimizing length and minimizing energy yield the same geodesic path, up to parameterization.

Since normal motion is expensive under \(G\), minimizing \(\mathcal E_G\) encourages \(\dot{\bar\gamma}_t\) to remain tangent to the score-induced data manifold.
Thus the induced interpolation tends to move along manifold directions rather than along the Euclidean straight line between endpoints.
Solving this boundary-value problem directly for every endpoint pair would be expensive, so we instead train an amortized interpolator that approximates low-energy score-induced geodesics and can later be reused as an interpolation prior.

\subsection{Training of the Interpolator}
\label{app:interp_training}

Following the amortized-geodesic idea of Metric Flow Matching~\citep{kapusniak2024metricflowmatching}, we learn a neural bridging correction \(\varphi_\omega\) that augments linear interpolation in lifted space.
Given clean endpoints \(x_0,x_1\), define
\[
\bar x_0=\operatorname{Lift}_{r_m}(x_0),
\qquad
\bar x_1=\operatorname{Lift}_{r_m}(x_1).
\]
The lifted interpolation path is parameterized as
\[
\bar\gamma_t^\omega(x_0,x_1;\alpha)
=
(1-t)\bar x_0+t\bar x_1
+
\alpha\,t(1-t)\,
\varphi_\omega(\bar x_0,\bar x_1,t),
\qquad
t\in[0,1],
\]
where \(\alpha\in[0,1]\) controls the strength of the learned correction.
The factor \(t(1-t)\) enforces the endpoint constraints
\[
\bar\gamma_0^\omega=\bar x_0,
\qquad
\bar\gamma_1^\omega=\bar x_1.
\]
When \(\alpha=0\), the path is lifted linear interpolation.
During training, we gradually ramp \(\alpha\) from \(0\) to \(1\), so optimization begins from the linear baseline and progressively introduces the learned score-geometric correction.

For endpoint pairs \((x_0,x_1)\sim\mu_{\mathrm{pair}}\), we train \(\varphi_\omega\) by minimizing the expected pointwise metric energy at randomly sampled interpolation times:
\[
\mathcal L_{\mathrm{geo}}(\omega)
=
\mathbb E_{(x_0,x_1)\sim\mu_{\mathrm{pair}}}
\mathbb E_{t\sim\mathcal U[0,1]}
\left[
e_G
\left(
\bar\gamma_t^\omega,
\dot{\bar\gamma}_t^\omega
\right)
\right].
\]
Equivalently,
\[
\mathcal L_{\mathrm{geo}}(\omega)
=
\mathbb E_{(x_0,x_1),t}
\left[
\frac12
\left\|
J_{\bar\gamma_t^\omega}
\dot{\bar\gamma}_t^\omega
\right\|_2^2
\right],
\]
where
\[
J_{\bar\gamma_t^\omega}
=
D_xs_\psi(x,r_m)|_{x=\bar\gamma_t^\omega}.
\]
The tangent \(\dot{\bar\gamma}_t^\omega=\partial_t\bar\gamma_t^\omega\) and the metric-vector product \(J_{\bar\gamma_t^\omega}\dot{\bar\gamma}_t^\omega\) are computed using automatic differentiation and Jacobian-vector products, avoiding explicit construction of the score Jacobian.

\begin{algorithm}[t]
\caption{Training the score-induced interpolator}
\label{alg:interpolator_training}
\begin{algorithmic}[1]
\Require Frozen pretrained flow/score model \(u_\psi,s_\psi\); endpoint-pair distribution \(\mu_{\mathrm{pair}}\); metric time \(r_m<1\); ramp schedule \(\alpha_n\in[0,1]\)
\For{\(n=1,\ldots,N\)}
    \State Sample endpoint pairs \((x_0,x_1)\sim\mu_{\mathrm{pair}}\)
    \State Lift endpoints: \(\bar x_0\gets \Phi^\psi_{1\to r_m}(x_0)\), \(\bar x_1\gets \Phi^\psi_{1\to r_m}(x_1)\)
    \State Sample interpolation times \(t\sim\mathcal U[0,1]\)
    \State Set \(\alpha\gets\alpha_n\)
    \State Evaluate the lifted path
    \[
    \bar\gamma_t^\omega
    \gets
    (1-t)\bar x_0+t\bar x_1
    +
    \alpha t(1-t)\varphi_\omega(\bar x_0,\bar x_1,t)
    \]
    \State Compute the lifted tangent \(\dot{\bar\gamma}_t^\omega=\partial_t\bar\gamma_t^\omega\) by JVP
    \State Compute the score-Jacobian product \(J_{\bar\gamma_t^\omega}\dot{\bar\gamma}_t^\omega\) by JVP through \(s_\psi(\cdot,r_m)\)
    \State Minimize \(\frac12\|J_{\bar\gamma_t^\omega}\dot{\bar\gamma}_t^\omega\|_2^2\) with respect to \(\omega\)
\EndFor
\end{algorithmic}
\end{algorithm}

After training, we fix \(\alpha=1\) and freeze the interpolator.
For clean endpoints \(x_0,x_1\), the lifted interpolation path is
\[
\bar\gamma_t^\omega(x_0,x_1)
=
(1-t)\bar x_0+t\bar x_1
+
t(1-t)\varphi_\omega(\bar x_0,\bar x_1,t).
\]
The corresponding clean interpolation path is obtained by denoising each lifted point,
\[
\gamma_t^\omega(x_0,x_1)
=
\operatorname{Denoise}_{r_m\to1}
\bigl(
\bar\gamma_t^\omega(x_0,x_1)
\bigr)
=
\Phi^\psi_{r_m\to1}
\bigl(
\bar\gamma_t^\omega(x_0,x_1)
\bigr).
\]

The tangent target used in the main objective is the derivative of the clean path,
\[
\dot\gamma_t^\omega
=
D\Phi^\psi_{r_m\to1}
\bigl(
\bar\gamma_t^\omega
\bigr)
\dot{\bar\gamma}_t^\omega.
\]
We compute this derivative by a Jacobian-vector product through the denoising flow.
This requires differentiating through the interpolator and through the lifting/denoising operations, which are differentiable because they are defined by the pretrained neural flow.

For finite-step targets, we evaluate the clean path at two nearby interpolation times and form the secant
\[
\Delta_h\gamma_t^\omega
=
\frac{
\gamma_{t+h}^\omega-\gamma_t^\omega
}{h},
\qquad h>0,
\]
with tangent limit
\[
\Delta_0\gamma_t^\omega
=
\dot\gamma_t^\omega.
\]
If the original physical interval has length \(\Delta t_k=t_{k+1}-t_k\), normalized and physical velocities differ by the factor \((\Delta t_k)^{-1}\).
In the main text, this rescaling is absorbed into the definition of the local target \(\Delta_h X_t\).

\subsection{Uniform Metric Speed and Phase of the Interpolator}
\label{app:uniform_speed}

This subsection states and proves the geometry--phase factorization of the interpolation objective asserted in the main text. We work with a Riemannian metric \(G\) as in the preceding subsections. The results apply verbatim to the lifted interpolation family of Appendix~\ref{app:interp_training} with the score-induced metric evaluated along the lifted path, and we suppress the lift notation for readability.

Let \(\gamma:[0,1]\to\mathbb R^d\) be piecewise \(C^1\) and regular, i.e. \(\dot\gamma_t\neq0\) for almost every \(t\), and recall the metric length \(\mathrm{Len}_G(\gamma)\) and the metric energy \(\mathcal E_G(\gamma)\) defined above. Define the arc-length function and the \emph{phase} of \(\gamma\) by
\[
s_G(t)
:=
\int_0^t\|\dot\gamma_u\|_{G(\gamma_u)}\,du,
\qquad
\beta_\gamma(t)
:=
\frac{s_G(t)}{\mathrm{Len}_G(\gamma)}
\in[0,1].
\]
The phase is absolutely continuous and nondecreasing with \(\beta_\gamma(0)=0\) and \(\beta_\gamma(1)=1\), and
\[
\gamma=\tilde\gamma\circ\beta_\gamma,
\]
where \(\tilde\gamma\) denotes the constant-speed reparameterization of the same geometric image. We say that \(\gamma\) has \emph{uniform metric speed} if
\[
s_G(t)=t\,\mathrm{Len}_G(\gamma)
\qquad
\text{for all }t\in[0,1],
\]
i.e. if equal parameter increments sweep out equal metric arc length. Under the score-induced metric, uniform metric speed means uniform progress as measured by the learned data-manifold geometry. The Euclidean speed of the same path, however, is in general non-uniform and slows precisely where the metric assigns high cost.

\begin{proposition}[The energy objective factorizes into geometry and phase]
\label{prop:uniform_speed}
For every piecewise-\(C^1\) regular path \(\gamma\),
\begin{equation}
\label{eq:energy_phase_factorization}
\mathcal E_G(\gamma)
=
\frac12\,\mathrm{Len}_G(\gamma)^2
\int_0^1\dot\beta_\gamma(t)^2\,dt,
\qquad
\int_0^1\dot\beta_\gamma(t)^2\,dt\ge1,
\end{equation}
with equality in the second relation if and only if \(\|\dot\gamma_t\|_{G(\gamma_t)}=\mathrm{Len}_G(\gamma)\) for almost every \(t\), i.e. if and only if \(\gamma\) has uniform metric speed. Consequently, \(\mathcal E_G(\gamma)\ge\frac12\mathrm{Len}_G(\gamma)^2\), and over any family of paths closed under monotone reparameterization, minimizing the energy is equivalent to minimizing the length and selecting the uniform-speed parameterization of a minimizing image. Moreover, if \(\gamma\) has uniform metric speed with \(L:=\mathrm{Len}_G(\gamma)\), then for every time-warp \(\beta\in C^1([0,1])\) with \(\beta(0)=0\), \(\beta(1)=1\), and \(\dot\beta>0\), the reparameterized path \(\gamma\circ\beta\) has the same image, the same length \(L\), metric speed \(\dot\beta(t)L\), and energy \(\mathcal E_G(\gamma\circ\beta)=\frac12L^2\int_0^1\dot\beta(t)^2\,dt\). Conversely, every continuous speed profile \(\varsigma:[0,1]\to\mathbb R_{>0}\) with \(\int_0^1\varsigma(t)\,dt=L\) can be realized by a time-warp \(\beta(t)=\frac1L\int_0^t\varsigma(u)\,du\).
\end{proposition}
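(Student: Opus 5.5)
The factorization is essentially the chain rule for the arc-length function combined with Cauchy--Schwarz; I would organize the argument in three short steps.

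\emph{Step 1: differentiate the phase.} Write \(L:=\mathrm{Len}_G(\gamma)\). I assume \(L>0\), so that \(\beta_\gamma\) is well defined; this follows from regularity because \(G\) is positive definite. Since \(\gamma\) is piecewise \(C^1\) and \(G\) is continuous, the integrand \(u\mapsto\|\dot\gamma_u\|_{G(\gamma_u)}\) is piecewise continuous. Hence \(s_G\) is absolutely continuous, and \(\dot s_G(t)=\|\dot\gamma_t\|_{G(\gamma_t)}\) for almost every \(t\), indeed at every point away from the finitely many breakpoints. Therefore \(\dot\beta_\gamma(t)=\|\dot\gamma_t\|_{G(\gamma_t)}/L\) almost everywhere. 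Squaring and integrating gives
\[
\mathcal E_G(\gamma)=\tfrac12\int_0^1 L^2\dot\beta_\gamma(t)^2\,dt,
\]
which is the first identity in \eqref{eq:energy_phase_factorization}. Note that this step does not use the representation \(\gamma=\tilde\gamma\circ\beta_\gamma\) at all.

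\emph{Step 2: the lower bound and its equality case.} Apply Cauchy--Schwarz on \([0,1]\):
\[
1=\beta_\gamma(1)-\beta_\gamma(0)=\int_0^1\dot\beta_\gamma\,dt\le\Bigl(\int_0^1\dot\beta_\gamma^2\,dt\Bigr)^{1/2}.
\]
Equality holds if and only if \(\dot\beta_\gamma\) is almost everywhere constant. The integral of \(\dot\beta_\gamma\) equals \(1\), so that constant must be \(1\). Equivalently, \(\|\dot\gamma_t\|_{G(\gamma_t)}=L\) almost everywhere, which after integration is \(s_G(t)=tL\) for all \(t\). The bound \(\mathcal E_G\ge\frac12 L^2\) follows immediately.

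For the variational consequence, fix a family closed under monotone reparameterization. Length is invariant under such reparameterization. Each path can be replaced by its uniform-speed version \(\tilde\gamma\) from the same family, which has energy exactly \(\frac12 L^2\). Therefore
\[
\inf\mathcal E_G=\tfrac12(\inf \mathrm{Len}_G)^2 .
\]
An energy minimizer must attain both the minimal length and the equality case of Step 2. This step is the one place where I would be careful about what \enquote{closed under reparameterization} requires. Specifically, the family must contain the constant-speed reparameterization \(\tilde\gamma\), and \(\tilde\gamma\) must be well defined. For regular paths \(s_G\) is strictly increasing: it can only be flat on intervals where \(\dot\gamma=0\), and regularity rules out such intervals of positive length. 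So \(\tilde\gamma=\gamma\circ s_G^{-1}(L\,\cdot)\) makes sense.

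\emph{Step 3: time-warps.} Let \(\gamma\) have uniform speed and let \(\beta\) be an admissible warp. By the chain rule,
\[
\tfrac{d}{dt}(\gamma\circ\beta)=\dot\gamma_{\beta(t)}\dot\beta(t).
\]
Since \(\dot\beta>0\), the metric speed is \(\dot\beta(t)\|\dot\gamma_{\beta(t)}\|_{G}=\dot\beta(t)L\). Integrating gives length \(L\int_0^1\dot\beta\,dt=L\). The image is unchanged because \(\beta\) is a bijection of \([0,1]\). Squaring the speed and integrating gives the stated energy. For the converse, let \(\varsigma\) be continuous and positive with \(\int_0^1\varsigma=L\), and define \(\beta(t)=\frac1L\int_0^t\varsigma\). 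Then \(\beta\) is \(C^1\), satisfies \(\dot\beta=\varsigma/L>0\), \(\beta(0)=0\) and \(\beta(1)=1\). By the speed formula just proved, \(\gamma\circ\beta\) has metric speed \(\varsigma\).

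No step is a genuine obstacle; the only delicacy is the almost-everywhere bookkeeping at the breakpoints of piecewise-\(C^1\) paths.
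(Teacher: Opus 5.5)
Your proof is correct and follows essentially the same route as the paper: differentiate the arc length to get \(\dot\beta_\gamma=\|\dot\gamma_t\|_{G(\gamma_t)}/\mathrm{Len}_G(\gamma)\) almost everywhere, bound the phase factor below by one, and handle time-warps by the chain rule together with the explicit \(\beta(t)=\frac1L\int_0^t\varsigma(u)\,du\). The paper gets the lower bound and its equality case by expanding \(\int_0^1(\dot\beta_\gamma-1)^2\,dt=\int_0^1\dot\beta_\gamma^2\,dt-1\ge0\) instead of using Cauchy--Schwarz, which is a cosmetic difference; your explicit check that \(\inf\mathcal E_G=\frac12(\inf\mathrm{Len}_G)^2\), using the well-defined uniform-speed representative \(\tilde\gamma\), is slightly more careful than the paper's one-line treatment of the variational consequence.
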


\begin{proof}
By the definition of the arc-length function, \(\dot s_G(t)=\|\dot\gamma_t\|_{G(\gamma_t)}\) for almost every \(t\), and hence
\[
\dot\beta_\gamma(t)
=
\frac{\|\dot\gamma_t\|_{G(\gamma_t)}}{\mathrm{Len}_G(\gamma)}
\qquad
\text{for almost every }t\in[0,1].
\]
Substituting this identity into the definition of the energy gives
\[
\mathcal E_G(\gamma)
=
\frac12\int_0^1\|\dot\gamma_t\|_{G(\gamma_t)}^2\,dt
=
\frac12\,\mathrm{Len}_G(\gamma)^2\int_0^1\dot\beta_\gamma(t)^2\,dt,
\]
which is the identity in \eqref{eq:energy_phase_factorization}.

Since \(\beta_\gamma(1)-\beta_\gamma(0)=1\), the phase rate \(\dot\beta_\gamma\) has mean one on \([0,1]\), and expanding its squared deviation from that mean gives
\[
\int_0^1\bigl(\dot\beta_\gamma(t)-1\bigr)^2dt
=
\int_0^1\dot\beta_\gamma(t)^2\,dt
-2\underbrace{\int_0^1\dot\beta_\gamma(t)\,dt}_{=1}
+1
=
\int_0^1\dot\beta_\gamma(t)^2\,dt-1 .
\]
Since the left-hand side integrates a nonnegative function, this implies \(\int_0^1\dot\beta_\gamma(t)^2\,dt\ge1\).
Therefore, equality \(\int_0^1\dot\beta_\gamma(t)^2\,dt=1\) forces \((\dot\beta_\gamma-1)^2=0\), hence \(\dot\beta_\gamma=1\), almost everywhere.

Intuitively, the normalization means the path must cover its entire metric length in unit time no matter how it is parameterized, so every stretch traversed faster than average is offset by one traversed slower. Squaring charges more for the fast stretch than the slow one refunds, and only the perfectly even rate \(\dot\beta_\gamma\equiv1\) avoids the surcharge.

By the identity for \(\dot\beta_\gamma\) above, this holds if and only if \(\|\dot\gamma_t\|_{G(\gamma_t)}=\mathrm{Len}_G(\gamma)\) for almost every \(t\), which is uniform metric speed. Combining both relations in \eqref{eq:energy_phase_factorization} gives the energy lower bound \(\mathcal E_G(\gamma)\ge\frac12\mathrm{Len}_G(\gamma)^2\).

The length is invariant under monotone reparameterization by the substitution rule, while the phase factor \(\int_0^1\dot\beta_\gamma(t)^2\,dt\) attains its minimum value \(1\) exactly at uniform speed. Hence, within any family of paths closed under monotone reparameterization, minimizing \(\mathcal E_G\) minimizes \(\mathrm{Len}_G\) over the geometric images realized by the family and selects the uniform-speed parameterization of a minimizing image.

It remains to prove the statements about time-warps. Let \(\gamma\) have uniform metric speed and let \(\beta\) be an admissible time-warp. Since \(\beta\) is a bijection of \([0,1]\), the path \(\gamma\circ\beta\) has the same image as \(\gamma\). By the chain rule and uniform speed,
\[
\Bigl\|\frac{d}{dt}(\gamma\circ\beta)_t\Bigr\|_{G((\gamma\circ\beta)_t)}
=
\dot\beta(t)\,\|\dot\gamma_{\beta(t)}\|_{G(\gamma_{\beta(t)})}
=
\dot\beta(t)\,L .
\]
Integrating gives
\[
\mathrm{Len}_G(\gamma\circ\beta)
=
\int_0^1\dot\beta(t)\,L\,dt
=
L,
\qquad
\mathcal E_G(\gamma\circ\beta)
=
\frac12\int_0^1\dot\beta(t)^2L^2\,dt .
\]
Finally, fix a continuous speed profile \(\varsigma>0\) with \(\int_0^1\varsigma(t)\,dt=L\). The requirement \(\dot\beta(t)\,L=\varsigma(t)\) together with \(\beta(0)=0\) determines
\[
\beta(t)
=
\frac1L\int_0^t\varsigma(u)\,du
\]
uniquely. This \(\beta\) is \(C^1\) with \(\dot\beta=\varsigma/L>0\) and satisfies \(\beta(1)=\frac1L\int_0^1\varsigma(u)\,du=1\), hence it is an admissible time-warp, and it is the unique one realizing the profile \(\varsigma\).
\end{proof}

The interpolation objective of Appendix~\ref{app:interp_training} is the expectation of \(\mathcal E_G\) over endpoint pairs, since sampling \(t\sim\mathcal U[0,1]\) gives an unbiased single-sample estimator of the energy integral. Proposition~\ref{prop:uniform_speed} therefore applies path-wise: training drives each interpolation toward a shortest admissible path under \(G\) \emph{and}, independently, toward the uniform-speed traversal of that path. Because the correction \(t(1-t)\varphi_\omega(\bar x_0,\bar x_1,t)\) depends explicitly on \(t\), monotone reparameterizations of a fixed admissible image are realizable within the interpolation family, so this selection is effected by the objective rather than assumed. For the flat metric \(G\equiv I\), the unique minimizer with fixed endpoints is the straight linear interpolation traversed at constant Euclidean speed. The trained interpolator thus strictly generalizes the default of standard transport-based objectives (linear interpolation at constant-speed) and reduces to it exactly when the metric is uninformative.

When no temporal side-information is available, we argue that uniform metric speed is a neutral choice: it is the unique parameterization determined by the geometry alone, and any deviation from it encodes temporal knowledge that the two endpoint states do not contain. At the same time, the last part of Proposition~\ref{prop:uniform_speed} shows that the temporal profile is a free one-dimensional control, decoupled from the geometry. An explicit monotone, endpoint-pinned family of time-warps parameterized by a scalar function \(\kappa\) is
\[
\beta_\kappa(t)
=
\frac{\int_0^t\exp(\kappa(u))\,du}{\int_0^1\exp(\kappa(u))\,du},
\]
which recovers the identity phase for constant \(\kappa\). Composing a trained interpolation with such a warp changes only the phase factor in \eqref{eq:energy_phase_factorization} and leaves the image, the length, and consequently the tubular neighborhood, the nearest-point projection, and the transverse displacements of Appendix~\ref{app:prop2} unchanged. Re-phasing modifies the ideal field \(v^\star\) only tangentially, so the transverse guarantees of Proposition~\ref{prop:exp_error_decay} and Proposition~\ref{prop:approx_tube} are unaffected. Changing the interpolator's speed, or learning \(\kappa\) from a temporal criterion such as timestamp side-information or a physics-based residual, is therefore possible without perturbing the learned geometry.

\section{Proofs and Derivations}
\label{app:proofs}

\subsection{ELBO and Discretized Likelihood Lower Bound}
\label{app:prop1}

First, we derive the ELBO stated in \eqref{eq:local_elbo}.
As usual, we augment the local evolution model
\[
p(\Delta_h x_t \mid x_{\le t})
=
\int p_\theta(\Delta_h x_t \mid x_t,z)\, p_\vartheta(z\mid x_{\le t})\,dz
\]
with a variational posterior \(q_\phi ( z|x)\) and apply Jensen's inequality:
\begin{align*}
    \log p(\Delta_h x_t \mid x_{\le t}) &=\log \int \frac{q_\phi ( z|x)}{q_\phi ( z|x)}\,p_\theta(\Delta_h x_t \mid x_t,z)\, p_\vartheta(z\mid x_{\le t})\,dz\\
    &\ge \mathbb E_{q_\phi} \bigl[\log p_\theta(\Delta_h x_t \vert x_t,z) + \log p_\vartheta(z\mid x_{\le t}) - \log q_\phi ( z|x)\bigr]\\
    &= \mathbb E_{q_\phi}
\bigl[\log p_\theta(\Delta_h x_t \vert x_t,z)\bigr]
-
\mathrm{KL}\bigl(q_\phi(z\vert x)\,\|\,p_\vartheta(z\vert x_{\le t})\bigr).
\end{align*}

The training objective is formed by averaging over this ELBO, as shown in \eqref{eq:avg_local_elbo}.
Therefore, we will refer to this as
\begin{align*}
    \mathbb E_{t,h}\bigl[\log p_\theta(\Delta_h x_t \vert x_{\le t})\bigr]
    &\ge
    \mathbb E_{t,h,q_\phi}
    \bigl[\log p_\theta(\Delta_h x_t \vert x_t,z)\bigr]
    -
    \mathbb E_t\Bigl[
    \mathrm{KL}\bigl(q_\phi(z\vert x)\,\|\,p_\vartheta(z\vert x_{\le t})\bigr)
    \Bigr]\\
    &=: \mathcal{L}_\text{avg}^\text{ELBO}(x).
\end{align*}

Finally, to show that our loss can be interpreted as a trajectory-level lower bound in the discrete case, we assume the natural joint model described in the main text.
Concretely, we assume the model of a globally shared latent drawn once per trajectory to be given by
\begin{equation}
\label{eq:global_model_app}
p(x_0,\Delta_0,\dots,\Delta_{K-1})
=
p_0(x_0)\int p_\vartheta(z\mid x_0)
\left(\prod_{k=0}^{K-1} p_\theta(\Delta_k\mid x_k,z)\right) dz .
\end{equation}

\begin{propone*}[Discretized likelihood lower bound]
\label{prop:lower_bound_app}
Consider the discrete case with a uniform step \(h>0\), grid \(t_k=t_0+kh\) for
\(k=0,\dots,K\) with \(x_k = x_{t_k}\), and write
\[
x=(x_0,x_1,\dots,x_K),
\qquad
\hat x=(x_0,\Delta_0,\dots,\Delta_{K-1}),
\qquad
\Delta_k := \Delta_h x_{t_k} = \frac{x_{k+1}-x_k}{h}.
\]
Correspondingly, we consider the case where \(t\) is obtained by sampling \({k\sim\mathrm{Unif}\{0,\dots,K-1\}}\) and setting \(t=t_k\).
We assume an initial density \(p_0(x_0)\), the model of \Eqref{eq:global_model_app} for \(\hat x\), and \(x_k\) to be of dimension \(d\). Then, for every choice of \(\theta\), \(\phi\), and \(\vartheta\),
\[
\log p(x)
\ \ge\
\log p_0(x_0)
+
K\,\mathcal{L}_\text{avg}^\text{ELBO}(x)
-
Kd\log h ,
\]
where the additive terms \(\log p_0(x_0)\) and \(-Kd\log h\) do not depend on the
trainable parameters.

\end{propone*}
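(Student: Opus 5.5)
The plan is to pass from the state sequence \(x\) to the increment representation \(\hat x\) by a change of variables. Then we apply a single Jensen step to the joint model of \Eqref{eq:global_model_app}. Finally, we compare the resulting single-KL bound with \(K\,\mathcal{L}_\text{avg}^\text{ELBO}(x)\).

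\emph{Step 1 (change of variables).} The map \(\hat x=(x_0,\Delta_0,\dots,\Delta_{K-1})\mapsto x\) is given recursively by \(x_{k+1}=x_k+h\Delta_k\). It is linear and block lower-triangular, with identity blocks for \(x_0\) and blocks \(hI_d\) for each increment. Its Jacobian determinant is therefore \(h^{Kd}\), so the densities satisfy \(p(x)=h^{-Kd}\,p(\hat x)\). This gives \(\log p(x)=\log p(\hat x)-Kd\log h\), which is exactly the additive constant in the statement.

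\emph{Step 2 (trajectory-level ELBO).} Starting from \Eqref{eq:global_model_app}, we insert \(q_\phi(z\mid x)/q_\phi(z\mid x)\) under the integral and apply Jensen's inequality, exactly as in the derivation of \eqref{eq:local_elbo}. This yields
\[
\log p(\hat x)\ \ge\ \log p_0(x_0)+\sum_{k=0}^{K-1}\mathbb E_{q_\phi}\bigl[\log p_\theta(\Delta_k\mid x_k,z)\bigr]-\mathrm{KL}\bigl(q_\phi(z\mid x)\,\|\,p_\vartheta(z\mid x_0)\bigr).
\]
Here we use that the factors \(p_\theta(\Delta_k\mid x_k,z)\) in the joint model are the same Gaussian local likelihoods as in \eqref{eq:local_elbo}, with \(x_k=x_{t_k}\) and \(\Delta_k=\Delta_h x_{t_k}\).

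\emph{Step 3 (comparison with the averaged bound).} With \(k\sim\mathrm{Unif}\{0,\dots,K-1\}\) and fixed \(h\), we have
\[
K\,\mathcal{L}_\text{avg}^\text{ELBO}(x)=\sum_{k=0}^{K-1}\mathbb E_{q_\phi}\bigl[\log p_\theta(\Delta_k\mid x_k,z)\bigr]-\sum_{k=0}^{K-1}\mathrm{KL}\bigl(q_\phi(z\mid x)\,\|\,p_\vartheta(z\mid x_{\le t_k})\bigr).
\]
The reconstruction sums coincide with those in Step 2. It therefore suffices to show
\[
\sum_{k=0}^{K-1}\mathrm{KL}\bigl(q_\phi(z\mid x)\,\|\,p_\vartheta(z\mid x_{\le t_k})\bigr)\ \ge\ \mathrm{KL}\bigl(q_\phi(z\mid x)\,\|\,p_\vartheta(z\mid x_0)\bigr).
\]
This holds because the \(k=0\) summand is exactly the right-hand side, since \(x_{\le t_0}=x_0\), and every other summand is nonnegative. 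Combining Steps 1 to 3 gives \(\log p(x)\ge\log p_0(x_0)+K\,\mathcal{L}_\text{avg}^\text{ELBO}(x)-Kd\log h\).

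The main obstacle is the mismatch between the single history prior \(p_\vartheta(z\mid x_0)\) of the joint model and the \(K\) history-conditioned priors appearing in the averaged objective. I handle it by the nonnegativity argument above rather than by any chain-rule identity. This choice should be flagged, since it means the bound can be loose by the extra KL terms for \(k\ge1\). A secondary point to state carefully is that \(x\) and \(\hat x\) are in bijection, so the change-of-variables density identity is exact. Beyond that bijection, it is only necessary that \(p_0\) and the conditionals are proper densities on \(\mathbb R^d\).
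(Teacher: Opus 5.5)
Your proposal is correct and follows the paper's proof essentially step for step. Both use the change of variables \(p(\hat x)=h^{Kd}p(x)\), then a single Jensen step on the joint model with the KL taken against \(p_\vartheta(z\mid x_0)\). Both then rewrite the result as \(K\,\mathcal{L}_\text{avg}^\text{ELBO}(x)\) plus the \(K-1\) nonnegative KL terms for \(k\ge 1\), and drop those terms.
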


\begin{proof}
First, we relate \(p(\hat x)\) to \(p(x)\) by change of variables. Per factor, we have to account for the determinant of the Jacobian \(h\,I_d\) due to the bijection \(\Delta_k \mapsto x_{k+1}\):
\begin{align*}
    p(\hat x)
    &=p_0(x_0)\int p_\vartheta(z\mid x_0)
    \left(\prod_{k=0}^{K-1} p_\theta(\Delta_k\mid x_k,z)\right) dz\\
    &= p_0(x_0)\,h^{Kd}\int p_\vartheta(z\mid x_0)
    \left(\prod_{k=0}^{K-1} p_\theta(x_{k+1}\mid x_k,z)\right) dz\\
    &=h^{Kd}\,p(x).
\end{align*}

Secondly, we can obtain the ELBO for \(p(\hat x)\) as
\begin{align*}
    \log p(\hat x) &\ge \log p_0(x_0) + \sum_{k=0}^{K-1} \mathbb E_{q_\phi}
\bigl[\log p_\theta(\Delta_k \vert x_k,z)\bigr] - \mathrm{KL}\bigl(q_\phi(z\vert x)\,\|\,p_\vartheta(z\vert x_0)\bigr)\\
&=\log p_0(x_0) + K\,\mathcal{L}_\text{avg}^\text{ELBO}(x) + \sum_{k=1}^{K-1}\mathrm{KL}\bigl(q_\phi(z\vert x)\,\|\,p_\vartheta(z\vert x_{\le k})\bigr),
\end{align*}
where \(\mathcal{L}_\text{avg}^\text{ELBO}\) reduces to a sum since for the choice of fixed \(h\) and discrete values for \(t\) we have \(\mathbb E_{t,h}[f]=\frac1K\sum_{k=0}^{K-1}f(t_k,h)\).

Finally, combining both statements, we arrive at
\begin{align*}
    \log p(x) &= -Kd\,\log h + \log p(\hat x)\\ &\ge -Kd\,\log h + \log p_0(x_0) + K\,\mathcal{L}_\text{avg}^\text{ELBO}(x) + \underbrace{\sum_{k=1}^{K-1}\mathrm{KL}\bigl(q_\phi(z\vert x)\,\|\,p_\vartheta(z\vert x_{\le k})\bigr)}_{\ge 0}\\
    &\ge -Kd\,\log h + \log p_0(x_0) + K\,\mathcal{L}_\text{avg}^\text{ELBO}(x)
\end{align*}
\end{proof}

\begin{remark}
    Our proposed objective considers \(K-1\) additional KL-divergence terms compared with the canonical ELBO for the full trajectory model. This can be related to Posterior Matching \citep{strauss2022posterior} and allows arbitrary conditioning, such as demonstrated in Experiment~\ref{sec:toy}.
\end{remark}

\subsection{Exponential Decay of Transverse Displacement}
\label{app:prop2}

As in the main text, let \((x_t)_{t\in[0,T]}\) be a \(C^2\) reference trajectory. We assume that it admits a \(C^2\) embedded extension with nonvanishing velocity to an open interval containing \([0,T]\). Let \(\widetilde\Gamma\) denote the image of this smooth embedded extension, and let
\[
\Gamma:=\{x_t:t\in[0,T]\}\subset\widetilde\Gamma
\]
denote the compact reference path segment. All nearest-point projections below are defined with respect to the extended curve \(\widetilde\Gamma\), while the estimates are required only in a tubular neighborhood of the compact segment \(\Gamma\).
Let
\[
\tau:\widetilde\Gamma\to\mathbb R^d
\]
denote the \(C^1\) tangent velocity field along the embedded extension. Since the extension has nonvanishing velocity, the normal projection
\[
\Pi^\perp(\xi)
:=
I-\frac{\tau(\xi)\tau(\xi)^\top}{\|\tau(\xi)\|^2},
\qquad
\xi\in\widetilde\Gamma,
\]
is well defined.

\begin{assumption}[Tubular neighborhood]
\label{ass:tubular_neighborhood}
For sufficiently small \(\rho>0\), there is an open tubular neighborhood \(U_\rho\) of \(\Gamma\) on which the nearest-point projection
\[
\pi:U_\rho\to\widetilde\Gamma
\]
is uniquely defined and \(C^1\). Consequently, each \(x\in U_\rho\) admits the decomposition
\[
x=\pi(x)+e(x),
\qquad
e(x):=x-\pi(x)\in N_{\pi(x)}\widetilde\Gamma
=
\operatorname{Im}\Pi^\perp(\pi(x)).
\]
\end{assumption}

We define the ideal path-relative vector field on \(U_\rho\) by
\[
v^\star(x)
:=
\tau(\pi(x))-\lambda e(x),
\qquad
\lambda>0.
\]

We shall use the following elementary consequences of the tubular-projection geometry.

\begin{lemma}[Tubular projection geometry]
\label{lem:tubular_projection_geometry}
After possibly shrinking the tube radius \(\rho>0\), the nearest-point projection
\[
\pi:U_\rho\to \widetilde\Gamma
\]
is uniquely defined and \(C^1\). Moreover, for every \(x\in U_\rho\) and every \(w\in\mathbb R^d\),
\[
D\pi(x)w\in T_{\pi(x)}\widetilde\Gamma .
\]
For every \(\xi\in\Gamma\),
\[
D\pi(\xi)=P_{T_\xi\widetilde\Gamma},
\]
where \(P_{T_\xi\widetilde\Gamma}\) denotes the Euclidean orthogonal projection onto the tangent line \(T_\xi\widetilde\Gamma\). Equivalently, if
\[
T(\xi):=\frac{\tau(\xi)}{\|\tau(\xi)\|}
\]
is the unit tangent, then
\[
D\pi(\xi)=T(\xi)T(\xi)^\top .
\]
Consequently, if \(n\in N_\xi\widetilde\Gamma\), then
\[
D\pi(\xi)n=0.
\]
\end{lemma}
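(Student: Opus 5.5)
The plan is to characterize the nearest-point projection by its first-order optimality condition and then differentiate that condition implicitly. Parametrize the extended curve by a \(C^2\) regular map \(c:(a,b)\to\mathbb R^d\) with \(c'\neq 0\), so that \(\tau(c(s))=c'(s)\). For \(x\) in a tube, write \(\pi(x)=c(\sigma(x))\), where \(\sigma(x)\) minimizes \(s\mapsto \tfrac12\|x-c(s)\|^2\). The stationarity condition is
\[
F(x,s):=\langle x-c(s),\,c'(s)\rangle=0 ,
\]
and we have
\[
\partial_s F(x,s)=-\|c'(s)\|^2+\langle x-c(s),\,c''(s)\rangle .
\]
On the curve, where \(x=c(s)\), this derivative equals \(-\|c'(s)\|^2\neq 0\). \(F\) is \(C^1\) because \(c\) is \(C^2\), so the implicit function theorem gives a \(C^1\) local solution \(s=\sigma(x)\) near each point of \(\Gamma\).

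The step I expect to be the main obstacle is globalizing this: showing that the local solution really is the unique global nearest point for all \(x\) in a uniform tube. The plan has three parts.

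\begin{itemize}
\item[(i)] \emph{Uniform derivative bound.} Compactness of \(\Gamma\), together with continuity of \(c'\) and \(c''\) on a slightly larger compact parameter interval \([a',b']\supset\) the preimage of \([0,T]\), gives a uniform \(\rho_1\). For \(\|x-c(s)\|<\rho_1\) we get \(\partial_sF\le -\tfrac12\|c'\|^2<0\). So \(F(x,\cdot)\) is strictly decreasing on the parameters near \(s\), and there is at most one stationary point nearby.
\item[(ii)] \emph{No distant competitors.} Embeddedness plus compactness gives \(\delta>0\) such that points \(c(s),c(s')\) with \(|s-s'|\) bounded away from zero are at distance at least \(2\delta\) (the embedding is injective and the parameter set compact). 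Choosing \(\rho<\min(\rho_1,\delta)\) then forces any nearest point of \(x\in U_\rho\) to lie in the local window where the monotonicity of (i) applies.
\item[(iii)] \emph{Extension as a buffer.} The extension beyond \([0,T]\) guarantees that the minimizer for points near the endpoints is interior, so it is a genuine stationary point rather than a boundary minimum.
\end{itemize}

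Together these give existence, uniqueness and \(C^1\) regularity of \(\pi\) on \(U_\rho\).

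The derivative claims then follow from the chain rule. Since \(D\pi(x)w=c'(\sigma(x))\,D\sigma(x)w\), it is a multiple of \(\tau(\pi(x))\) and therefore lies in \(T_{\pi(x)}\widetilde\Gamma\). Implicit differentiation gives
\[
D\sigma(x)w=-\frac{\partial_xF\,w}{\partial_sF}=\frac{\langle w,c'(s)\rangle}{\|c'(s)\|^2-\langle x-c(s),c''(s)\rangle}.
\]
At \(\xi=c(s)\in\Gamma\) the second term in the denominator vanishes. Hence
\[
D\pi(\xi)w=\frac{c'(s)\,c'(s)^\top w}{\|c'(s)\|^2}=T(\xi)T(\xi)^\top w ,
\]
which is the orthogonal tangent projection. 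Finally, if \(n\in N_\xi\widetilde\Gamma\) then \(T^\top n=0\), so \(D\pi(\xi)n=0\).
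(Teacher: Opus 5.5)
Your proof is correct. Its core, implicit differentiation of the stationarity condition $\langle x-c(s),c'(s)\rangle=0$, is the same computation the paper uses, but you apply it differently in two places.

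\textbf{Existence, uniqueness and $C^1$ regularity.} The paper does not argue these; it cites the regularity theory of metric projections onto embedded submanifolds (Foote). You prove the curve case directly, via the implicit function theorem plus the compactness globalization (i)--(iii). This makes the lemma self-contained at the cost of some bookkeeping.

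\textbf{The derivative on $\Gamma$.} The paper splits $\mathbb R^d=T_\xi\widetilde\Gamma\oplus N_\xi\widetilde\Gamma$. On the tangent line, $D\pi(\xi)$ is the identity because $\pi$ restricts to the identity on $\widetilde\Gamma$. For a normal vector $n$, it differentiates the optimality condition along $\xi+sn$ to get $\sigma'(0)=0$. You instead derive the closed formula $D\sigma(x)w=\langle w,c'\rangle/(\|c'\|^2-\langle x-c,c''\rangle)$ on the whole tube and read off $D\pi(\xi)=TT^\top$ in one step. Your formula gives more information: off the path, $D\pi(x)$ is the tangent projection rescaled by the curvature factor $\|c'\|^2/(\|c'\|^2-\langle x-c,c''\rangle)$, and its denominator is controlled by exactly the bound from (i). The paper's split is shorter and carries over verbatim to higher-dimensional submanifolds.

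\textbf{Two details to tighten in the globalization.}
\begin{itemize}
\item In (ii), injectivity on the compact window $[a',b']$ only rules out competitors with parameters in that window. To rule out points of $\widetilde\Gamma$ near the ends of $(a,b)$, invoke that $c$ is a homeomorphism onto its image: if $c(s_n)\to\xi\in\Gamma$ then $s_n\to c^{-1}(\xi)$. This keeps the ends a positive distance from $\Gamma$ and also guarantees that the nearest point exists.
\item The separation scale $\epsilon$ in (ii) must be fixed first, e.g.\ with $\epsilon\sup\|c'\|<\rho_1/2$, so that the whole window $|s-s_0|<\epsilon$ stays where (i) applies. After that, $\rho<\min(\rho_1/2,\delta(\epsilon))$ works, rather than your unqualified $\rho<\min(\rho_1,\delta)$.
\end{itemize}
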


\begin{proof}
The existence and \(C^1\)-regularity of the nearest-point projection in a sufficiently small tubular neighborhood follows from the regularity theory of metric projections onto smooth embedded submanifolds; see, e.g., \citet{foote1984regularity}.

Since \(\pi\) has image in the embedded curve \(\widetilde\Gamma\), the tangent-map property for \(C^1\) maps gives
\[
D\pi(x)w\in T_{\pi(x)}\widetilde\Gamma,
\qquad
\forall x\in U_\rho,\ \forall w\in\mathbb R^d;
\]
see \citet[Ch.~3, ``The Differential of a Smooth Map'']{lee2013smooth}.
Equivalently, this can be checked directly in local coordinates.
Let \(\gamma:I\to\mathbb R^d\) be a regular local parametrization of \(\widetilde\Gamma\) around \(\pi(x)\), so that locally
\[
\pi(x)=\gamma(s(x))
\]
for some \(C^1\) scalar projection coordinate \(s:U_\rho\to I\).
Then the chain rule gives
\[
D\pi(x)w
=
D\gamma(s(x))\,Ds(x)[w]
=
\gamma'(s(x))\,Ds(x)[w],
\]
which is a scalar multiple of \(\gamma'(s(x))\), and hence lies in \(T_{\pi(x)}\widetilde\Gamma\).

It remains to identify \(D\pi\) on the reference path.
For \(\xi\in\Gamma\), the projection restricts to the identity on \(\widetilde\Gamma\), and therefore its differential is the identity on \(T_\xi\widetilde\Gamma\).

Now let \(n\in N_\xi\widetilde\Gamma\).
Choose a regular local parametrization \(\gamma:I\to\mathbb R^d\) of \(\widetilde\Gamma\) with
\[
\xi=\gamma(\sigma(0)).
\]
For \(s\) sufficiently small, write
\[
\pi(\xi+sn)=\gamma(\sigma(s)).
\]
The nearest-point optimality condition gives
\[
\bigl(\xi+sn-\gamma(\sigma(s))\bigr)^\top \gamma'(\sigma(s))=0 .
\]
Differentiating with respect to \(s\) and evaluating at \(s=0\) yields
\[
\bigl(n-\gamma'(\sigma(0))\sigma'(0)\bigr)^\top\gamma'(\sigma(0))
+
\bigl(\xi-\gamma(\sigma(0))\bigr)^\top\gamma''(\sigma(0))\sigma'(0)
=
0 .
\]
Since \(\xi=\gamma(\sigma(0))\), the second term vanishes.
Since \(n\in N_\xi\widetilde\Gamma\), we also have
\[
n^\top \gamma'(\sigma(0))=0.
\]
Therefore
\[
-\|\gamma'(\sigma(0))\|^2\sigma'(0)=0.
\]
Because \(\gamma\) is regular, \(\gamma'(\sigma(0))\neq 0\), and hence
\[
\sigma'(0)=0.
\]
Consequently,
\[
D\pi(\xi)n
=
\gamma'(\sigma(0))\sigma'(0)
=
0.
\]

Thus
\[
D\pi(\xi)\big|_{T_\xi\widetilde\Gamma}=\mathrm{Id},
\qquad
D\pi(\xi)\big|_{N_\xi\widetilde\Gamma}=0.
\]
Hence \(D\pi(\xi)\) is precisely the Euclidean orthogonal projection onto \(T_\xi\widetilde\Gamma\), namely
\[
D\pi(\xi)=P_{T_\xi\widetilde\Gamma}
=
T(\xi)T(\xi)^\top .
\]
The final claim follows immediately.
\end{proof}

\begin{proptwo*}[Exponential decay of transverse displacement]
\label{prop:exp_error_decay_app}
Under the tubular-neighborhood assumption, the vector field \(v^\star\) is path-relative transversely exponentially stable on \(U_\rho\) with rate \(\lambda\), up to the possible longitudinal exit time from the tubular neighborhood.
\end{proptwo*}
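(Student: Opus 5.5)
The plan is to compute the time derivative of the squared transverse displacement along a trajectory and show that it satisfies an exact linear ODE. Let \(y_t\) solve \(\dot y_t=v^\star(y_t)\) with \(y_{t_0}\in U_\rho\), and work on \([t_0,T_{\max})\), where \(y_s\in U_\rho\) for all \(s\le t\). By Assumption~\ref{ass:tubular_neighborhood} and Lemma~\ref{lem:tubular_projection_geometry}, \(\pi\) is \(C^1\) on \(U_\rho\). Since \(\tau\) is \(C^1\), the field \(v^\star=\tau\circ\pi-\lambda(\mathrm{id}-\pi)\) is \(C^1\), hence locally Lipschitz. Trajectories therefore exist and are unique, and \(t\mapsto e(y_t)\) is \(C^1\). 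I will work with \(\phi(t):=\tfrac12\|e(y_t)\|^2\) rather than \(\|e\|\), so that no differentiability issue arises where \(e=0\).

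\textbf{Step 1: the derivative of \(e\).} By the chain rule,
\[
\tfrac{d}{dt}e(y_t)=\bigl(I-D\pi(y_t)\bigr)\dot y_t ,
\]
and hence
\[
\dot\phi(t)=e(y_t)^\top\dot y_t-e(y_t)^\top D\pi(y_t)\dot y_t .
\]
The key geometric input is the first part of Lemma~\ref{lem:tubular_projection_geometry}: \(D\pi(x)w\in T_{\pi(x)}\widetilde\Gamma\) for every \(x\in U_\rho\) (not only on \(\Gamma\)). Assumption~\ref{ass:tubular_neighborhood} gives \(e(x)\in N_{\pi(x)}\widetilde\Gamma\). The second term therefore vanishes identically, leaving \(\dot\phi=e^\top v^\star(y_t)\).

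\textbf{Step 2: substitute the field.} Substituting \(v^\star\) gives
\[
\dot\phi=e^\top\tau(\pi(y_t))-\lambda\|e\|^2 .
\]
Since \(\tau(\pi(y_t))\) spans \(T_{\pi(y_t)}\widetilde\Gamma\) and \(e\) lies in the normal space at the same point, the first term is zero. Hence \(\dot\phi=-2\lambda\phi\), and Grönwall's inequality (here with equality) gives
\[
\|e(y_t)\|^2=\exp\bigl(-2\lambda(t-t_0)\bigr)\|e(y_{t_0})\|^2 .
\]
Taking square roots yields the required bound with rate \(\mu=\lambda\), in fact with equality.

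\textbf{Step 3: the exit-time caveat.} The transverse component cannot cause exit, since \(\|e\|\) is nonincreasing and stays below \(\rho\). Leaving \(U_\rho\) can therefore only happen longitudinally, because the \(\tau\)-component moves \(\pi(y_t)\) past the end of \(\Gamma\). This is exactly what \(T_{\max}\) in the definition accounts for.

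\textbf{Main obstacle.} I expect the main obstacle to be justifying Step 1 off the path. The cancellation needs \(D\pi(x)w\) to be tangent at \(\pi(x)\), and not merely \(D\pi(\xi)=TT^\top\) for \(\xi\in\Gamma\). This uses that \(\pi\) takes values in the one-dimensional embedded curve \(\widetilde\Gamma\), as in the lemma's local-coordinate argument \(\pi=\gamma\circ s\). It also uses working with the extension \(\widetilde\Gamma\), so that points near the endpoints of \(\Gamma\) still have a genuinely orthogonal displacement \(e\).
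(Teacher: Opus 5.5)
Your proposal is correct and matches the paper's proof essentially step for step. The paper uses the same Lyapunov function \(V=\tfrac12\|e\|^2\) and the same off-path identity \(e(x)^\top D\pi(x)w=0\) from Lemma~\ref{lem:tubular_projection_geometry} (giving \(\nabla V=e\)); it then uses the orthogonality \(e\perp\tau(\pi)\), solves \(\dot V=-2\lambda V\) exactly, and concludes that exit can only be longitudinal because \(\|e\|\) is nonincreasing.
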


\begin{proof}
We first record the basic orthogonality identity used below.
By Lemma~\ref{lem:tubular_projection_geometry}, for every \(x\in U_\rho\) and every \(w\in\mathbb R^d\),
\[
D\pi(x)w\in T_{\pi(x)}\widetilde\Gamma .
\]
By the tubular decomposition, the displacement
\[
e(x)=x-\pi(x)
\]
lies in \(N_{\pi(x)}\widetilde\Gamma\). Hence
\begin{equation}
\label{eq:normal_projection_identity}
e(x)^\top D\pi(x)w=0,
\qquad
\forall x\in U_\rho,\ \forall w\in\mathbb R^d .
\end{equation}

Now define
\[
V(x):=\frac12\|e(x)\|^2
=
\frac12\|x-\pi(x)\|^2 .
\]
For any \(w\in\mathbb R^d\),
\[
De(x)[w]
=
w-D\pi(x)w,
\]
and therefore
\[
DV(x)[w]
=
e(x)^\top De(x)[w]
=
e(x)^\top\bigl(w-D\pi(x)w\bigr).
\]
Using \eqref{eq:normal_projection_identity}, this reduces to
\begin{equation}
\label{eq:grad_distance_identity}
DV(x)[w]=e(x)^\top w .
\end{equation}
Equivalently, \(\nabla V(x)=e(x)\) on \(U_\rho\).

Let \(y_t\) be a solution of
\[
\dot y_t=v^\star(y_t)
=
\tau(\pi(y_t))-\lambda e(y_t),
\]
with \(y_{t_0}\in U_\rho\).
Let \(T_{\max}\le \infty\) denote the maximal time such that
\[
y_t\in U_\rho
\qquad
\text{for all }t\in [t_0,T_{\max}) .
\]
All computations below are therefore made on \([t_0,T_{\max})\).

Using \eqref{eq:grad_distance_identity}, we obtain
\[
\frac{d}{dt}V(y_t)
=
DV(y_t)[\dot y_t]
=
e(y_t)^\top \dot y_t .
\]
Substituting the definition of \(v^\star\) gives
\[
\frac{d}{dt}V(y_t)
=
e(y_t)^\top \tau(\pi(y_t))
-
\lambda\|e(y_t)\|^2 .
\]
Since
\[
e(y_t)\in N_{\pi(y_t)}\widetilde\Gamma,
\qquad
\tau(\pi(y_t))\in T_{\pi(y_t)}\widetilde\Gamma,
\]
the first term vanishes:
\[
e(y_t)^\top\tau(\pi(y_t))=0.
\]
Consequently,
\[
\frac{d}{dt}V(y_t)
=
-\lambda\|e(y_t)\|^2
=
-2\lambda V(y_t).
\]
Solving this scalar differential equation on \([t_0,T_{\max})\) yields
\[
V(y_t)
=
\exp(-2\lambda(t-t_0))V(y_{t_0}),
\qquad
t\in[t_0,T_{\max}) .
\]
Taking square roots gives
\[
\|e(y_t)\|
=
\exp(-\lambda(t-t_0))\|e(y_{t_0})\|,
\qquad
t\in[t_0,T_{\max}) .
\]
In particular,
\[
\|e(y_t)\|
\le
\exp(-\lambda(t-t_0))\|e(y_{t_0})\|,
\qquad
t\in[t_0,T_{\max}) .
\]

Since \(\|e(y_t)\|\) is non-increasing, the trajectory cannot exit \(U_\rho\) through its lateral, transverse boundary.
Thus the solution satisfies the exponential decay bound for all \(t\in[t_0,T_{\max})\), where \(T_{\max}\le\infty\) is the first possible time at which the trajectory exits the longitudinal boundaries of the tubular neighborhood \(U_\rho\).
This proves path-relative transverse exponential stability with rate \(\lambda\) on its interval of validity inside the tube.
\end{proof}

\subsection{Transverse Contraction}
\label{app:cor21}

For an autonomous vector field \(v\), write
\[
A_v(x):=Dv(x),
\qquad
A_{v,s}(x):=\frac{A_v(x)+A_v(x)^\top}{2}.
\]

\begin{definition}[Transverse contraction {\citep{manchester2017control}}]
A vector field \(v\) is \emph{transversely \(\mu\)-contracting} on a region \(U\) if for every \(x\in U\) and every differential displacement \(\delta x\) satisfying
\[
\delta x^\top v(x)=0,
\]
one has
\[
\delta x^\top A_{v,s}(x)\,\delta x
\le
-\mu\|\delta x\|^2 .
\]
\end{definition}

We next show that the path-relative target field \(v^\star\) is also transversely contracting in the standard sense of the preceding definition.
The proof uses the same tubular setup and the same notation \(\Gamma\), \(\widetilde\Gamma\), \(U_\rho\), \(\pi\), \(e\), and \(v^\star\) introduced in Appendix~\ref{app:prop2}.
Since the reference trajectory is \(C^2\), the tangent velocity \(\tau\) is \(C^1\) along \(\widetilde\Gamma\), and since \(\pi\) is \(C^1\) on a sufficiently thin tube, \(v^\star\) is \(C^1\) on \(U_\rho\).

We first record a compactness principle that allows one to pass from strict transverse contraction on the reference curve to strict transverse contraction in a sufficiently thin neighborhood.

\begin{lemma}[Persistence of strict transverse contraction]
\label{lem:persistence_transverse_contraction}
Let \(v\in C^1(U;\mathbb R^d)\), let \(K\subset U\) be compact, and assume that
\[
v(\xi)\neq 0
\qquad
\text{for all }\xi\in K.
\]
Suppose that there exists \(\mu>0\) such that, for every \(\xi\in K\) and every \(\delta\in\mathbb R^d\) with
\[
\delta^\top v(\xi)=0,
\]
one has
\[
\delta^\top A_{v,s}(\xi)\delta
\le
-\mu\|\delta\|^2 .
\]
Then, for every \(0<\mu'<\mu\), there exists an open neighborhood \(W\subset U\) of \(K\) such that
\[
\delta^\top A_{v,s}(x)\delta
\le
-\mu'\|\delta\|^2
\]
for every \(x\in W\) and every \(\delta\in\mathbb R^d\) satisfying
\[
\delta^\top v(x)=0.
\]
Moreover, \(W\) may be chosen so that \(v\) is bounded away from zero on \(W\).
\end{lemma}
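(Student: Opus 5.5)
We argue by contradiction, combining compactness with continuity of \(A_{v,s}\) and of the normalized field direction. The first step is to secure a nonvanishing neighborhood. Since \(v\) is continuous and nonzero on the compact set \(K\), there are \(c>0\) and \(r_0>0\) such that the closed \(r_0\)-neighborhood \(K_{r_0}:=\{x:\operatorname{dist}(x,K)\le r_0\}\) lies in \(U\) and \(\|v(x)\|\ge c\) on \(K_{r_0}\). Any \(W\) we later pick inside \(K_{r_0}\) then automatically satisfies the final claim that \(v\) is bounded away from zero on \(W\). On \(K_{r_0}\) we also have the continuous unit direction \(n(x):=v(x)/\|v(x)\|\) and the transverse projector \(P(x):=I-n(x)n(x)^\top\).

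Next we rephrase the constraint through this projector. For \(x\in K_{r_0}\), the condition \(\delta^\top v(x)=0\) with \(\|\delta\|=1\) holds exactly when \(\delta=P(x)u/\|P(x)u\|\) for some unit \(u\) with \(P(x)u\neq0\). It therefore suffices to study
\[
f(x):=\max\{\delta^\top A_{v,s}(x)\delta:\ \|\delta\|=1,\ \delta^\top n(x)=0\}.
\]
This is the largest eigenvalue of the compression \(P(x)A_{v,s}(x)P(x)\) restricted to the \((d-1)\)-dimensional subspace \(n(x)^\perp\). The hypothesis says exactly that \(f\le-\mu\) on \(K\), and the goal is to find a neighborhood on which \(f\le-\mu'\).

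Suppose no such \(W\) exists. Then for every \(j\) there are \(x_j\) with \(\operatorname{dist}(x_j,K)<1/j\) and unit vectors \(\delta_j\perp v(x_j)\) satisfying \(\delta_j^\top A_{v,s}(x_j)\delta_j>-\mu'\). Compactness of \(K_{r_0}\) and of the unit sphere lets us pass to a subsequence with \(x_j\to\xi\in K\) and \(\delta_j\to\delta\), \(\|\delta\|=1\). Since \(v\in C^1\), both \(v\) and \(A_{v,s}\) are continuous, so \(\delta^\top v(\xi)=\lim\delta_j^\top v(x_j)=0\) and \(\delta^\top A_{v,s}(\xi)\delta\ge-\mu'>-\mu\). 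This contradicts the hypothesis at \(\xi\). Hence some \(1/j\)-neighborhood of \(K\) works, and we take \(W\) to be its intersection with the interior of \(K_{r_0}\), which is an open neighborhood of \(K\).

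The only delicate point is passing the constraint \(\delta_j^\top v(x_j)=0\) to the limit. Continuity of \(v\) makes this immediate, but the limit constraint would be vacuous if \(v(\xi)=0\). This is why we need the nonvanishing assumption on \(K\), which makes \(v(\xi)^\perp\) a genuine codimension-one subspace depending continuously on \(\xi\). This is also why we first confined the argument to \(K_{r_0}\), where \(\|v\|\ge c\).
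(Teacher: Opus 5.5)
Your proof is correct and is essentially the paper's argument: you build a contradiction sequence $(x_j,\delta_j)$ with $x_j\to\xi\in K$ and $\delta_j\to\delta$ by compactness, pass both the constraint and the inequality to the limit by continuity of $v$ and $A_{v,s}$, and intersect with a neighborhood on which $v$ is bounded away from zero. Two small corrections: the projector reformulation via $f(x)$ is never used, and nonvanishing of $v$ is not needed for the limit step (if $v(\xi)=0$ the hypothesis would simply apply to every $\delta$), so, as in the paper, it only serves the final ``moreover'' clause.
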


\begin{proof}
% Since \(v\) is continuous and
% \[
% \inf_{\xi\in K}\|v(\xi)\|>0
% \]
% as \(K\) is compact and \(v\neq 0\) on \(K\), there exists a neighborhood of \(K\) on which \(v\) is bounded away from zero.
% We may take \(W\) to be contained in this neighborhood, so that the transversality condition \(\delta^\top v(x)=0\) is non-degenerate throughout \(W\).

% It suffices to prove the contraction claim for unit vectors \(\|\delta\|=1\), since the inequality is homogeneous in \(\delta\).
Since \(v\) is continuous and
\[
\inf_{\xi\in K}\|v(\xi)\|>0
\]
as \(K\) is compact and \(v\neq 0\) on \(K\), there exists an open neighborhood \(W_{\mathrm{nz}}\) of \(K\) on which \(v\) is bounded away from zero.

It suffices to prove the contraction claim for unit vectors \(\|\delta\|=1\), since the inequality is homogeneous in \(\delta\).

Argue by contradiction.
Suppose that the claim fails for some \(0<\mu'<\mu\).
Then for every \(n\in\mathbb N\), there exist
\[
x_n\in U,
\qquad
\operatorname{dist}(x_n,K)<\frac1n,
\qquad
\|\delta_n\|=1,
\qquad
\delta_n^\top v(x_n)=0,
\]
such that
\[
\delta_n^\top A_{v,s}(x_n)\delta_n>-\mu' .
\]
Let \(\xi_n\in K\) be a nearest point in \(K\) to \(x_n\). Then
\[
\|x_n-\xi_n\|
=
\operatorname{dist}(x_n,K)
\to 0.
\]
Because \(K\) is compact, after passing to a subsequence we may assume
\[
\xi_n\to \xi\in K.
\]
Hence also
\[
x_n\to \xi.
\]
The unit sphere is compact, so after passing to another subsequence we may assume
\[
\delta_n\to\delta,
\qquad
\|\delta\|=1.
\]
By continuity of \(v\),
\[
0
=
\lim_{n\to\infty}\delta_n^\top v(x_n)
=
\delta^\top v(\xi).
\]
Thus \(\delta\) is transverse to \(v(\xi)\).
By continuity of \(A_{v,s}\),
\[
\delta^\top A_{v,s}(\xi)\delta
=
\lim_{n\to\infty}
\delta_n^\top A_{v,s}(x_n)\delta_n
\ge
-\mu' .
\]
On the other hand, the assumed transverse contraction inequality on \(K\) gives
\[
\delta^\top A_{v,s}(\xi)\delta
\le
-\mu\|\delta\|^2
=
-\mu .
\]
This contradicts \(\mu'<\mu\).
Therefore there exists an open neighborhood \(W_0\) of \(K\) on which the transverse \(\mu'\)-contraction inequality holds. Taking
\[
W:=W_0\cap W_{\mathrm{nz}}
\]
proves both the contraction claim and the asserted nonvanishing property.
\end{proof}

We now restate and prove the transverse-contraction corollary from the main text.

\begin{cor21*}[Transverse contraction]
\label{cor:standard_transverse_contraction_app}
Under the assumptions of Proposition~\ref{prop:exp_error_decay}, after possibly shrinking the tubular neighborhood \(U_\rho\), the path-relative target field
\[
v^\star(x)
=
\tau(\pi(x))-\lambda\bigl(x-\pi(x)\bigr)
\]
is transversely contracting on \(U_\rho\).
\end{cor21*}

\begin{proof}
We first prove strict transverse contraction on the reference path \(\Gamma\).

Fix \(\xi\in\Gamma\).
Since \(\pi(\xi)=\xi\) and \(e(\xi)=0\), we have
\[
v^\star(\xi)=\tau(\xi).
\]
By assumption, the reference velocity is nonzero on \(\Gamma\). Hence
\[
\alpha_{\min}
:=
\inf_{\xi\in\Gamma}\|\tau(\xi)\|
>0.
\]
In particular, \(v^\star\) is nonzero on \(\Gamma\).

Let \(\delta\in\mathbb R^d\) satisfy the flow-transversality condition
\[
\delta^\top v^\star(\xi)=0.
\]
Since \(v^\star(\xi)=\tau(\xi)\) spans the tangent line \(T_\xi\widetilde\Gamma\), this condition means that
\[
\delta\in N_\xi\widetilde\Gamma .
\]
By Lemma~\ref{lem:tubular_projection_geometry},
\[
D\pi(\xi)\delta=0.
\]

Differentiating
\[
v^\star(x)
=
\tau(\pi(x))-\lambda\bigl(x-\pi(x)\bigr),
\]
we obtain, for any displacement \(\eta\in\mathbb R^d\),
\[
Dv^\star(x)\eta
=
D\tau(\pi(x))D\pi(x)\eta
-\lambda\bigl(I-D\pi(x)\bigr)\eta .
\]
Evaluating at \(x=\xi\) and \(\eta=\delta\), and using \(D\pi(\xi)\delta=0\), gives
\[
Dv^\star(\xi)\delta
=
-\lambda\delta .
\]
Therefore,
\[
\delta^\top A_{v^\star,s}(\xi)\delta
=
\delta^\top\frac{Dv^\star(\xi)+Dv^\star(\xi)^\top}{2}\delta
=
\delta^\top Dv^\star(\xi)\delta
=
-\lambda\|\delta\|^2 .
\]
Here we used the standard identity
\[
x^\top Mx
=
x^\top \frac{M+M^\top}{2}x
\]
for any square matrix \(M\) and vector \(x\), since \(x^\top M^\top x=x^\top Mx\) as both sides are equal to the same scalar.
Thus \(v^\star\) is transversely \(\lambda\)-contracting on the reference path \(\Gamma\).

It remains to extend this strict inequality from \(\Gamma\) to a sufficiently thin tube.
Since \(v^\star\in C^1\) in a tubular neighborhood of \(\Gamma\), since \(\Gamma\) is compact, and since \(v^\star\neq 0\) on \(\Gamma\), Lemma~\ref{lem:persistence_transverse_contraction} applies with
\[
v=v^\star,
\qquad
K=\Gamma,
\qquad
\mu=\lambda,
\qquad
\mu'=\frac{\lambda}{2}.
\]
Hence there exists an open neighborhood \(W\) of \(\Gamma\) on which the transverse \(\lambda/2\)-contraction inequality holds.
Since \(\Gamma\) is compact and \(W\) is an open neighborhood of \(\Gamma\) in \(\mathbb R^d\), there exists a sufficiently small radius \(\rho>0\) such that the tubular neighborhood
\[
U_\rho
=
\{x\in\mathbb R^d:\operatorname{dist}(x,\Gamma)<\rho\}
\]
is contained in \(W\).
After shrinking the original tubular neighborhood accordingly, for every \(x\in U_\rho\) and every displacement \(\delta\) satisfying
\[
\delta^\top v^\star(x)=0,
\]
one has
\[
\delta^\top A_{v^\star,s}(x)\delta
\le
-\frac{\lambda}{2}\|\delta\|^2.
\]
Hence \(v^\star\) is transversely \(\lambda/2\)-contracting on \(U_\rho\).
\end{proof}

\subsection{Approximation Error and Tube Stability}
\label{app:prop3}

We now restate and prove Proposition~\ref{prop:approx_tube}. The proof uses the same tubular setup and the same notation \(\Gamma\), \(\widetilde\Gamma\), \(U_\rho\), \(\pi\), \(e\), and \(v^\star\) introduced in Appendix~\ref{app:prop2}. In particular, we require Assumption~\ref{ass:tubular_neighborhood}, and we use again the Lyapunov function
\[
V(x)
:=
\frac12\|e(x)\|^2,
\]
together with the gradient identity \(\nabla V(x)=e(x)\) on \(U_\rho\), established in \eqref{eq:grad_distance_identity} in the proof of Proposition~\ref{prop:exp_error_decay}.

Let \(\hat v:U_\rho\to\mathbb R^d\) be a locally Lipschitz vector field, interpreted as an approximation of the ideal field \(v^\star\), such as the learned velocity field for a fixed latent and step size. Define its \emph{transverse approximation error}
\[
\varepsilon_\perp
:=
\sup_{x\in U_\rho}
\bigl\|\Pi^\perp(\pi(x))\bigl(\hat v(x)-v^\star(x)\bigr)\bigr\|
\le
\sup_{x\in U_\rho}\|\hat v(x)-v^\star(x)\| .
\]

\begin{propthree*}[Approximation error contracts to a tube]
\label{prop:approx_tube_app}
Suppose Assumption~\ref{ass:tubular_neighborhood} holds and that
\[
\varepsilon_\perp<\lambda\rho .
\]
Let \(y_t\) be a solution of \(\dot y_t=\hat v(y_t)\) with \(y_{t_0}\in U_\rho\), and let \(T_{\max}\le\infty\) denote the maximal time such that \(y_s\in U_\rho\) for all \(s\in[t_0,t]\). Then, for all \(t\in[t_0,T_{\max})\),
\begin{equation}
\label{eq:approx_tube_bound}
\|e(y_t)\|
\le
\exp(-\lambda(t-t_0))\|e(y_{t_0})\|
+
\frac{\varepsilon_\perp}{\lambda}
\bigl(1-\exp(-\lambda(t-t_0))\bigr)
\le
\max\Bigl(\|e(y_{t_0})\|,\ \frac{\varepsilon_\perp}{\lambda}\Bigr).
\end{equation}
In particular, the trajectory cannot exit \(U_\rho\) through its lateral, transverse boundary, so that \(T_{\max}\) is again the first possible longitudinal exit time from the tubular neighborhood, and if \(T_{\max}=\infty\), then
\[
\limsup_{t\to\infty}\|e(y_t)\|
\le
\frac{\varepsilon_\perp}{\lambda}.
\]
\end{propthree*}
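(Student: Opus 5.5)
We argue with the Lyapunov function \(V(x)=\tfrac12\|e(x)\|^2\) from the proof of Proposition~2, using the gradient identity \(\nabla V(x)=e(x)\) of \eqref{eq:grad_distance_identity}. Along a solution \(y_t\) of \(\dot y_t=\hat v(y_t)\) that stays in \(U_\rho\), we split
\[
\hat v(y_t)=v^\star(y_t)+\bigl(\hat v(y_t)-v^\star(y_t)\bigr).
\]
The \(v^\star\) part gives exactly \(-\lambda\|e(y_t)\|^2\), as in the proof of Proposition~2. For the error part, \(e(y_t)\in N_{\pi(y_t)}\widetilde\Gamma=\operatorname{Im}\Pi^\perp(\pi(y_t))\) and \(\Pi^\perp\) is a symmetric projector, so \(e^\top w=e^\top\Pi^\perp(\pi(y_t))w\) for every \(w\). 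Cauchy--Schwarz then bounds this term by \(\|e(y_t)\|\,\varepsilon_\perp\). Hence
\[
\frac{d}{dt}V(y_t)\le-\lambda\|e(y_t)\|^2+\varepsilon_\perp\|e(y_t)\|.
\]
This is the key step, and it uses only the transverse component of the error; tangential errors drop out entirely.

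Next we pass from \(V\) to \(u(t):=\|e(y_t)\|\). The obstacle is that \(u\) is not differentiable where \(e=0\). To avoid it, we work with the regularization \(u_\epsilon:=\sqrt{2V+\epsilon^2}\), which is \(C^1\) along the trajectory because \(V\circ y\) is \(C^1\): here \(\pi\) is \(C^1\) and \(y\) is a \(C^1\) solution, since \(\hat v\) is locally Lipschitz and hence continuous. We compute
\[
\dot u_\epsilon=\frac{\dot V}{u_\epsilon}\le\frac{-\lambda u^2+\varepsilon_\perp u}{u_\epsilon}\le-\lambda u_\epsilon+\lambda\frac{\epsilon^2}{u_\epsilon}+\varepsilon_\perp\le-\lambda u_\epsilon+\lambda\epsilon+\varepsilon_\perp,
\]
using \(u\le u_\epsilon\), \(u^2=u_\epsilon^2-\epsilon^2\) and \(u_\epsilon\ge\epsilon\). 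The scalar Grönwall/comparison lemma, applied to \(\frac{d}{dt}\bigl(e^{\lambda(t-t_0)}u_\epsilon\bigr)\le e^{\lambda(t-t_0)}(\lambda\epsilon+\varepsilon_\perp)\), gives
\[
u_\epsilon(t)\le e^{-\lambda(t-t_0)}u_\epsilon(t_0)+\frac{\varepsilon_\perp+\lambda\epsilon}{\lambda}\bigl(1-e^{-\lambda(t-t_0)}\bigr).
\]
Letting \(\epsilon\to0\) yields the first inequality in \eqref{eq:approx_tube_bound}. The second inequality holds because the right-hand side is a convex combination of \(\|e(y_{t_0})\|\) and \(\varepsilon_\perp/\lambda\).

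Finally, we handle the exit-time and limit claims. Since \(y_{t_0}\in U_\rho\) gives \(\|e(y_{t_0})\|<\rho\), and \(\varepsilon_\perp/\lambda<\rho\) by hypothesis, the bound keeps \(\|e(y_t)\|\le\max(\|e(y_{t_0})\|,\varepsilon_\perp/\lambda)<\rho\) uniformly on \([t_0,T_{\max})\). The transverse distance therefore stays strictly below the tube radius, and the trajectory cannot reach the lateral boundary. Any exit must then be longitudinal, past the ends of \(\Gamma\). This identification of the lateral boundary with \(\{\|e\|=\rho\}\) near the interior of the segment is the one point that needs care, and we argue it exactly as at the end of the proof of Proposition~2. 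If \(T_{\max}=\infty\), letting \(t\to\infty\) in the bound gives \(\limsup_{t\to\infty}\|e(y_t)\|\le\varepsilon_\perp/\lambda\).
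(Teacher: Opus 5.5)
Your proposal is correct and follows the paper's proof essentially step for step. It uses the same Lyapunov split with the tangential term vanishing, the same projector/Cauchy--Schwarz bound on the field error, the same regularization $\sqrt{2V+\delta^2}$ integrated by Gr\"onwall with $\delta\downarrow 0$, the same convex-combination bound, and the same strict-bound argument ruling out lateral exit.
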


\begin{proof}
All computations below are made on \([t_0,T_{\max})\), where the trajectory remains in \(U_\rho\). Since \(\pi\) is \(C^1\) on \(U_\rho\) and \(\hat v\) is locally Lipschitz, the map \(t\mapsto V(y_t)\) is \(C^1\) on this interval.

We first derive a differential inequality for the transverse displacement. Using the gradient identity \eqref{eq:grad_distance_identity} and splitting the approximate field into the ideal field and the field error, \(\hat v=v^\star+(\hat v-v^\star)\), the derivative of the Lyapunov function decomposes as
\begin{equation}
\label{eq:approx_lyapunov_split}
\frac{d}{dt}V(y_t)
=
e(y_t)^\top\dot y_t
=
e(y_t)^\top\hat v(y_t)
=
\underbrace{e(y_t)^\top v^\star(y_t)}_{\text{ideal field}}
\;+\;
\underbrace{e(y_t)^\top\bigl(\hat v(y_t)-v^\star(y_t)\bigr)}_{\text{field error}} .
\end{equation}

The first term is the transverse contraction of Proposition~\ref{prop:exp_error_decay}. The tubular decomposition gives \(e(y_t)\in N_{\pi(y_t)}\widetilde\Gamma\) and \(\tau(\pi(y_t))\in T_{\pi(y_t)}\widetilde\Gamma\), so the tangential part of the ideal field is orthogonal to the transverse displacement and drops out, leaving
\begin{equation}
\label{eq:approx_ideal_term}
e(y_t)^\top v^\star(y_t)
=
\underbrace{e(y_t)^\top\tau(\pi(y_t))}_{=\,0}
-
\lambda\|e(y_t)\|^2
=
-\lambda\|e(y_t)\|^2 .
\end{equation}

The second term involves only the transverse approximation error.
The projector \(\Pi^\perp(\pi(y_t))\) can be applied to \(e(y_t)\) without changing it, because it lies in \(N_{\pi(y_t)}\widetilde\Gamma\).
Furthermore, the projector is symmetric, so by Cauchy--Schwarz
\begin{align}
e(y_t)^\top\bigl(\hat v(y_t)-v^\star(y_t)\bigr)
&=
\bigl(\Pi^\perp(\pi(y_t))\,e(y_t)\bigr)^\top\bigl(\hat v(y_t)-v^\star(y_t)\bigr)
\nonumber\\[2pt]
&=
e(y_t)^\top\,\Pi^\perp(\pi(y_t))\bigl(\hat v(y_t)-v^\star(y_t)\bigr)
\nonumber\\[2pt]
&\le
\|e(y_t)\|\,\bigl\|\Pi^\perp(\pi(y_t))\bigl(\hat v(y_t)-v^\star(y_t)\bigr)\bigr\|
\nonumber\\[2pt]
&\le
\|e(y_t)\|\,\varepsilon_\perp .
\label{eq:approx_error_term}
\end{align}

Substituting \eqref{eq:approx_ideal_term} and \eqref{eq:approx_error_term} into \eqref{eq:approx_lyapunov_split}, and then expressing the transverse displacement through the Lyapunov function by \(\|e(y_t)\|^2=2V(y_t)\), we obtain
\begin{align}
\frac{d}{dt}V(y_t)
&\le
-\lambda\|e(y_t)\|^2
+
\|e(y_t)\|\,\varepsilon_\perp
\nonumber\\[2pt]
&=
-2\lambda V(y_t)
+
\sqrt{2V(y_t)}\,\varepsilon_\perp .
\label{eq:approx_lyapunov_ineq}
\end{align}
The contraction enters with the square of the transverse displacement while the approximation error enters only linearly, which is what makes the two effects balance at a finite radius. Only the normal component of the field error appears in this estimate.

We next integrate \eqref{eq:approx_lyapunov_ineq} by a Gr\"onwall-type comparison. Since \(\|e(y_t)\|=\sqrt{2V(y_t)}\) need not be differentiable where it vanishes, fix \(\delta>0\) and set
\[
u_\delta(t)
:=
\sqrt{2V(y_t)+\delta^2}
\ge
\max\bigl(\|e(y_t)\|,\ \delta\bigr).
\]
Then \(u_\delta\) is \(C^1\) on \([t_0,T_{\max})\), and by \eqref{eq:approx_lyapunov_ineq},
\[
\dot u_\delta(t)
=
\frac{1}{u_\delta(t)}\frac{d}{dt}V(y_t)
\le
-\lambda\,\frac{u_\delta(t)^2-\delta^2}{u_\delta(t)}
+
\frac{\|e(y_t)\|}{u_\delta(t)}\,\varepsilon_\perp
\le
-\lambda u_\delta(t)+\lambda\delta+\varepsilon_\perp,
\]
where the first inequality uses \(2V(y_t)=u_\delta(t)^2-\delta^2\), and the second uses \(\delta^2/u_\delta(t)\le\delta\) and \(\|e(y_t)\|\le u_\delta(t)\).
We now multiply both sides by \(\exp(\lambda(t-t_0))\), and obtain
\[
\exp(\lambda(t-t_0))\dot u_\delta(t)
+
\lambda\exp(\lambda(t-t_0))u_\delta(t)
\le
\bigl(\varepsilon_\perp+\lambda\delta\bigr)\exp(\lambda(t-t_0)).
\]
We identify the left side with the product rule applied to \(\exp(\lambda(t-t_0))u_\delta(t)\):
\[
\frac{d}{dt}\Bigl(\exp(\lambda(t-t_0))u_\delta(t)\Bigr)
=
\exp(\lambda(t-t_0))\dot u_\delta(t)
+
\lambda\exp(\lambda(t-t_0))u_\delta(t)
\le
\bigl(\varepsilon_\perp+\lambda\delta\bigr)\exp(\lambda(t-t_0)) .
\]

Both sides are now integrated from \(t_0\) to \(t\). The left side is the integral of a derivative, and the right side is an elementary exponential integral,
\begin{align*}
\int_{t_0}^{t}\frac{d}{ds}\Bigl(\exp(\lambda(s-t_0))u_\delta(s)\Bigr)\,ds
&=
\exp(\lambda(t-t_0))u_\delta(t)-u_\delta(t_0),
\\[2pt]
\bigl(\varepsilon_\perp+\lambda\delta\bigr)\int_{t_0}^{t}\exp(\lambda(s-t_0))\,ds
&=
\bigl(\varepsilon_\perp+\lambda\delta\bigr)
\Bigl[\frac{\exp(\lambda(s-t_0))}{\lambda}\Bigr]_{s=t_0}^{s=t}\\[2pt]
&=
\frac{\varepsilon_\perp+\lambda\delta}{\lambda}\bigl(\exp(\lambda(t-t_0))-1\bigr).
\end{align*}
The first quantity is therefore at most the second,
\[
\exp(\lambda(t-t_0))u_\delta(t)-u_\delta(t_0)
\le
\frac{\varepsilon_\perp+\lambda\delta}{\lambda}\bigl(\exp(\lambda(t-t_0))-1\bigr),
\]
and adding \(u_\delta(t_0)\) to both sides and then multiplying by the positive number \(\exp(-\lambda(t-t_0))\) gives
\[
u_\delta(t)
\le
\exp(-\lambda(t-t_0))u_\delta(t_0)
+
\frac{\varepsilon_\perp+\lambda\delta}{\lambda}
\bigl(1-\exp(-\lambda(t-t_0))\bigr).
\]
Letting \(\delta\downarrow0\) and using \(\|e(y_t)\|\le u_\delta(t)\) together with \(u_\delta(t_0)\to\|e(y_{t_0})\|\) gives the first inequality in \eqref{eq:approx_tube_bound}.

For the second inequality, abbreviate \(\theta:=\exp(-\lambda(t-t_0))\), which lies in \((0,1]\) for \(t\ge t_0\). The bound just obtained reads \(\|e(y_t)\|\le\theta\|e(y_{t_0})\|+(1-\theta)\varepsilon_\perp/\lambda\), and the two weights \(\theta\) and \(1-\theta\) are nonnegative and add up to one. Raising each of the two weighted values to the larger of them therefore gives
\begin{align*}
\theta\|e(y_{t_0})\|+(1-\theta)\frac{\varepsilon_\perp}{\lambda}
&\le
\theta\max\Bigl(\|e(y_{t_0})\|,\ \frac{\varepsilon_\perp}{\lambda}\Bigr)
+
(1-\theta)\max\Bigl(\|e(y_{t_0})\|,\ \frac{\varepsilon_\perp}{\lambda}\Bigr)
\\[2pt]
&=
\bigl(\theta+(1-\theta)\bigr)\max\Bigl(\|e(y_{t_0})\|,\ \frac{\varepsilon_\perp}{\lambda}\Bigr)
\\[2pt]
&=
\max\Bigl(\|e(y_{t_0})\|,\ \frac{\varepsilon_\perp}{\lambda}\Bigr),
\end{align*}
which is the second inequality in \eqref{eq:approx_tube_bound}.

It remains to rule out lateral exit. The initial point lies in \(U_\rho\), which gives \(\|e(y_{t_0})\|<\rho\), and dividing the assumption \(\varepsilon_\perp<\lambda\rho\) by \(\lambda>0\) gives \(\varepsilon_\perp/\lambda<\rho\). Both entries of the maximum are therefore strictly smaller than the tube radius, and the second inequality in \eqref{eq:approx_tube_bound} bounds the transverse displacement by that maximum, so
\[
\|e(y_t)\|
\ \le\
\bar\rho
:=
\max\Bigl(\|e(y_{t_0})\|,\ \frac{\varepsilon_\perp}{\lambda}\Bigr)
\ <\
\rho
\qquad
\text{for all }t\in[t_0,T_{\max}).
\]
A lateral exit at a finite \(T_{\max}\) would mean that the transverse displacement reaches the tube radius, so that \(\|e(y_t)\|\to\rho\) as \(t\uparrow T_{\max}\) by continuity, and the strict bound above rules this out. Hence \(T_{\max}\) is the first possible time at which the trajectory leaves \(U_\rho\) through its longitudinal ends, exactly as in Proposition~\ref{prop:exp_error_decay}.

Finally, suppose \(T_{\max}=\infty\). Since \(\lambda>0\), the factor \(\exp(-\lambda(t-t_0))\) tends to \(0\) as \(t\to\infty\), so in the first inequality of \eqref{eq:approx_tube_bound} the term carrying the initial transverse displacement vanishes and the factor \(1-\exp(-\lambda(t-t_0))\) tends to \(1\), which leaves
\[
\limsup_{t\to\infty}\|e(y_t)\|
\le
\lim_{t\to\infty}
\Bigl(
\exp(-\lambda(t-t_0))\|e(y_{t_0})\|
+
\frac{\varepsilon_\perp}{\lambda}\bigl(1-\exp(-\lambda(t-t_0))\bigr)
\Bigr)
=
\frac{\varepsilon_\perp}{\lambda},
\]
which is the last assertion of the proposition and concludes the proof.
\end{proof}

Note that without our proposed correction, i.e. \(\lambda=0\), the same argument yields only
\[\frac{d}{dt}V(y_t)\le\sqrt{2V(y_t)}\,\varepsilon_\perp\]
and hence the linear-in-time drift
\[
\|e(y_t)\|
\le
\|e(y_{t_0})\|
+
\varepsilon_\perp\,(t-t_0).
\]
The corrected targets of \eqref{eq:corrected_secant_target} thus upgrade unbounded error growth to a time-uniform tube of radius \(\varepsilon_\perp/\lambda\). Larger \(\lambda\) tightens the tube but stiffens the target field.

\subsection{Secant correction for finite-step targets}
\label{app:secant_objective}

The continuous-time target in \eqref{eq:target_vf} adds the transverse correction \(-\lambda e\) to the tangent velocity. As shown in Proposition~\ref{prop:exp_error_decay}, this makes the norm of a transverse perturbation decay exactly as \(\exp(-\lambda h)\) over a time interval of length \(h\). We seek the analogous correction for the finite-step secant target \(\Delta_h x_t\).

Without a correction, applying the secant target at the perturbed point \(x_t+e\) gives
\[
(x_t+e)+h\Delta_h x_t
=
x_{t+h}+e.
\]
Thus the reference point advances from \(x_t\) to \(x_{t+h}\), but the perturbation is transported unchanged. The corrected target should instead leave a displacement of norm \(\exp(-\lambda h)\|e\|\) from \(x_{t+h}\).

This condition does not determine a unique target: it only requires the endpoint to lie on a sphere around \(x_{t+h}\). We therefore select, among all targets with the prescribed decay, the one closest to the original secant target. The equality constraint below is the finite-step analogue of this exact norm decay, measured relative to the secant endpoint \(x_{t+h}\).

\begin{proposition}[Minimal finite-step correction]
\label{prop:minimal_secant_correction}
Fix \(h>0\), \(\lambda>0\), and two distinct reference points \(x_t\) and \(x_{t+h}\). Let
\[
\Delta_h x_t
:=
\frac{x_{t+h}-x_t}{h},
\qquad
e:=\sigma_k\eta,
\qquad
\sigma_k>0,
\qquad
\|\eta\|=1,
\qquad
\eta^\top(x_{t+h}-x_t)=0.
\]
Define the corrected target by
\begin{equation}
\label{eq:min_correction}
\widetilde{\Delta}_h x_t
:=
\operatorname*{arg\,min}_{v}
\|v-\Delta_h x_t\|
\quad
\text{subject to}
\quad
\|(x_t+e)+hv-x_{t+h}\|
=
\exp(-\lambda h)\|e\|.
\end{equation}
Then the minimizer is unique and is given by
\[
\widetilde{\Delta}_h x_t
=
\Delta_h x_t
+
c(h,\lambda)e,
\qquad
c(h,\lambda)
=
\frac{\exp(-\lambda h)-1}{h}.
\]
In particular, one corrected step maps
\[
x_t+e
\longmapsto
x_{t+h}+\exp(-\lambda h)e.
\]
\end{proposition}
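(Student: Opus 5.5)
We reduce the constrained problem to a projection of a point onto a sphere. Substitute \(w:=(x_t+e)+hv-x_{t+h}\), the displacement of the updated point from the secant endpoint. Since \(x_t+h\Delta_h x_t=x_{t+h}\), we have \(w=e+h(v-\Delta_h x_t)\), so \(v-\Delta_h x_t=(w-e)/h\). Because \(h>0\) is fixed, minimizing \(\|v-\Delta_h x_t\|\) is equivalent to minimizing \(\|w-e\|\). The constraint becomes \(\|w\|=r:=\exp(-\lambda h)\|e\|\). The map \(v\mapsto w\) is an affine bijection, so uniqueness and existence transfer directly between the two problems.

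The core step is the elementary fact that the nearest point to \(e\) on the sphere \(S_r=\{w:\|w\|=r\}\) is unique when \(e\neq 0\), and equals \(r\,e/\|e\|\). Here \(e\neq0\) because \(\sigma_k>0\) and \(\|\eta\|=1\). To prove this, expand
\[
\|w-e\|^2=r^2+\|e\|^2-2w^\top e .
\]
Cauchy--Schwarz gives \(w^\top e\le r\|e\|\), with equality if and only if \(w\) is a nonnegative multiple of \(e\). On \(S_r\) this forces \(w=r e/\|e\|\) (and \(r>0\) since \(\|e\|>0\)). This yields the minimum value \((\|e\|-r)^2\) and establishes uniqueness. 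Existence is also clear here, since the sphere is compact; in any case the minimizer is exhibited explicitly.

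Substituting back, we get \(w^\star=\exp(-\lambda h)e\), and hence
\[
\widetilde\Delta_h x_t=\Delta_h x_t+\frac{w^\star-e}{h}=\Delta_h x_t+\frac{\exp(-\lambda h)-1}{h}\,e ,
\]
which is the claimed \(c(h,\lambda)\). The final statement follows from \((x_t+e)+h\widetilde\Delta_h x_t=x_{t+h}+w^\star\).

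The orthogonality assumption \(\eta^\top(x_{t+h}-x_t)=0\) and the distinctness of the reference points are not needed for the optimization itself. They only ensure the geometric interpretation: the residual displacement \(\exp(-\lambda h)e\) is normal to the secant, consistent with the transverse picture of Proposition~\ref{prop:exp_error_decay}. It is worth noting this explicitly.

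There is no real obstacle in this argument. The only point needing care is the equality case in Cauchy--Schwarz, which requires \(e\neq0\) to guarantee uniqueness. As a sanity check, I would also remark that \(c(h,\lambda)\to-\lambda\) as \(h\downarrow0\), matching \(c(0,\lambda)\).
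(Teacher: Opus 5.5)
Your proof is correct and takes essentially the same route as the paper: both shift to the correction variable (your $w-e$ is the paper's $hu$), reduce the problem to finding the nearest point to $e$ on the sphere of radius $\exp(-\lambda h)\|e\|$, and obtain the radial point $\exp(-\lambda h)e$ as the unique minimizer from an equality case. The only difference is cosmetic: the paper lower-bounds $h\|u\|$ with the reverse triangle inequality, while you expand the square and apply Cauchy--Schwarz, which makes the uniqueness step slightly more transparent and does not use $\exp(-\lambda h)<1$.
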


\begin{proof}
Write the candidate target as the secant target plus an unknown correction,
\[
v=\Delta_h x_t+u.
\]
Since \(x_t+h\Delta_h x_t=x_{t+h}\), its update from the perturbed point satisfies
\begin{align*}
(x_t+e)+hv-x_{t+h}
&=
(x_t+e)+h(\Delta_h x_t+u)-x_{t+h}\\
&=
e+hu.
\end{align*}
Therefore, \eqref{eq:min_correction} is equivalent to
\[
\operatorname*{minimize}_{u}\ \|u\|
\qquad
\text{subject to}
\qquad
\|e+hu\|=\exp(-\lambda h)\|e\|.
\]
Set \(q:=\exp(-\lambda h)\in(0,1)\). For every feasible correction \(u\), the reverse triangle inequality yields
\begin{align*}
h\|u\|
&=
\|(e+hu)-e\|\\
&\ge
\|e\|-\|e+hu\|\\
&=
(1-q)\|e\|.
\end{align*}
Hence any feasible correction must satisfy
\[
\|u\|
\ge
\frac{1-q}{h}\|e\|.
\]

Consider now the radial correction
\[
u^\star
:=
-\frac{1-q}{h}e
=
\frac{q-1}{h}e.
\]
It is feasible because
\[
e+hu^\star
=
e-(1-q)e
=
qe,
\]
and therefore
\[
\|e+hu^\star\|
=
q\|e\|.
\]
Moreover,
\[
\|u^\star\|
=
\frac{1-q}{h}\|e\|,
\]
so \(u^\star\) attains the lower bound and is a minimizer.

To see uniqueness, equality in the reverse triangle inequality requires \(e+hu\) to point in the same direction as \(e\). The constraint fixes its norm to \(q\|e\|\), and hence
\[
e+hu=qe.
\]
Solving for \(u\) gives \(u=u^\star\). Consequently,
\[
\widetilde{\Delta}_h x_t
=
\Delta_h x_t+u^\star
=
\Delta_h x_t
+
\frac{\exp(-\lambda h)-1}{h}e,
\]
which proves the claim.
\end{proof}

The orthogonality assumption is not needed to solve the minimization problem: the same minimizer is obtained for any nonzero perturbation \(e\). Its role is geometric. Writing \(s:=x_{t+h}-x_t\), the condition \(e\perp s\) gives
\[
\|e-\alpha s\|^2
=
\|e\|^2+\alpha^2\|s\|^2,
\qquad
\alpha\in\mathbb R.
\]
Hence \(x_t\) is the nearest point on the secant line to \(x_t+e\), and \(\|e\|=\sigma_k\) is its transverse displacement from that line. Moreover, since \(\Delta_h x_t=s/h\), the correction is orthogonal to the secant velocity, and therefore
\[
\|\widetilde{\Delta}_h x_t\|^2
=
\|\Delta_h x_t\|^2
+
c(h,\lambda)^2\sigma_k^2.
\]
Thus the correction leaves the longitudinal component unchanged, mirroring \(-\lambda e\perp\tau\) in the continuous-time field. This is why finite-step perturbations are chosen orthogonal to the secant rather than only to the tangent: for a curved path, tangent-normal noise can have a component along the secant and would therefore also modify the longitudinal step.

The finite-step coefficient converges to the continuous-time correction as \(h\to0\). Indeed,
\[
\exp(-\lambda h)
=
1-\lambda h+\mathcal O(h^2),
\]
and therefore
\[
c(h,\lambda)
=
\frac{\exp(-\lambda h)-1}{h}
=
-\lambda+\mathcal O(h).
\]
If \(x\) is differentiable at \(t\), \(\dot x_t\neq0\), and the secant-normal directions satisfy \(\eta_h\to\eta_0\), then \(\eta_0\perp\dot x_t\) and
\[
\widetilde{\Delta}_h x_t
\longrightarrow
\dot x_t-\lambda\sigma_k\eta_0,
\]
recovering the transverse correction in \eqref{eq:target_vf}.

\section{Datasets and Training Details}
\label{app:data_training_details}

This section provides implementation details for the experiments in the main paper.
We focus on choices that affect the training objective, target construction, optimization procedure, datasets, and model classes.
Low-level engineering details, such as tensor reshaping, data-loader indexing, and hardware-specific mixed-precision settings, are omitted and will be provided with the released code.

All experiments were trained on a single GPU in a shared compute cluster, using NVIDIA L40S or RTX A6000 GPUs with 48GB of memory for the 2D, PDE, and molecular experiments, and a single data-center GPU of the NVIDIA A100-40GB class for the KTH experiments.
The full generative-model experiments reported in the paper were trained for at most three days per run.
For the 2D, PDE, and molecular experiments, training of the auxiliary flow-matching model and the score-induced interpolator was limited to at most one day each on the same hardware. Each stage of the KTH latent pipeline (frame autoencoder, flow-matching model, interpolator, generative model) trains on the order of one to two days on a single GPU.

\subsection{Implementation Details: Full Training Algorithm}
\label{app:full_training_alg}

This subsection records how the robustified latent objective of the main text is realized in code. The constant factor \(1/(2\sigma^2)\) is absorbed into the weight of the reconstruction term, and the interpolation model used to construct \((X_t,\Delta_h X_t)\) is pretrained and frozen during this stage.

\paragraph{Local target sampling.}
Training examples are discrete sequences \(X=(x_0,\ldots,x_{T-1})\).
At each optimization step we sample adjacent endpoint pairs \((x_i,x_{i+1})\), a normalized interpolation time \(t\in[0,1]\), and a step size \(h\ge 0\), so that every batch mixes tangent queries at \(h=0\) with finite-step secant queries at \(h>0\).
For score-induced interpolation, the frozen prior returns these targets through the lifted-space interpolation and denoising procedure of Appendix~\ref{app:interp_training}, and experiments using linear interpolation call the same interface with the Euclidean linear path.

\paragraph{Transverse robustness.}
When the transverse robustness objective is enabled, the query point is perturbed by \(\sigma_k\eta\), with \(\sigma_k\) drawn from a small set of noise scales and \(\eta\) projected to be orthogonal to \(\Delta_h X_t\), so that the noise probes directions transverse to the tangent for \(h=0\) and to the secant for \(h>0\).
The regression target is then the corrected target (\eqref{eq:corrected_secant_target}) of the main text, whose finite-step coefficient is derived in Appendix~\ref{app:secant_objective}.
Setting \(\sigma_k=0\) disables the objective and recovers ordinary local target regression.

\paragraph{KL gradients.}
In most experiments we stop gradients from the KL term into the posterior parameters, while the posterior still receives gradients from the reconstruction term.
The KL therefore acts as a prior-matching objective rather than as a force that directly contracts the posterior, which is related to posterior matching approaches~\citep{strauss2022posterior}.
The trained components are the vector field and, when latent conditioning is enabled, the two encoders, optimized with AdamW~\citep{loshchilov2017decoupled} and learning-rate warmup.

\begin{algorithm}[t]
\caption{Training the continuous-time generative vector field}
\label{alg:full_training}
\begin{algorithmic}[1]
\Require Discrete training sequences \(X=(x_0,\ldots,x_{T-1})\); frozen interpolation prior; vector field \(v_\theta\); optional posterior \(q_\phi\); optional prior \(p_\vartheta\)
\For{each optimization step}
    \State Sample sequences and adjacent endpoint pairs \((x_i,x_{i+1})\)
    \State Sample interpolation times \(t\in[0,1]\) and step sizes \(h\ge 0\)
    \State Use the frozen interpolation prior to obtain \((X_t,\Delta_h X_t)\)
    \If{transverse robustness is enabled}
        \State Sample a noise scale \(\sigma_k\) and \(\eta\) with \(\|\eta\|=1\) and \(\eta^\top\Delta_h X_t=0\)
        \State Set \(\widetilde X_t \gets X_t+\sigma_k\eta\)
        \State Set \(\widetilde{\Delta}_h X_t \gets \Delta_h X_t+c(h,\lambda)\sigma_k\eta\)
    \Else
        \State Set \(\widetilde X_t \gets X_t\) and \(\widetilde{\Delta}_h X_t\gets\Delta_h X_t\)
    \EndIf
    \If{latent conditioning is enabled}
        \State Sample \(z\sim q_\phi(z\mid X)\)
        \State Predict \(\widehat V\gets v_\theta(\widetilde X_t,z,h)\)
        \State Compute the KL term between \(q_\phi(z\mid X)\) and \(p_\vartheta(z\mid X_{\le t})\)
    \Else
        \State Predict \(\widehat V\gets v_\theta(\widetilde X_t,h)\)
        \State Set the KL term to zero
    \EndIf
    \State Compute \(\mathcal L_{\mathrm{rec}}=\|\widehat V-\widetilde{\Delta}_h X_t\|^2\)
    \State Update trainable parameters using \(\mathcal L_{\mathrm{rec}}+\mathcal L_{\mathrm{KL}}\)
\EndFor
\end{algorithmic}
\end{algorithm}

\paragraph{Main training hyperparameters.}
Table~\ref{tab:main_hparams} summarizes the main optimization hyperparameters and model sizes for the reported experiments.

\begin{table}[h]
\centering
\caption{Main optimization hyperparameters and parameter counts. Parameter counts refer to trainable parameters in the corresponding stage (vector field plus latent encoders, where applicable).}
\label{tab:main_hparams}
\resizebox{\linewidth}{!}{
\begin{tabular}{llccc}
\toprule
Experiment & Stage & Batch size & Learning rate & Parameters \\
\midrule
2D example & FM model & 256 & $10^{-3}$  & 215.6k \\
2D example & Interpolator & 256 & $10^{-3}$  & 215.8k \\
2D example & Generative model & 32 seq. & $10^{-3}$ & 2.15M \\
\midrule
KTH & Frame autoencoder & 32 & $10^{-4}$ & 55.3M \\
KTH & FM model & 128 & $10^{-4}$ & 24.9M \\
KTH & Interpolator & 128 & $10^{-4}$ & 1.78M \\
KTH & Generative model & 16 seq. & $10^{-4}$ & 28.74M \\
\midrule
Gray--Scott & FM model & 128 & $10^{-4}$ & 1.78M \\
Gray--Scott & Interpolator & 64 & $10^{-4}$  & 1.78M \\
Gray--Scott & Generative model & 64 seq. & $10^{-4}$  & 9.21M \\
\midrule
Navier--Stokes & FM model & 128 & $10^{-4}$  & 1.78M \\
Navier--Stokes & Interpolator & 64 & $10^{-4}$  & 1.78M \\
Navier--Stokes & Generative model & 128 seq. & $10^{-4}$  & 63.44M \\
\midrule
Molecular dynamics & FM model & 1024 & $10^{-4}$  & 1.20M \\
Molecular dynamics & Interpolator & 64 & $10^{-3}$ & 735.5k \\
Molecular dynamics & Generative model & 256 & $2\cdot 10^{-4}$  & 1.46M \\
\bottomrule
\end{tabular}
}
\end{table}

\subsection{Implementation Details: Neural Architectures}
\label{app:architectures}

We use different backbone classes depending on the state representation, and all of them expose the same interface to the training objective, mapping a state \(X_t\), a step size \(h\), and an optional latent variable \(z\) to a local tangent or secant target. The low-dimensional trajectories of the 2D example are modeled with residual multilayer perceptrons and temporal convolutional encoders. PDE-valued spatiotemporal fields use a U-Net vector-field backbone with latent cross-attention, together with convolutional--temporal encoders for the posterior and prior. Molecular trajectories use an equivariant graph neural network operating directly on atomic coordinates. The KTH latent-video model uses a related U-Net with spatial-grid latent conditioning, described in Appendix~\ref{data:kth}.

\paragraph{Low-dimensional trajectories.}
For the 2D data, the vector field is a residual MLP with GELU nonlinearities, into which the step size \(h\) and, when enabled, the latent variable \(z\) enter through small embedding MLPs that are added to the hidden representation of each block. The posterior and prior encoders are causal temporal convolutional networks over the flattened states, with exponentially increasing dilations, residual connections, and layer normalization, and their context features are mapped to the parameters of the Gaussian posterior or prior over \(z\).

\paragraph{Image-valued spatiotemporal fields.}
For PDE-valued image fields, the vector field is a multiscale two-dimensional U-Net with a convolutional lifting layer, residual encoder blocks with downsampling, a bottleneck, and a symmetric decoder with skip connections. The step size enters through a Fourier time embedding injected into all residual blocks.

Latent conditioning uses cross-attention rather than additive conditioning. The latent vector is projected into a small set of tokens that the spatial feature maps query through multi-head cross-attention at selected encoder stages, the bottleneck, and selected decoder stages, so the image features retain spatial structure while the latent supplies global trajectory-level information. The posterior and prior encoders embed each frame with a residual CNN, using convolutional residual stages, normalization, adaptive spatial pooling, and a linear projection, and pass the per-frame embeddings through a causal dilated TCN. Causality ensures that the prior at time \(t\) depends only on available context.

\paragraph{Molecular trajectories.}
Molecular states are atom types and three-dimensional coordinates, and the vector field is an EGNN on a fixed molecular graph. Node features are initialized from atom-type embeddings and conditioned on interpolation time and step size through sinusoidal embeddings, and a latent variable, when enabled, is projected to the node-feature dimension and added to all node features. Each layer computes edge messages from source and target node features together with squared interatomic distances, and these messages update node features and coordinates, where the coordinate updates are built from relative displacement vectors, which preserves translation and rotation equivariance. After several message-passing layers, the network outputs either the accumulated coordinate displacement or a coordinate velocity read from the final node features.

\subsection{Demonstrative Example}
\label{data:toy}

The square manifold of Section~\ref{sec:toy} is built from two smooth cubic Bézier arcs per side, one lying closer to the square and one farther away, which gives the inner and the outer branch.
The score model used as interpolation prior is pretrained on noisy samples drawn uniformly from the eight arcs, so it sees the continuous manifold geometry while the sequence model sees only sparse points on closed loops.
Each training sequence alternates between square corners and side midpoints and repeats its first point at the end to close the trajectory.
Small Gaussian noise is added to the eight non-repeated nodes, and the temporal phase is randomized by cyclically shifting the starting point before appending the closure point, so different sequences can share an initial segment and diverge later depending on the unobserved branch choices.

In the experiment, the score-induced interpolator is trained from the pretrained score model and then frozen.
Transverse robustness noise uses scales on a geometric grid in \([0.001, 0.05]\) with contraction rate \(\lambda=10\).
At inference time, we condition on prefixes of different lengths and sample trajectories from the learned prior.

The ablation shown in Table~\ref{tab:toy_ablation} retrains the 2D model over all combinations of
\[\{\text{score},\text{linear targets}\}\times\{\text{vanilla},\text{input noise},\text{corrected}\}\times\{\text{latent on},\text{off}\}.\]
Each of the twelve models changes only the listed components and remains identical otherwise. Linear targets replace the score-induced paths by straight lines between the loop nodes, input noise perturbs the inputs while leaving the targets uncorrected (\(\lambda=0\)), corrected applies the paper's target correction with \(\lambda=10\), and latent-off removes the stochastic framework (both encoders removed and no latent conditioning). All models share the paper model's data loader and training budget (full nine-node loops as training items, all 1024 loops, 1000 epochs corresponding to 32k steps), with one training run per configuration. Three metrics are evaluated for every model. \emph{Off-manifold} is the mean distance of a conditioned one-segment rollout to the analytic manifold. \emph{Branch accuracy} is the fraction of sampled ground-truth loops (random branch configurations) whose conditioned rollout reproduces the full configuration, where each side is classified by the proximity of the trajectory to the central bulge of the inner versus the outer arc. A field that ignores the latent reaches \(\approx 1/16\) by chance. \emph{Rate} is the transverse contraction rate measured as in Appendix~\ref{app:inference_details}.

\subsection{KTH Action}
\label{data:kth}
\label{app:kth_details}
This subsection describes the exact configuration behind Figure~\ref{fig:kth_extrap}, Figure~\ref{fig:kth_interp}, and Tables~\ref{tab:vidode} and~\ref{tab:vidode_extrap}.

\subsubsection{Data and evaluation protocol}
We use the KTH Actions dataset~\citep{schuldt2004recognizing}, which contains 25 subjects performing 6 actions in 599 grayscale videos at $120{\times}160$ and 25\,fps. Following Vid-ODE~\citep{park2021vidode}, each frame is center-cropped to $120{\times}120$ and resized to $128{\times}128$. We use the 16/9 subject split of the official Vid-ODE preprocessing, which holds out persons 2, 3, 5--10, and 22 for testing and leaves 383 training and 216 test videos.

All quantitative results follow Vid-ODE's own test protocol and metric implementation verbatim. The evaluation window is the first 10 frames of every test video, and the two protocols read it as follows.
\begin{itemize}
\item \emph{Interpolation} observes frames $\{1,3,5,7,9\}$ (1-indexed) and is scored on the four held-out frames $\{2,4,6,8\}$.
\item \emph{Extrapolation} observes frames $1$--$5$ and is scored on frames $6$--$10$.
\end{itemize}
The three metrics are computed exactly as in the reference implementation.
\begin{itemize}
\item SSIM on grayscale images, with data range 255, Gaussian weighting, and no sample covariance.
\item LPIPS with the AlexNet variant on $128{\times}128$ inputs scaled to $[-1,1]$.
\item PSNR as the dataset-level value $10\log_{10}(1/\overline{\mathrm{MSE}})$, with MSE on $[0,1]$.
\end{itemize}
All predictions are quantized to 8 bits before scoring, which matches the PNG round trip of the reference implementation.

\begin{table}[t]
\centering
\caption{Optimization settings of the three KTH training stages. A dash marks a setting that the stage configuration does not specify separately. Parameter counts of the generative model are the velocity field followed by the posterior and prior encoders.}
\label{tab:kth_optim}
\resizebox{\linewidth}{!}{
\begin{tabular}{lcccc}
\toprule
& Stage 1 & \multicolumn{2}{c}{Stage 2} & Stage 3 \\
\cmidrule(lr){2-2}\cmidrule(lr){3-4}\cmidrule(lr){5-5}
Setting & Frame autoencoder & Score model & Interpolator & Generative model \\
\midrule
Parameters      & 55.3M & 24.9M & 1.78M & 23.35M $+$ 2.71M / 2.68M \\
Training item   & single frame & single frame latent & consecutive latent pair & length-10 latent window \\
Batch size      & 32 & 128 & 128 & 16 windows, 2 queries each \\
Optimizer       & Adam & AdamW & Adam & AdamW \\
Learning rate   & $10^{-4}$ & $10^{-4}$ & $10^{-4}$ & $10^{-4}$ \\
Weight decay    & none & 0.01 & none & 0.01 (field, rank $\ge2$) \\
Schedule        & 1\% warmup, cosine & 1\% warmup, cosine & 1\% warmup, cosine & 0.3\% warmup, then constant \\
Budget          & 80k steps & 120 epochs & 60k steps & 100k steps \\
Gradient clip   & 1.0 & --- & --- & 1.0, per module \\
EMA decay       & 0.999 (used) & 0.9999 (used) & --- & --- \\
\bottomrule
\end{tabular}
}
\end{table}

\subsubsection{Stage 1: frame autoencoder}
All generative components operate in the latent space of a frame autoencoder with a KL-regularized bottleneck (LDM-style \emph{AutoencoderKL}), trained on individual frames of the KTH training split. The encoder and decoder use GroupNorm(32) and SiLU ResNet blocks with base width 128, channel multipliers $(1,2,4)$, two residual blocks per resolution, and a single-head self-attention block at the bottleneck. We downsample with a factor $f{=}4$ and use latent dimensionality $32{\times}32{\times}4$, which yields 55.3M parameters in total.

Optimization follows Table~\ref{tab:kth_optim}. The loss is $\mathcal{L}_1 + \mathcal{L}_2 + \mathcal{L}_{\mathrm{LPIPS(VGG)}} + 10^{-6}\,\mathrm{KL}$, and random horizontal flips with $p{=}0.5$ serve as an augmentation for autoencoder training only. Test-set reconstruction reaches ${\approx}42$\,dB PSNR.

Frames are given to the autoencoder in $[0,1]$ and encoded from the unflipped videos with the EMA weights and the deterministic posterior \emph{mean}. The resulting latents are normalized per channel by statistics computed over these training-split latents. The autoencoder is frozen for all subsequent stages, and sequence latents for generative training are precomputed once and stored in fp16.

\subsubsection{Stage 2: score model and interpolation path}

\paragraph{Flow-matching score model.}
A UNet with base widths $192/256/384$ over resolutions $32/16/8$, embedding dimension 256, self-attention at $16^2$ and $8^2$, and 24.9M parameters is trained with conditional flow matching between a standard normal at $t{=}0$ and the latent-frame distribution at $t{=}1$. The regression target is $x_1 - x_0$ along the linear path, under an unweighted mean-squared-error loss. Time is sampled per example from an equal mixture of $\mathcal{U}(0,1)$ and $\mathcal{U}(0.8,1)$, which concentrates capacity in the near-data regime that the interpolation path queries. One epoch is one pass over all training-split latent frames, and the EMA weights are the ones used downstream.

\paragraph{Interpolation path.}
Given two consecutive frame latents $x_0, x_1$, both endpoints are \emph{lifted} by integrating the score model's probability-flow ODE for 10 Euler steps of size 0.02, from $t{=}1$ to $t_{\mathrm{lift}}{=}0.8$, which is 20\% of the way toward the noise distribution.
The interpolator acts in the lifted space and is realized as a small UNet with four levels of width 64, no attention, and 1.78M parameters.
Similar to the step size embedding of Stage~3, $s$ enters the UNet through a Gaussian Fourier time embedding.
The path state at $s$ is then \emph{denoised} by integrating the same ODE back from $t_{\mathrm{lift}}$ to $t{=}1$ with 10 Euler steps of 0.02.

The interpolator network is trained with the metric-energy objective of the main text, with the correction weight $\alpha$ warmed linearly from 0 to 1 over the first 10\% of steps. Training pairs are all consecutive latent transitions of the training split, with one path time $s \sim \mathcal{U}(0,1)$ per pair, and the score-induced metric is evaluated at the lifted time $r_m = t_{\mathrm{lift}} = 0.8$.  The \emph{linear} ablation, reported as ``Ours (linear)'' in Tables~\ref{tab:vidode} and~\ref{tab:vidode_extrap}, replaces this construction by plain linear interpolation of the latents and is identical in everything else.

\subsubsection{Stage 3: latent video-generation model}
\paragraph{Architecture.}
The generative model is a conditional velocity field $v_\theta(x, \Delta t, z)$ over frame latents $x \in \mathbb{R}^{4\times32\times32}$, implemented as a UNet with four resolutions $32/16/8/4$, base widths $128/192/256/320$, two pre-activation AdaGN residual blocks per level, multi-head self-attention with 8 heads at $8^2$, $4^2$, and the bottleneck, and 23.35M parameters. The step size $\Delta t$ enters through Gaussian Fourier features, 64 random frequencies at $\sigma{=}2$, followed by a two-layer MLP into the 256-d embedding that modulates every AdaGN. The conditioning variable $z$ is a \emph{spatial} latent grid of shape $8{\times}32{\times}32$, channel-concatenated with $x$ at the network input. The model has no absolute-time input.

Two convolutional encoders map channel-stacked frame latents to per-position diagonal Gaussians over $z$. Their trunk is a $3{\times}3$ convolution to width 128, two residual blocks, a strided convolution to width 192, two residual blocks, nearest-neighbor upsampling back to 128, one residual block, and a $1{\times}1$ head to $(\mu, \log\sigma^2)$, where $\log\sigma^2$ is clamped to $[-8, 4]$ and initialized at $-2$. Input frames are channel-concatenated in ascending temporal order, which gives $5 \cdot 4 = 20$ input channels for the prior and $40$ for the posterior. The interpolation context is stacked the same way, so the temporal spacing of the gaps is implicit in the protocol-specific weights rather than encoded explicitly. The posterior sees the full training window $x_{1:10}$ and has 2.71M parameters, and the prior sees exactly the frames observed under the respective protocol and has 2.68M parameters, the five odd-indexed frames for interpolation and the first five frames for extrapolation. Interpolation and extrapolation use separately trained models, identical in every hyperparameter except the fixed index set the prior encoder observes.

\begin{table}[t]
\centering
\caption{Training details of the KTH generative model. Optimizer settings are in Table~\ref{tab:kth_optim}.}
\label{tab:kth_stage3}
\begin{tabular}{ll}
\toprule
Quantity & Value \\
\midrule
\multicolumn{2}{l}{\emph{Sampling of $(t,\Delta t)$}} \\
Training windows & all non-overlapping length-10 windows (stride 10) \\
Queries per batch & 16 windows $\times$ 2 queries \\
Segment index & $k \sim \mathcal{U}\{1,\dots,9\}$ \\
Intra-segment offset & $t \sim \mathcal{U}(0,1)$, query time $\tau = k+t$ \\
Step size & $\Delta t \sim \mathcal{U}(0,1]$, clamped to $\tau + \Delta t \le 10$ \\
Tangent probability & $p_{\mathrm{tan}} = 0.2$, then $\Delta t = 0$ \\
\midrule
\multicolumn{2}{l}{\emph{Noise augmentation}} \\
Scales & 10 log-spaced values in $[0.01, 0.5]$ \\
Contraction rate & $\lambda = 10$ \\
\bottomrule
\end{tabular}
\end{table}

\paragraph{Training signal.}
Table~\ref{tab:kth_stage3} reports the details of how $(t,\Delta t)$ is sampled during training and specifics of the transverse noise augmentation. Training windows are all non-overlapping length-10 windows of every training video's latent trajectory, one epoch is one shuffled pass over all windows, and no data augmentation is used in Stages~2 and~3. The query time $\tau$ runs on the piecewise path through \emph{consecutive} frames, and the step is drawn globally, so it may cross segment boundaries and is clamped at the window end. Tangent targets are computed by forward-mode differentiation through the path construction, and the latents have unit per-channel variance, which fixes the scale of the tabulated noise values.

\paragraph{Optimization.}
Optimizer settings are those of Table~\ref{tab:kth_optim}, where weight decay applies only to weight tensors of rank $\ge 2$ of the field and leaves biases, normalization gains, and the encoders undecayed.

\subsubsection{Inference}
Given the observed frames, we draw a single latent sample $z$ from the prior with a fixed random seed and generate the full trajectory with one explicit Euler rollout of the secant field,
\[
x_{s+1} = x_s + h\, v_\theta(x_s, h, z),
\qquad
h = 0.25,
\]
which is $4\times$ temporal supersampling relative to the native frame rate. The rollout starts from the protocol's start frame, frame 1 for interpolation and frame 5 for extrapolation. Scored frames are read off at the corresponding integer times and decoded with the frozen autoencoder. For the qualitative figures (Figures~\ref{fig:kth_interp} and~\ref{fig:kth_extrap}) the identical procedure is run with $h = 0.5$, the common display grid on which the Vid-ODE row is queried (see below).

\subsubsection{Vid-ODE baseline}
We retrain Vid-ODE at $128{\times}128$ from the official implementation with its published defaults, separately for the interpolation and extrapolation protocols, on exactly the same data as ours. Those defaults are batch size 8, Adamax at learning rate $10^{-3}$ with 0.99 per-epoch decay, 500 epochs, adversarial weight $3{\times}10^{-3}$, \texttt{dopri5} solvers, and backward-time encoding. The retrained interpolation model closely reproduces the KTH metrics reported by \citet{park2021vidode}, at SSIM 0.912 against 0.911 published and LPIPS 0.049 against 0.048, and it exceeds the published PSNR by 1.3\,dB, 33.08 against 31.77 (cf.\ Table~\ref{tab:vidode}), which confirms a faithful reproduction. Evaluation uses its own tester and metric code unmodified, on the final end-of-training checkpoint.

For the qualitative figures, its continuous decoder is queried in a single recursive pass over its trained time span on the shared 0.5-frame display grid, with all its other settings untouched. This is $4\times$ its input frame rate for interpolation and $2\times$ for extrapolation, the densest continuous-sampling setting demonstrated in the Vid-ODE paper itself, whose appendix Fig.~18 generates 20\,FPS videos from 5\,FPS inputs.
\subsection{PDE Data}
\label{data:pde}

We use two synthetic PDE datasets to evaluate continuous-time generation on scientific spatiotemporal fields.
Both datasets consist of \(64\times64\) scalar fields with \(64\) stored time frames per trajectory, and all PDE experiments in the main paper use a set of \(128\) trajectories.
Training subsequences have length \(16\).
The simulation settings of both datasets are collected in Table~\ref{tab:pde_sim}.

\begin{table}[t]
\centering
\caption{Simulation settings of the two PDE datasets. Both simulators are periodic and store 64 frames per trajectory.}
\label{tab:pde_sim}
\resizebox{\linewidth}{!}{
\begin{tabular}{ll}
\toprule
Setting & Value \\
\midrule
\multicolumn{2}{l}{\emph{Gray--Scott reaction--diffusion}} \\
Domain and grid & periodic \(\Omega=[0,1]^2\), \(64\times64\) \\
Diffusion coefficients & \(D_a=2\cdot 10^{-5}\), \(D_b=10^{-5}\) \\
Feed and kill rate & \(F=0.018\), \(K=0.051\) \\
Integration & spectral half-steps for diffusion, fourth-order Runge--Kutta for the reaction \\
Internal time step & \(\Delta t=1.0\) \\
Internal steps per stored frame & 50 \\
Burn-in & 300 steps \\
Stored field & species \(a\) only \\
Training windows & length 16 \\
\midrule
\multicolumn{2}{l}{\emph{Navier--Stokes}} \\
Domain and simulation grid & periodic \(\Omega=[0,2\pi]^2\), \(96\times96\) \\
Observed field & centered \(64\times64\) crop of the vorticity \\
Viscosity & \(\nu=10^{-3}\), no linear drag \\
Forcing & sampled per trajectory, fixed in time \\
Internal time step & 0.01 \\
Final time & 32.0 \\
Burn-in & 1000 solver steps \\
Training windows & length 16 \\
\bottomrule
\end{tabular}
}
\end{table}

\paragraph{Gray--Scott Reaction--Diffusion.}
We simulate the two-species Gray--Scott reaction--diffusion system on the periodic unit square \(\Omega=[0,1]^2\),
\[
\partial_t a
=
D_a\Delta a
-
ab^2
+
F(1-a),
\]
\[
\partial_t b
=
D_b\Delta b
+
ab^2
-
(F+K)b,
\]
where \(a(t,x)\) and \(b(t,x)\) are concentration fields and the coefficients are those of Table~\ref{tab:pde_sim}.

Initial conditions are sampled from randomized localized perturbations.
Let \(c_j\in[0,1]^2\) denote randomly sampled cluster centers, with periodic distance
\[
d_{\mathrm{per}}(x,c_j)
=
\min_{m\in\mathbb Z^2}
\|x-c_j+m\|_2.
\]
For each cluster, we sample an amplitude \(A_j\), anisotropic widths \(\sigma_{j,1},\sigma_{j,2}\), and a random orientation.
Writing \(R_j\) for the corresponding rotation matrix and
\[
\Sigma_j
=
R_j
\begin{pmatrix}
\sigma_{j,1}^2 & 0\\
0 & \sigma_{j,2}^2
\end{pmatrix}
R_j^\top,
\]
the initial \(b\)-field is formed from a mixture of periodic Gaussian blobs,
\[
b_0(x)
=
\sum_{j=1}^{M}
A_j
\exp\!\left(
-\frac12
\delta_j(x)^\top
\Sigma_j^{-1}
\delta_j(x)
\right)
+
\xi_b(x),
\]
where \(\delta_j(x)\) is the shortest periodic displacement from \(c_j\) to \(x\), and \(\xi_b\) is small Gaussian pixel noise.
The \(a\)-field is initialized near one and anticorrelated with \(b_0\),
\[
a_0(x)
=
1-\rho\, b_0(x)+\xi_a(x),
\]
with small Gaussian noise \(\xi_a\).
Both fields are clipped to a bounded concentration range during simulation.
We store only the \(a\)-field, so each observed trajectory is
\[
X=(a(t_0),a(t_1),\ldots,a(t_{63})).
\]

For the distributional rollout experiment in Table~\ref{tab:gray_scott_dist}, we use the partial-observation structure of this dataset explicitly. The learned models observe only the initial \(A\)-field. To construct a simulator reference distribution conditioned on the same observation, we keep this initial \(A\)-field fixed, resample the hidden initial \(B\)-field from the same clustered Gaussian initial-condition generator, and integrate the full Gray--Scott system forward. This produces multiple plausible \(A\)-field futures for the same observed initial condition. We use \(32\) observed starts. For each start, the hidden field is resampled \(4\) times, which yields \(128\) reference futures, and our model draws \(4\) latent-conditioned rollouts per start, likewise yielding \(128\) samples, while the FNO baseline produces a single rollout per start.

The FNO baseline is trained directly on the normalized \(A\)-field sequences as a deterministic one-step predictor, with the following configuration.
\begin{itemize}
\item Fourier Neural Operator with \((16,16)\) Fourier modes, hidden width \(64\), four spectral layers, and a grid positional encoding, giving 2.41M parameters when complex spectral weights are counted once.
\item AdamW at learning rate \(10^{-3}\) and weight decay \(10^{-6}\), batch size \(32\), and a mean-squared-error loss.
\item Input of four consecutive \(A\)-frames stacked along the channel dimension, with the next \(A\)-frame as target.
\item At evaluation time, initialization with the first four ground-truth frames and autoregressive rollout with a sliding four-frame input window.
\end{itemize}

\paragraph{Navier--Stokes.}
We generate two-dimensional incompressible Navier--Stokes trajectories using \texttt{exponax}~\citep{koehler2024apebench} in vorticity form on the periodic domain \(\Omega=[0,2\pi]^2\),
\[
\partial_t \omega
+
u\cdot\nabla \omega
=
\nu \Delta \omega
+
f,
\]
where \(\omega(t,x)\) is scalar vorticity, \(u(t,x)\in\mathbb R^2\) is the incompressible velocity field, and
\[
\nabla\cdot u=0.
\]
The velocity is recovered from vorticity through the stream function \(\psi\),
\[
u
=
\nabla^\perp \psi
=
(\partial_y\psi,-\partial_x\psi),
\qquad
\Delta\psi=\omega.
\]
The viscosity and the remaining simulator settings are those of Table~\ref{tab:pde_sim}.
The observed data are obtained by cropping the centered \(64\times64\) region from the full vorticity solution, which makes the visible crop an open subsystem influenced by surrounding, unobserved flow.
For numerical stability, simulations are run on a refined grid and then mapped back to the base resolution before cropping.

Initial vorticity fields and forcing fields are sampled as Gaussian random fields in Fourier space.
For a scalar field \(g\), we sample complex Fourier coefficients
\[
\widehat g(k)
=
A\,
\frac{\zeta_k}{(\|k\|^2+\tau^2)^{\alpha/2}},
\qquad
k\in\mathbb Z^2,
\]
with Hermitian symmetry imposed so that \(g\) is real-valued.
Here \(\zeta_k\) are independent complex standard Gaussian coefficients up to the reality constraint, \(A\) controls the amplitude, \(\alpha\) controls spectral decay, and \(\tau\) sets a correlation scale.
The two fields use the parameters of Table~\ref{tab:pde_grf}, and the forcing is fixed over time within each trajectory.

\begin{table}[h]
\centering
\caption{Gaussian random field parameters of the Navier--Stokes initial vorticity and forcing.}
\label{tab:pde_grf}
\begin{tabular}{lcccc}
\toprule
Field & \(\alpha\) & \(\tau\) & \(A\) & Fourier modes \\
\midrule
Initial vorticity & 2.2 & 4.0 & 0.1 & unrestricted \\
Forcing & 1.8 & 2.5 & 2.0 & wavenumbers 3 to 14 \\
\bottomrule
\end{tabular}
\end{table}

For the robustness experiment in the main text, adaptive integration uses a Dormand--Prince solver over a sweep of relative tolerances \(\texttt{rtol}\in\{10^{-1},10^{-2},10^{-3},10^{-4},10^{-5}\}\) with \(\texttt{atol}=0.1\cdot\texttt{rtol}\), evaluated on a uniform 64-point output grid.
The number of function evaluations is not fixed manually but chosen automatically by the adaptive solver for each tolerance, so the reported NFE values reflect the solver's adaptive effort required to satisfy the specified error tolerances.

\subsection{Fine-Time Ground-Truth Reference}
\label{app:gs_fine}

The plausibility experiments of Section~\ref{sec:plausibility} score interpolation paths and trained fields against the true states of the system between the observed frames. We regenerate 8 fresh Gray--Scott trajectories with the paper's simulator and parameters (Appendix~\ref{data:pde}) on seeds unseen by any trained model, storing every 5th internal solver step instead of every 50th. The training grid is thereby an exact subset of the fine grid, at a fine spacing of 5 time units against \(\Delta_{\mathrm{phys}}=50\) per training segment, with every coarse frame coinciding with one fine frame.

The frozen prior and interpolator are evaluated with endpoints lifted at metric time \(r_m=0.9\), and all states are normalized with the interpolator's training statistics. For each of the 63 segments per trajectory and the interior offsets \(s=j/10\), \(j=1,\dots,9\), we score every path against the true fine state at the matched time by the relative \(L_2\) error, the cosine between the path velocity and the true velocity, and the spectral diagnostic of the main text. The last one is evaluated on denormalized fields. Writing \(\hat x(k)\) for the two-dimensional Fourier transform of a field \(x\) and \(\bar x\) for its spatial mean, we average the power spectrum over \(B=32\) rings \(K_b\) of equal width in the radial wavenumber \(\|k\|\),
\[
S_b(x)
=
\frac{1}{|K_b|}\sum_{k\in K_b}\bigl|\hat x(k)\bigr|^2 ,
\]
and compare the path state \(x\) with the true state \(y\) by
\[
\mathrm{spec}(x,y)
=
\bigl|\bar x-\bar y\bigr|
+
\frac{1}{B}\sum_{b=1}^{B}
\bigl|
\log S_b(x)-\log S_b(y)
\bigr| .
\]
The first term tracks the total concentration, and the second compares how much structure sits at each length scale. Values are averaged over segments and offsets and reported as mean \(\pm\) std over the 8 trajectories, with full results in Appendix~\ref{app:res_plausibility}.

\subsection{Wider Frame Spacing: Supervision Comparison}
\label{app:x1w}

\paragraph{Models.}
The generative fields are trained at the two wider segment strides \(w\in\{15,20\}\) fine steps, where \(w=10\) is the training grid and the main text reports \(w=20\). All three share one 3.99M-parameter UNet secant backbone without latent conditioning, the same normalization, computed from the sixteen observed frames, and the same inference procedure. Only the supervision differs. The score-target model regresses the tangent and secant targets of the frozen interpolator with the transverse correction of Appendix~\ref{app:full_training_alg} at \(\lambda=20\), and the linear-target model is identical except that its targets come from linear interpolation. The Neural ODE reads the same backbone as an autonomous field at \(\Delta t=0\) and is trained by integrating each segment with a differentiable ten-step RK4 and regressing the endpoint. All three use AdamW at learning rate \(10^{-4}\) and the same seed.

\paragraph{Baseline validity and evaluation.}
To rule out an underfitted baseline, we require that the Neural ODE fits its training endpoints at least as accurately as the best simulation-free model fits the evaluation endpoints. This holds with a margin of 13 to 21 times, so differences in intermediate quality cannot be attributed to a poorly trained baseline. The data are the 16-frame window of a held-out fine trajectory never seen by the frozen prior or interpolator, and the \(w-1\) fine states inside each of the 15 segments are evaluation-only. Evaluation is teacher-forced per segment, so each segment starts at the ground-truth left frame, integrates the \(\Delta t=0\) field with fixed-step RK4, and is scored with the metrics of Appendix~\ref{app:gs_fine} plus the endpoint error, averaged over the 15 segments.

\subsection{Fully Observed Gray--Scott and Physics Supervision}
\label{app:x2}

\paragraph{Setup.}
The physics experiments use a two-channel Gray--Scott variant that stores both species, which makes the governing PDE residual computable. The setup comprises the following components.
\begin{itemize}
\item 128 fresh training trajectories at the training stride.
\item The fine-reference trajectories of Appendix~\ref{app:gs_fine} with both channels.
\item Per-channel normalization from the training split.
\item A two-channel flow-matching prior.
\item A spectral implementation of the Gray--Scott right-hand side \(F_{\mathrm{GS}}\), with Laplacian and reaction terms matching the data simulator exactly.
\end{itemize}

\paragraph{Supervision variants.}
The interpolator parameterization and architecture are as in the other experiments, a 1.78M-parameter correction network on a lifted linear base with \(t(1-t)\) correction for the score variant. The three variants differ only in the supervision.
\begin{itemize}
\item \emph{V-score} trains on the metric energy only.
\item \emph{V-phys} applies the same correction parameterization directly in data space, without lifting, and trains the path to satisfy the governing equation instead of the metric energy. A segment time \(s\in[0,1]\) covers \(\Delta_{\mathrm{phys}}=50\) physical time units, so the physical velocity along the path is \(\Delta_{\mathrm{phys}}^{-1}\partial_s\gamma(s)\), obtained by a Jacobian-vector product through the correction network, and the loss is the mean squared PDE residual on denormalized states,
\[
\mathcal L_{\mathrm{phys}}
=
\mathbb E_s
\bigl\|
\Delta_{\mathrm{phys}}^{-1}\partial_s\gamma(s)-F_{\mathrm{GS}}(\gamma(s))
\bigr\|^2 ,
\]
where \(F_{\mathrm{GS}}\) collects the diffusion and reaction terms on the right-hand side of the two Gray--Scott equations of Appendix~\ref{data:pde}. The residual vanishes exactly when the path solves the PDE, so this variant asks for a dynamically consistent path rather than a geometrically short one.
\item \emph{V-lin} is the linear path and requires no training.
\end{itemize}
We evaluate as in Appendix~\ref{app:gs_fine} on all 8 trajectories and additionally report the residual RMS in physical units. The residual of the true fine trajectory itself, taken as central differences at the fine spacing, is \(1.79\cdot10^{-5}\) and is quoted as the attainable floor. The comparison of the three variants is reported in Appendix~\ref{app:res_plausibility}.

\paragraph{Physics refinement at inference.}
Per segment, the path is represented by 11 samples at \(s=j/10\) including the endpoints, and the 9 interior states are optimized in denormalized space with Adam at learning rate \(5\cdot10^{-3}\) on the mean squared PDE residual, with the path velocity taken as the central difference in \(s\) and the endpoints held fixed. The operation involves only FFTs and pointwise arithmetic, and no network is in the loop. Refinement costs 2 and 9 seconds per 63-segment trajectory on one GPU for \(K=500\) and \(K=2000\) descent steps. Full results for intermediate budgets are reported in Appendix~\ref{app:res_plausibility}.

\subsection{Molecular Dynamics}
\label{data:md}

\paragraph{Data.}
We use the MD17~\citep{chmiela2017machine} ethanol trajectory, loaded through the \texttt{torch\_geometric} interface in its original frame order. Observed frames are taken every \(100\) native MD steps. The \(99\) raw frames inside each segment are seen by no model and serve as the ground truth between the frames, as in the fine-time reference of Appendix~\ref{app:gs_fine}. The flow-matching prior and the interpolator are fit to the first \(90\%\) of the trajectory. The three generative fields are fit to a single four-frame window of it, while the predictability numbers at the end of this subsection are measured on the held-out \(10\%\). Each molecular state is represented by centered atomic coordinates
\[
x_k = (r_{k,1},\ldots,r_{k,N})\in\mathbb R^{N\times 3},
\qquad
\sum_{i=1}^N r_{k,i}=0,
\qquad
N=9,
\]
together with atom identities and connectivity that are fixed across the trajectory, so centering removes global translation from every frame.

\paragraph{Prior knowledge in the interpolator.}
The physics experiment of Appendix~\ref{app:x2} adds the governing equation to the interpolator. Molecular trajectories carry a different kind of prior knowledge, and we add it here in the same way.

First, we observe that the per-segment interpolation objective of Appendix~\ref{app:interp_training} leads to abrupt changes in the velocity at segment borders, i.e. at observed frames. We hypothesize that the MD setting amplifies this because of the severely underconstrained motion between endpoints.
A physical trajectory evolves smoothly, and on the raw trajectory the angle between the incoming and the outgoing velocity is \(8.9^\circ\), against \(107.2^\circ\) for the two linear segments that an endpoint-only path follows.

The second piece concerns the speed. In the image experiments only the metric speed carries meaning, because distances in pixel space do not, whereas atomic coordinates measure physical distance. Therefore, we can also ask the path to move at a roughly uniform Euclidean speed. For both, we will augment the interpolation objective similar to the experiment in Appendix~\ref{app:x2}.

\paragraph{The augmented interpolation objective.}
We keep the parameterization of Appendix~\ref{app:interp_training}. For the segment between two observed frames \(x_k\) and \(x_{k+1}\), we write \(\bar\gamma^{\omega,k}_t:=\bar\gamma^\omega_t(x_k,x_{k+1};\alpha)\) for the lifted path, where the correction network \(\varphi_\omega\) is shared by all segments. Its velocity is
\[
\dot{\bar\gamma}^{\omega,k}_t
=
\bar x_{k+1}-\bar x_k
+
\alpha(1-2t)\,\varphi_\omega(\bar x_k,\bar x_{k+1},t)
+
\underbrace{\alpha\,t(1-t)\,\partial_t\varphi_\omega(\bar x_k,\bar x_{k+1},t)}_{=\,0\ \text{at}\ t\in\{0,1\}} .
\]
Because the last term vanishes at both ends of a segment, the two one-sided velocities at the frame shared by segments \(k\) and \(k+1\) reduce to
\[
\dot{\bar\gamma}^{\omega,k}_1
=
\bar x_{k+1}-\bar x_k-\alpha\,\varphi_\omega(\bar x_k,\bar x_{k+1},1),
\qquad
\dot{\bar\gamma}^{\omega,k+1}_0
=
\bar x_{k+2}-\bar x_{k+1}+\alpha\,\varphi_\omega(\bar x_{k+1},\bar x_{k+2},0),
\]
which in general differ. To remove that kink, we introduce a junction penalty that drives the two one-sided velocities together at every shared frame,
\[
\mathcal L_{\mathrm{junc}}(\omega)
=
\mathbb E_k
\bigl\|
\dot{\bar\gamma}^{\omega,k}_1-\dot{\bar\gamma}^{\omega,k+1}_0
\bigr\|_2^2 .
\]

On its own, this penalty has a trivial minimizer. The cheapest way to make two velocities agree is to send both to zero, which yields a path that stalls at every observed frame and sprints in between. We therefore introduce a second penalty that keeps the path moving. Let
\[
\rho_k(t)
=
\frac{\bigl\|\dot{\bar\gamma}^{\omega,k}_t\bigr\|}{\|\bar x_{k+1}-\bar x_k\|}
\]
denote the instantaneous speed in units of the mean speed of the same segment. The speed penalty acts only below a floor \(\rho_\star\),
\[
\mathcal L_{\mathrm{speed}}(\omega)
=
\mathbb E_{k,\;t\sim\mathcal U[0,1]}
\bigl[
\max\bigl(0,\rho_\star-\rho_k(t)\bigr)^2
\bigr],
\]
so it never slows the path down and only lifts its stalled parts.

Two choices in this penalty matter. First, the reference \(\|\bar x_{k+1}-\bar x_k\|\) is fixed by the observed endpoints. The optimizer therefore cannot meet the floor by inflating \(\varphi_\omega\) until the lifted path leaves the region where the frozen score model is valid. Second, both norms are Euclidean coordinate norms rather than metric norms, which is the speed prior stated above. We evaluate them in lifted space, where the short denoising flow is close to an isometry, at roughly a tenth of the cost of the clean-space speed. The full objective is
\[
\mathcal L(\omega)
=
\mathcal L_{\mathrm{geo}}(\omega)
+
\lambda_j\,\mathcal L_{\mathrm{junc}}(\omega)
+
\lambda_s\,\mathcal L_{\mathrm{speed}}(\omega),
\]
where \(\mathcal L_{\mathrm{geo}}\) is the metric energy of Appendix~\ref{app:interp_training}. Setting \(\lambda_j=\lambda_s=0\) recovers that objective exactly. We use \(\lambda_j=1\), \(\lambda_s=12\) and \(\rho_\star=0.9\).

\paragraph{Exactness at the observed frames.}
The factor \(t(1-t)\) vanishes at both ends for every \(\omega\), so the path passes through all observed frames exactly, whatever the correction network does. The endpoint values \(\varphi_\omega(\cdot,0)\) and \(\varphi_\omega(\cdot,1)\) that appear in the two one-sided velocities are the degrees of freedom that this factor leaves unconstrained on path positions. Velocity continuity is therefore obtained without moving the path away from the observed frames.

Continuity also carries over from lifted to clean space. The two segments meeting at a shared frame denoise the identical lifted point, so their clean velocities are the same denoising Jacobian applied to the two lifted velocities above. Every internal coordinate \(q\) inherits the property through \(\dot q=\nabla q\cdot\dot\gamma\).

\begin{table}[t]
\centering
\caption{Training configuration of the augmented molecular interpolator. The correction network is the EGNN of Table~\ref{tab:main_hparams}, evaluated on the paired graph of the two lifted endpoints.}
\label{tab:md_interp}
\resizebox{\linewidth}{!}{
\begin{tabular}{ll}
\toprule
Setting & Value \\
\midrule
Training item & consecutive frame triplet, two adjacent segments sharing one frame \\
Correction network & \(735.5\)k-parameter EGNN \\
Lifting & \(10\) Euler probability-flow steps at step size \(0.01\), metric time \(r_m=0.9\) \\
Batch size & \(64\) triplets \\
Optimizer & AdamW at learning rate \(10^{-3}\) \\
Schedule & \(1\%\) linear warmup, cosine decay to \(2\%\) of the peak rate \\
Budget & \(60\,000\) steps \\
Gradient clip & \(1.0\) \\
Seed & \(0\) \\
Correction ramp & \(\alpha\) from \(0\) to \(1\) over the first \(20\%\) of steps \\
Speed penalty & switched on once the ramp has finished \\
Speed evaluation & four stratified interpolation times per segment \\
\bottomrule
\end{tabular}
}
\end{table}

\paragraph{Training the interpolator.}
Training items are consecutive frame triplets of the training split. Every item is therefore a pair of adjacent segments sharing one frame, so the junction penalty has a shared frame to act on at every step. The remaining settings are listed in Table~\ref{tab:md_interp}. The correction strength \(\alpha\) is ramped as in Appendix~\ref{app:interp_training}. We switch the speed penalty on only once that ramp has finished, so that junctions exist before they are smoothed. The tangent, the metric-vector product and the path speed are all obtained by Jacobian-vector products.

\paragraph{Effect of the added terms.}
We measure continuity on the velocity vector. At a shared frame we take the clean-space velocity at \(t=1\) of one segment and at \(t=0\) of the next, both by Jacobian-vector products through the denoising flow, and report the angle between them. Averaged over \(32\) shared frames from eight six-frame stretches at the deployed stride of \(100\), the interpolator trained without the added penalties leaves a kink of \(92.5^\circ\). This is close to the \(107.2^\circ\) of the linear segments it corrects. The augmented interpolator leaves \(28.0^\circ\), against \(8.9^\circ\) for the true trajectory. Over the same stretches the metric energy drops from \(2247\) to \(930\), so the continuity is not bought by leaving the score manifold.

\paragraph{The three generative fields.}
The molecular demonstration from the main text trains three unconditional velocity fields on one fixed trajectory of four consecutive observed frames. This is the first length-four window of the training split and covers raw frames \(0\) to \(300\) in three segments. All three fields share one \(1.46\)M-parameter EGNN with hidden width \(128\) and twelve message-passing layers, without latent conditioning, so the field is autonomous. They also share one optimizer, AdamW at learning rate \(2\cdot10^{-4}\) with batch \(256\) for \(8000\) steps at seed \(0\), and one inference procedure. Only the training signal differs. Since the coordinate updates are built from relative displacements, all three fields are translation- and rotation-equivariant.

The score-target field regresses the tangent and secant targets of the frozen augmented interpolator, drawing the tangent target with probability \(0.2\) and otherwise a secant step of up to two frame intervals. Its targets carry the transverse correction of Appendix~\ref{app:full_training_alg} at \(\lambda=1\), with ten log-spaced noise scales in \([0.01,0.1]\,\text{\AA}\). The linear-target field uses the identical pipeline with the linear path between observed frames as its target source.

The Neural ODE integrates each segment from its ground-truth left frame over one frame interval with a differentiable ten-step RK4 and regresses the endpoint only. We add the Jacobian-Frobenius regularization that is standard for this kind of training~\citep{finlay2020train} at weight \(10^{-4}\), estimated with one Hutchinson probe per solver stage.
Even with this stabilization technique, we faced instability during training, reaching the optimum early and degrading afterwards. Therefore, we report only the best-loss results for the Neural ODE.

The true trajectory carries small-scale thermal jitter that likely no unconditional path can reproduce without knowing the external forcing that produced that trajectory.
Closing this gap probably needs conditioning on the forces that produced the particular trajectory, which we leave to future work.

\subsection{Inference and Figure Details}
\label{app:inference_details}

Unless stated otherwise, all reported results use the raw weights of the trained generative vector fields and the corresponding frozen interpolation prior, and rollouts integrate the learned dynamics without any re-anchoring at observed states. KTH inference is described separately in Appendix~\ref{data:kth}.

\paragraph{2D example (Figures~\ref{fig:toy_example} and~\ref{fig:app_toy}).}
Panel~(a) of Figure~\ref{fig:toy_example} overlays samples of the pretrained score model, obtained by integrating its generative flow with 100 uniform steps from 2048 Gaussian noise samples, with dataset sequences and one ground-truth loop. The conditional rollouts in panels (b)--(d) integrate the learned vector field with a fixed-step solver using 800 uniform steps over the full loop duration \(T=8\), or 400 steps for Figure~\ref{fig:app_toy}, and draw 32 latent samples from the observation-conditioned prior per prefix. Panel~(e) shows the latent Gaussians after projection onto the two leading principal components of the latent covariance.

\paragraph{2D global fields (Figure~\ref{fig:toy_fields}).}
The conditioned panels evaluate the \(\Delta t=0\) field on a \(61\times61\) grid over \([-1.05,1.05]^2\) for the gray streamlines at density 1.4, and on a \(26\times26\) grid for the unit-arrow quiver. Each arrow is colored by the transverse inflow \(c(x)=\langle\hat v(x),-\hat e(x)\rangle\) toward the conditioned reference path \(\Gamma_z\) of the frozen interpolator, which is drawn from 1600 path samples, under a diverging colormap with symmetric limits at the 95th percentile of \(|c|\) and with arrows within 0.01 of the path masked. The path \(\Gamma_z\) is overlaid in black with the loop nodes in green. The conditioning latent is the posterior mean of \(q(z\mid x)\) under each model's own posterior encoder, and the latent-free panel uses the (score, corrected, no-latent) ablation model with the same loop as reference. The setting panel shows 2048 samples of the pretrained score model obtained by integrating its generative flow with 100 uniform steps, its score field at the interpolator's lift time as gray unit arrows on the same \(26\times26\) grid, all candidate conditioning nodes in black, the active loop in green, and the linear path between its nodes dashed.

\paragraph{Transverse contraction measurements (Table~\ref{tab:contraction}).}
For the 2D example, the pointwise eigenvalue map evaluates \(\lambda_\perp(x)=\hat n^\top\tfrac12(Dv+Dv^\top)\hat n\) with exact autograd Jacobians of the \(\Delta t=0\) field on a \(141\times141\) grid, masked to the tube \(\|x-\Gamma_z\|\le0.15\), which is about three times the largest training noise scale. Here \(\hat n\) is the \(90^\circ\)-rotated tangent of \(\Gamma_z\) at the nearest of its 1600 samples, and \(z\) is the posterior mean of the all-inner loop. Finite rates use 64 anchors placed uniformly along \(\Gamma_z\) and perturbed along the local path normal with amplitudes \(\{0.01,0.03,0.05\}\). Both the perturbed and the unperturbed anchor are integrated under RK4 with step 0.005 for \(t\in[0,0.5]\), and the rate is the least-squares slope of \(\log e(t)\) on \(t\in[0,0.2]\), where \(e(t)\) is the separation between the two rollouts, reported as the median over anchors and amplitudes.

The Navier--Stokes measurement applies the identical methodology to the frozen single-sequence checkpoints of Section~\ref{sec:robustness_exp}, using 32 anchor states along the frozen score-interpolation path, 8 Gaussian directions per anchor projected orthogonally to the local field direction, and amplitudes \(\{0.01,0.05,0.1\}\) in normalized units. This gives 768 perturbed rollouts per model, integrated with RK4 at step 0.01 over \(t\in[0,0.5]\). Comparing the two rollouts of the same field isolates the transverse response from its shared advection error, and we report the median of the fitted rates and the fraction of contracting perturbations. For the corrected model the median is the same to two decimals at each of the three amplitudes, at $-$0.90, so one number summarizes the measurement.

\paragraph{Rollout stability (Figures~\ref{fig:contracting} and~\ref{fig:app_robustness}).}
Figure~\ref{fig:contracting} uses three inference regimes.
\begin{itemize}
\item[(a)] Direct secant steps with \(h\in\{0.01,0.1,0.25,0.5\}\).
\item[(b)] Fixed-step RK4 integration with \(h\in\{0.01,0.1,0.25,0.5,1\}\).
\item[(c)] Adaptive Dormand--Prince integration over a sweep of tolerances, where the number of function evaluations is chosen by the solver as described in Appendix~\ref{data:pde}.
\end{itemize}
Reported values are LPIPS against the reference trajectory, averaged over the 64-frame rollout excluding the initial frame. The qualitative comparison in Figure~\ref{fig:app_robustness} uses the secant rollout with \(h=0.5\), and for display, frames are normalized per time step to the 1\%/99\% quantile range of the ground-truth frame.

\paragraph{Gray--Scott (Table~\ref{tab:gray_scott_dist} and Figure~\ref{fig:gray_scott_appendix}).}
Rollouts integrate the \(h=0\) slice of the learned field with fixed-step RK4 using 10 substeps per unit time up to \(T=256\), with latent variables drawn from the prior conditioned on the first observed frame. The sliced Wasserstein distances in Table~\ref{tab:gray_scott_dist} use 256 random unit-norm projections of the flattened, denormalized \(64\times64\) fields, computed independently at each horizon with a fixed seed. The true-against-true reference is a single fixed even/odd split of the 128 simulator futures into two sets of 64, and the reported value is the ratio of the model-against-simulator distance to this reference. For the deterministic FNO baseline, which contributes 32 rollouts, sorted projections are truncated to the smaller sample count before averaging. The qualitative figure shows two prior samples for one test start, with frames read off at \(t\in\{0,8,16,32,64,128,256\}\).

\paragraph{Navier--Stokes sub-frame dynamics (Figure~\ref{fig:ns_interp}).}
For a randomly drawn test sequence, both models are conditioned on the posterior latent of that sequence and integrated with fixed-step RK4 using 50 substeps per frame interval over \(t\in[0,2]\), which is two frame intervals. The enstrophy diagnostic is computed as \(E(t)=\tfrac12\,\overline{\omega^2}\), where \(\omega=\partial_x v-\partial_y u\) is evaluated by central differences under periodic shifts. Since the stored field is single-channel, the same stencil is applied to the stored field itself as a curl proxy. The right panel reports the absolute change \(|E(t_{i+1})-E(t_i)|\) between consecutive solver outputs, normalized by the mean absolute change of the linear-target model.

\paragraph{Molecular dynamics (Figure~\ref{fig:md_panels}).}
All three fields are rolled out freely from the first observed frame of the fitted four-frame window of Appendix~\ref{data:md}, with fixed-step RK4 at 100 substeps per segment. The substep count equals the frame stride, so every rollout step lands on a raw MD frame and every deviation is aligned to the native grid. The fields are queried at their zero-step slice, which is the continuous-time velocity, and no re-anchoring at observed frames takes place. Panels (a) and (b) draw the rolled-out molecule in three dimensions over the first segment for the score-target model and for the linear-target model, with the intermediate configurations faint and the first and the last configuration solid, and with the drawn bonds taken from the static molecular graph of Appendix~\ref{data:md}. Panels (c) and (d) plot the O--H bond length and the C--O--H bond angle of all three rollouts against rollout time, which is counted in observed frame intervals so that the integer ticks and the light vertical lines mark the observed frames, together with the true trajectory at native resolution as a dashed black curve. The atom roles that define these two coordinates are identified once on the first frame and reused for every curve, so the same three atoms are tracked throughout.

\section{Additional Results}
\label{app:additional_results}

\subsection{Demonstrative Example}
\label{app:res_toy}

Figure~\ref{fig:toy_example} shows the setting of the 2D dataset together with conditional rollouts for increasing observed context and the latent distributions.

\begin{figure}[H]
  \centering
  \newcommand{\panelw}{0.19\textwidth}

  \begin{minipage}[t]{\panelw}\centering
    \includegraphics[width=\linewidth]{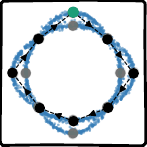}\\
    \small (a)
  \end{minipage}
  \hfill
  \begin{minipage}[t]{\panelw}\centering
    \includegraphics[width=\linewidth]{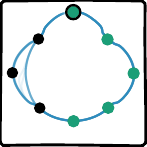}\\
    \small (b)
  \end{minipage}
  \hfill
  \begin{minipage}[t]{\panelw}\centering
    \includegraphics[width=\linewidth]{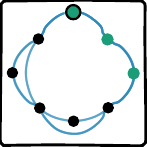}\\
    \small (c)
  \end{minipage}
  \hfill
  \begin{minipage}[t]{\panelw}\centering
    \includegraphics[width=\linewidth]{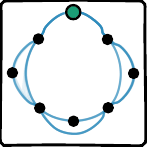}\\
    \small (d)
  \end{minipage}
  \hfill
  \begin{minipage}[t]{\panelw}\centering
    \includegraphics[width=\linewidth]{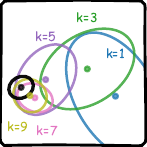}\\
    \small (e)
  \end{minipage}
  \caption{2D example. (a) Learned manifold and one example trajectory. (b)--(d) Conditional trajectory samples for increasing observed context (green). (e) PCA projection of the latent distributions.}
  \label{fig:toy_example}
\end{figure}

Figure~\ref{fig:app_toy} shows additional conditional samples. The observed prefix is shown in green, while the sampled continuations illustrate how the model uses the latent variable to represent unresolved branch choices. As the amount of conditioning increases, the samples preserve the observed arch decisions while maintaining diversity over the unobserved parts of the loop.

\begin{figure}[H]
    \centering
    \includegraphics[width=1.0\linewidth]{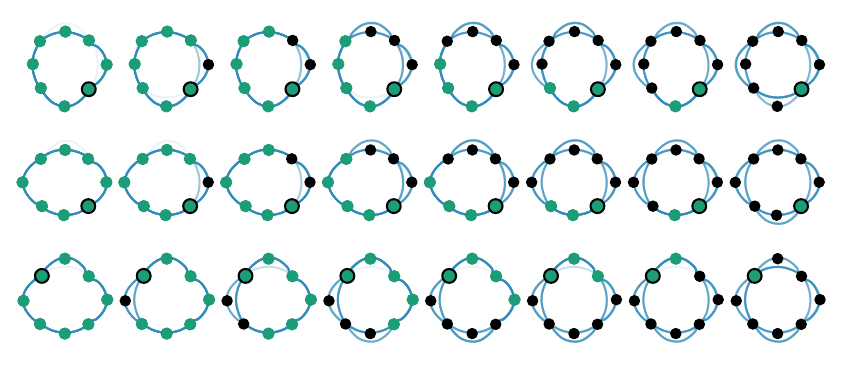}
    \caption{Additional conditional samples on the 2D dataset. The model follows the curved score-induced manifold and preserves stochasticity over future branch choices that are not fixed by the observed context.}
    \label{fig:app_toy}
\end{figure}

Supplementing Figure~\ref{fig:toy_fields}(b), Figure~\ref{fig:app_toy_field2} shows another field for a different latent variable, resulting in another branch configuration.
Switching \(z\) thus switches the entire field, so each latent corresponds to a single trajectory and there is no competing path toward which the transverse correction could pull off-path states.

\begin{figure}[H]
    \centering
    \includegraphics[width=0.3\linewidth]{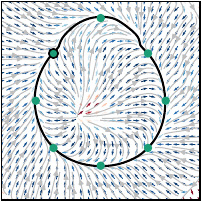}
    \caption{The learned field \(v_\theta(x\mid z)\) conditioned on a second branch configuration (rendering as in Figure~\ref{fig:toy_fields}).}
    \label{fig:app_toy_field2}
\end{figure}

\begin{table}[H]
\centering
\caption{Ablations on the 2D example. Each row removes one component (``\(-\) correction'' keeps the input noise but drops the target correction). Branch acc.\ has chance level \(\approx 1/16\), the training rate is \(-\lambda=-10\), and the metrics are defined in Appendix~\ref{data:toy}.}
\label{tab:toy_ablation}
\small
\begin{tabular}{l ccc}
  \toprule
  Model & off-manifold $\downarrow$ & branch acc. $\uparrow$ & rate $\downarrow$ \\
  \midrule
  ours (score, corrected, latent) & \textbf{0.007} & \textbf{1.000} & \textbf{$-$5.4} \\
  $-$ latent                      & 0.009 & 0.047 & $-$4.0 \\
  $-$ correction                  & 0.024 & \textbf{1.000} & $+$0.0 \\
  $-$ score $\to$ linear          & 0.036 & \textbf{1.000} & $-$3.6 \\
  \bottomrule
\end{tabular}
\end{table}

\subsection{Robustness}
\label{app:res_contracting}

Figure~\ref{fig:contracting} reports the rollout accuracy of the reconstruction experiment of Section~\ref{sec:robustness_exp} under the three inference regimes. The corrected objective attains consistently lower error across solver settings and remains stable at coarse step sizes.

\begin{figure}[H]
  \centering
  \begin{minipage}[t]{.32\textwidth}\centering
    \includegraphics[width=\linewidth]{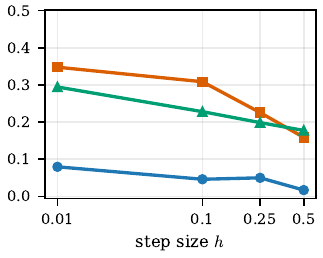}\\
    \small (a) Secant steps with \(h>0\)
  \end{minipage}
  \hfill
  \begin{minipage}[t]{.32\textwidth}\centering
    \includegraphics[width=\linewidth]{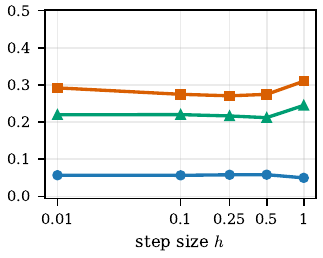}\\
    \small (b) fixed-step ODE solve
  \end{minipage}
  \hfill
  \begin{minipage}[t]{.32\textwidth}\centering
    \includegraphics[width=\linewidth]{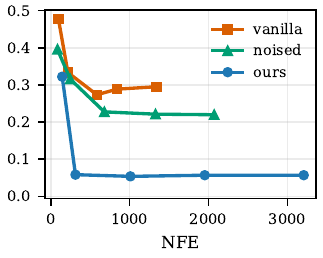}\\
    \small (c) adaptive ODE solve
  \end{minipage}
  \caption{Mean LPIPS over rollout for the reconstruction experiment under three inference regimes.}
  \label{fig:contracting}
\end{figure}

Figure~\ref{fig:app_robustness} provides a qualitative view. Without transverse correction, small deviations from the reference path can compound during rollout. The robustified objective instead trains the vector field to return perturbed states toward the interpolation path, which leads to more stable long-horizon reconstructions.
This demonstrates that the objective which promotes path-relative transverse contraction does not only improve rollouts, but is necessary to prevent catastrophic error accumulation.

\begin{figure}[H]
    \centering
    \includegraphics[width=1.0\linewidth]{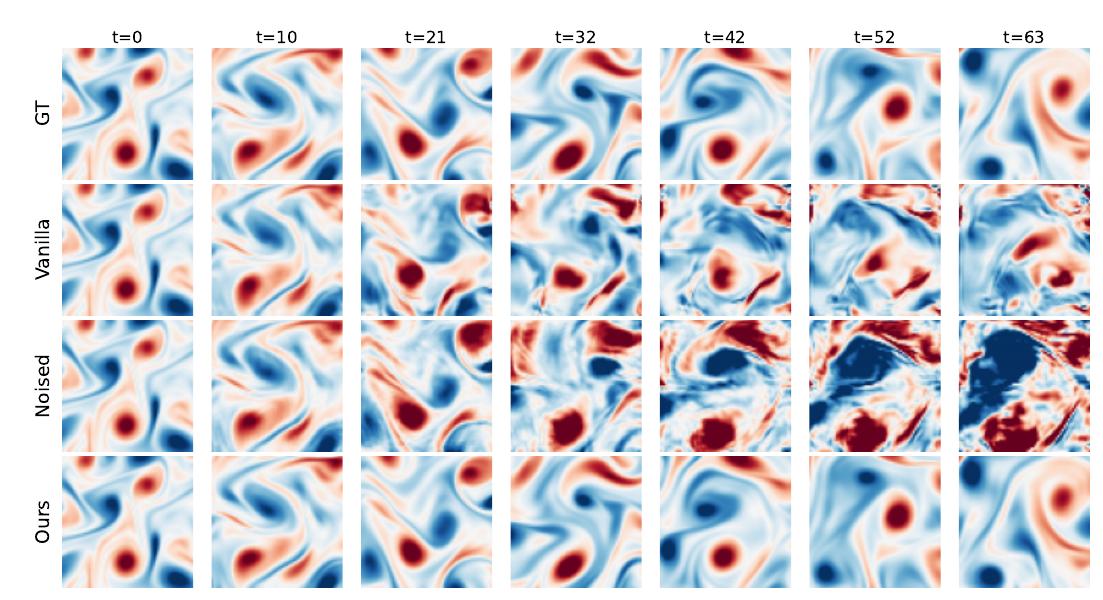}
    \caption{Qualitative robustness comparison for Navier--Stokes reconstruction under coarse secant rollout with step size \(h=0.5\). The transverse robustness objective reduces accumulated drift by training perturbed states to return toward the interpolation path.}
    \label{fig:app_robustness}
\end{figure}

\subsection{Plausibility of Intermediates}
\label{app:res_plausibility}

Tables~\ref{tab:app_x1_full}--\ref{tab:app_x2_refine} report the full results for the experiments of Section~\ref{sec:plausibility}, with data, training, and evaluation details given in Appendices~\ref{app:gs_fine}--\ref{app:x2}.

Table~\ref{tab:app_x1_full} adds the velocity cosine to the two metrics of Table~\ref{tab:gs_interp_fine}. The two paths separate cleanly at every interior offset, and the score path is not only closer to the true intermediates, it also follows the direction of the true motion more faithfully, at a cosine of 0.996 against 0.965 for the linear path.

\begin{table}[H]
\centering
\caption{Frozen interpolation paths against the true fine-time intermediates (mean \(\pm\) std over 8 trajectories).}
\label{tab:app_x1_full}
\small
\begin{tabular}{l ccc}
  \toprule
  Path & rel-$L_2$ $\downarrow$ & cos-vel $\uparrow$ & spectral $\downarrow$ \\
  \midrule
  linear & 0.055 $\pm$ 0.001 & 0.965 $\pm$ 0.000 & 0.471 $\pm$ 0.004 \\
  score  & \textbf{0.019 $\pm$ 0.001} & \textbf{0.996 $\pm$ 0.000} & \textbf{0.134 $\pm$ 0.017} \\
  \bottomrule
\end{tabular}
\end{table}

The training grid corresponds to a segment stride of \(w=10\) fine steps, and the generative fields are retrained at the two wider spacings \(w=15\) and \(w=20\), where the endpoints leave the interior increasingly underdetermined. The main text reports \(w=20\), twice the training spacing, and Table~\ref{tab:app_x1w} reports both spacings with all metrics and the endpoint error, together with the frozen interpolator that supplies the score targets.

The ordering is stable across the two spacings. The Neural ODE has the lowest tracking error and the highest velocity cosine everywhere. Its intermediates are nevertheless the least plausible under the spectral diagnostic at \(w=20\), and its spectral value degrades faster with spacing than that of either simulation-free model, from 0.303 to 0.482, which is how it moves from the middle of the field at \(w=15\) to last place at \(w=20\). The score-supervised field instead stays with its frozen teacher on every metric, at \(w=20\) reaching rel-\(L_2\) 0.115 against the teacher's 0.116 and a spectral value of 0.335 against 0.331. What the field inherits is therefore the plausibility of the paths it was trained on, and training the same backbone on linear targets inherits their implausibility in the same way.

\begin{table}[H]
\centering
\caption{Generative fields at the two wider frame spacings, with the frozen teacher for reference. The spectral diagnostic is defined in Appendix~\ref{app:gs_fine}.}
\label{tab:app_x1w}
\small
\begin{tabular}{ll cccc}
  \toprule
  & Model & rel-$L_2$ $\downarrow$ & cos-vel $\uparrow$ & spectral $\downarrow$ & endpoint $\downarrow$ \\
  \midrule
  \multirow{4}{*}{$w=15$}
  & score interpolator (teacher) & 0.055 & 0.983 & 0.198 & --- \\
  & score targets & 0.053 & 0.984 & \textbf{0.230} & 0.037 \\
  & linear targets & 0.106 & 0.925 & 0.442 & 0.052 \\
  & Neural ODE & \textbf{0.030} & \textbf{0.996} & 0.303 & \textbf{0.003} \\
  \midrule
  \multirow{4}{*}{$w=20$}
  & score interpolator (teacher) & 0.116 & 0.954 & 0.331 & --- \\
  & score targets & 0.115 & 0.954 & \textbf{0.335} & 0.059 \\
  & linear targets & 0.177 & 0.886 & 0.439 & 0.064 \\
  & Neural ODE & \textbf{0.060} & \textbf{0.984} & 0.482 & \textbf{0.003} \\
  \bottomrule
\end{tabular}
\end{table}

Exact recovery of the intermediate dynamics cannot be guaranteed from the score prior alone, so we ask what happens when the governing equations are available and can be supplied to the interpolator instead. This uses a fully observed variant of Gray--Scott that stores both species, which makes the PDE residual computable. Table~\ref{tab:app_x2_variants} compares the three supervision variants. Score geometry alone improves 2.7\(\times\) over linear interpolation without any knowledge of the equations, and training the same interpolator to minimize the PDE residual instead brings the path to within 3\(\times\) of the ground-truth residual floor. The same training objective therefore admits the score prior when nothing else is known and the governing equations when they are available.

\begin{table}[H]
\centering
\caption{Interpolation with score geometry against known dynamics on the fully observed two-species variant (mean \(\pm\) std over 8 trajectories, ground-truth residual floor \(1.79\cdot10^{-5}\)).}
\label{tab:app_x2_variants}
\small
\begin{tabular}{l ccc}
  \toprule
  Variant & rel-$L_2$ $\downarrow$ & cos-vel $\uparrow$ & residual $\downarrow$ \\
  \midrule
  V-lin (no training) & 0.0768 $\pm$ 0.0016 & 0.943 & $7.9\cdot10^{-4}$ \\
  V-score (geometry only) & 0.0283 $\pm$ 0.0007 & 0.992 & $3.5\cdot10^{-4}$ \\
  V-phys (residual supervision) & \textbf{0.0035 $\pm$ 0.0001} & \textbf{0.9998} & {\boldmath$5.4\cdot10^{-5}$} \\
  \bottomrule
\end{tabular}
\end{table}

The residual can also be reduced after training, by descending on it directly at inference time without any network in the loop. Table~\ref{tab:app_x2_refine} reports this for an increasing number of optimization steps \(K\). The score path starts at a lower residual and also refines further, reaching \(5.6\cdot10^{-5}\) at \(K=2000\), which is within \(3.1\times\) of the true trajectory's residual, while the linear path stops at \(1.6\cdot10^{-4}\), or \(8.9\times\). Both paths improve in relative \(L_2\) as well, but the linear path stays between \(2.6\) and \(2.8\) times further from the true states than the score path at every budget, so refinement removes part of the residual that the linear path introduces without recovering the geometry it missed. The \(K=0\) rows coincide with the unrefined paths by construction, and the small residual differences to Table~\ref{tab:app_x2_variants} stem from the discretized path representation of the refinement evaluation.

\begin{table}[H]
\centering
\caption{Physics refinement of interpolation paths on the fully observed two-species variant for increasing refinement budgets \(K\) (mean \(\pm\) std over 8 trajectories where shown).}
\label{tab:app_x2_refine}
\small
\begin{tabular}{lr ccc}
  \toprule
  Init.\ path & $K$ & rel-$L_2$ $\downarrow$ & cos-vel $\uparrow$ & residual $\downarrow$ \\
  \midrule
  linear & 0    & 0.0768 $\pm$ 0.0016 & 0.943 & $7.9\cdot10^{-4}$ \\
  linear & 20   & 0.0771 $\pm$ 0.0016 & 0.944 & $6.4\cdot10^{-4}$ \\
  linear & 100  & 0.0764 $\pm$ 0.0016 & 0.950 & $4.6\cdot10^{-4}$ \\
  linear & 500  & 0.0708 $\pm$ 0.0014 & 0.965 & $2.9\cdot10^{-4}$ \\
  linear & 2000 & 0.0588 $\pm$ 0.0012 & 0.982 & $1.6\cdot10^{-4}$ \\
  \midrule
  score  & 0    & 0.0283 $\pm$ 0.0007 & 0.992 & $3.4\cdot10^{-4}$ \\
  score  & 20   & 0.0278 $\pm$ 0.0006 & 0.993 & $2.6\cdot10^{-4}$ \\
  score  & 100  & 0.0269 $\pm$ 0.0006 & 0.994 & $1.8\cdot10^{-4}$ \\
  score  & 500  & 0.0252 $\pm$ 0.0006 & 0.995 & $1.0\cdot10^{-4}$ \\
  score  & 2000 & \textbf{0.0222 $\pm$ 0.0006} & \textbf{0.997} & {\boldmath$5.6\cdot10^{-5}$} \\
  \bottomrule
\end{tabular}
\end{table}

Figure~\ref{fig:ns_interp} repeats the comparison qualitatively on a second system, Navier--Stokes, using two otherwise identical generative models trained with score-induced and with linear interpolation targets. Since the endpoints are nearby frames of a smooth trajectory, the enstrophy of plausible intermediate states should vary smoothly. Both models capture the coarse transition, but the linear model produces a visibly less regular sub-frame evolution with sharp changes in enstrophy, while the score-supervised model varies smoothly under this diagnostic (setup in Appendix~\ref{app:inference_details}).

\begin{figure}[H]
  \centering
  \begin{minipage}[t]{0.65\linewidth}
    \centering
    \includegraphics[width=\linewidth]{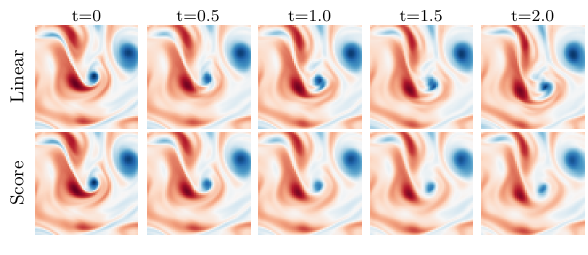}
  \end{minipage}\hfill
  \begin{minipage}[t]{0.3\linewidth}
    \centering
    \includegraphics[width=\linewidth]{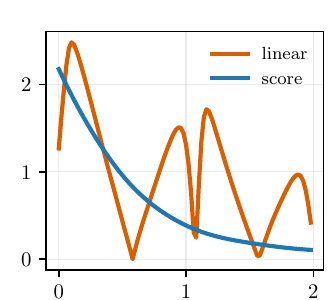}
  \end{minipage}
  \caption{Navier--Stokes sub-frame dynamics for score-induced and linear interpolation targets. Left: generated intermediate vorticity fields. Right: change in enstrophy along the interpolation.}
  \label{fig:ns_interp}
\end{figure}

The molecular-dynamics comparison of Section~\ref{sec:plausibility}, where the Neural ODE fails on both counts at once, is documented with its diagnostics in Appendix~\ref{data:md}.

\subsection{Natural Videos: Interpolation}
\label{app:res_kth}

Figure~\ref{fig:kth_interp} complements the extrapolation comparison of Figure~\ref{fig:kth_extrap} with the interpolation setting of Section~\ref{sec:temporal_sr} (observe every second frame and predict the held-out ones, quantitative results in Table~\ref{tab:vidode}). All model outputs are shown on the shared half-frame display grid, generated in a single rollout as described in Appendix~\ref{data:kth}. At the unsupervised half-times, Vid-ODE's recursively warped outputs exhibit double-exposure ghosting on the moving limbs, the failure mode that its frame metrics at observed timestamps do not expose, while our rollout remains closer to the true trajectory.

\begin{figure}[H]
    \centering
    \includegraphics[width=\linewidth]{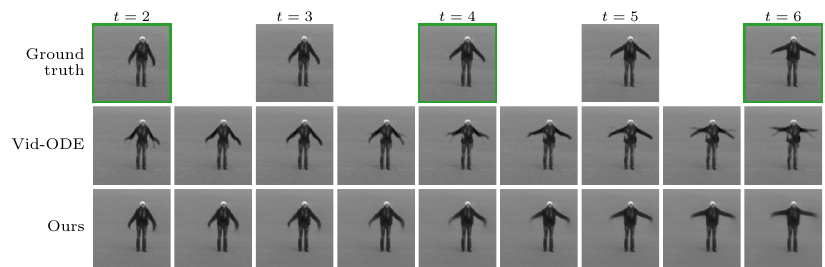}
    \caption{Interpolation on KTH. Rows: ground truth at integer times (observed frames framed in green; half-times have no ground truth), Vid-ODE, and ours, with all model outputs on the shared half-frame display grid. Vid-ODE drifts off the data manifold at intermediate times (ghosted arms).}
    \label{fig:kth_interp}
\end{figure}

\subsection{PDE-governed Spatiotemporal Fields}
\label{app:res_pde}

Figure~\ref{fig:gray_scott_appendix} shows qualitative long-horizon Gray--Scott rollouts from the latent-conditioned model. The model is conditioned only on the first observed concentration field and samples different latent variables from the learned prior. Although training uses subsequences of length \(16\) and the available training trajectories have length at most \(64\), the generated rollouts are shown up to \(T=256\). The samples remain visually plausible while exhibiting distinct futures, reflecting ambiguity induced by the unobserved chemical species.

\begin{figure}[h]
    \centering
    \includegraphics[width=1.0\linewidth]{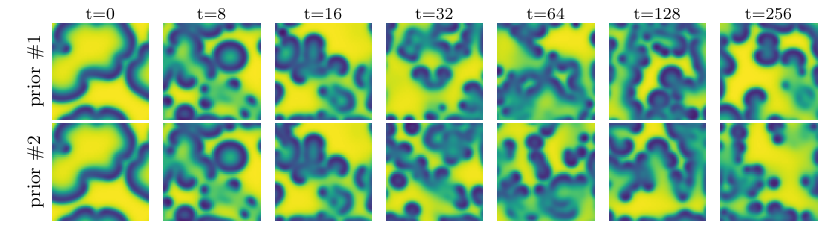}
    \caption{Gray--Scott long-horizon rollouts from the latent-conditioned model. The model is conditioned only on the first observed concentration field and samples different latent variables, producing distinct plausible futures up to \(T=256\).}
    \label{fig:gray_scott_appendix}
\end{figure}

\end{document}